\newtheorem*{remark}{Remark}
\newtheorem{theorem}{Theorem}[section]
\newtheorem{definition}{Definition}[section]
\newtheorem{lemma}{Lemma}[section]
\newtheorem{assumption}{Assumption}
\gdef\@copyrightpermission{
  \begin{minipage}{0.2\columnwidth}
   \href{https://creativecommons.org/licenses/by/4.0/}{\includegraphics[width=0.90\textwidth]{by.pdf}}
  \end{minipage}\hfill
  \begin{minipage}{0.8\columnwidth}
   \href{https://creativecommons.org/licenses/by/4.0/}{This work is licensed under a Creative Commons Attribution International 4.0 License.}
  \end{minipage}
  \vspace{5pt}
}
\title[FedRLHF]{FedRLHF: A Convergence-Guaranteed Federated Framework for
Privacy-Preserving and Personalized RLHF}
\author{Flint Xiaofeng Fan}
\affiliation{
  \institution{National University of Singapore}
  \country{Singapore}}
\email{fxf@u.nus.edu}
\author{Cheston Tan}
\affiliation{
  \institution{CFAR, A*STAR}
  \country{Singapore}}
\email{cheston-tan@i2r.a-star.edu.sg}
\author{Yew-Soon Ong}
\affiliation{
  \institution{CFAR, A*STAR}
  \country{Singapore}}
\email{asysong@ntu.edu.sg}
\author{Roger Wattenhofer}
\affiliation{
  \institution{ETH Zurich}
  \city{Zurich}
  \country{Switzerland}}
\email{wattenhofer@ethz.ch}
\author{Wei-Tsang Ooi}
\affiliation{
  \institution{National University of Singapore}
  \country{Singapore}}
\email{ooiwt@comp.nus.edu.sg}
\begin{abstract}
In the era of increasing privacy concerns and demand for personalized experiences, traditional Reinforcement Learning with Human Feedback (RLHF) frameworks face significant challenges due to their reliance on centralized data. We introduce Federated Reinforcement Learning with Human Feedback (FedRLHF), 
a novel framework that decentralizes the RLHF process.
FedRLHF enables collaborative policy learning across multiple clients, such as Large Language Models (LLMs) finetuning, without sharing raw data or human feedback, thereby ensuring robust privacy preservation.
Leveraging federated reinforcement learning, each client integrates human feedback locally into reward functions and updates their policies through personalized RLHF processes. 
We establish rigorous theoretical foundations for FedRLHF, providing convergence guarantees, and deriving sample complexity bounds that scale efficiently with the number of clients. 
Empirical evaluations on the MovieLens and IMDb datasets demonstrate that FedRLHF preserves user privacy, achieves performance on par with centralized RLHF, and enhances personalization across diverse client environments. 
\end{abstract}
\keywords{Federated RL; RLHF; LLMs; Personalization; Privacy-preserving AI}
\newcommand{\BibTeX}{\rm B\kern-.05em{\sc i\kern-.025em b}\kern-.08em\TeX}
\begin{document}


\pagestyle{fancy}
\fancyhead{}


\maketitle 


\section{Introduction}
Reinforcement Learning with Human Feedback (RLHF) has emerged as a powerful paradigm for training intelligent agents that align closely with human values and preferences~\cite{christiano2017deep, ouyang2022training}. By integrating human feedback into the reinforcement learning loop, RLHF has enabled significant advancements in natural language processing, robotics, and personalized recommendation systems~\cite{stiennon2020learning,zou2019reinforcement,corecco2024llm}. A prominent example is ChatGPT~\cite{openai2023chatgpt}, where RLHF has been instrumental in fine-tuning large language models (LLMs) to generate more coherent, contextually appropriate, and user-aligned responses~\cite{ziegler2019fine}.
\begin{figure}[t]
    \centering
    \includegraphics[width=0.92\linewidth]{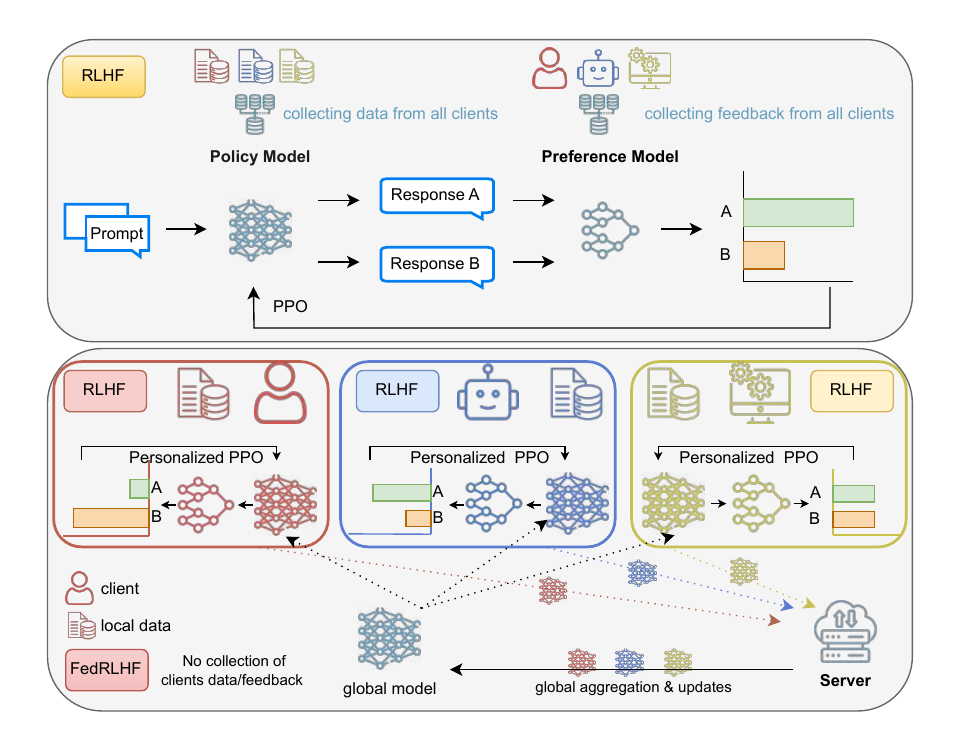}
\caption{
Comparison of the FedRLHF framework to conventional RLHF methods. Top: Conventional RLHF requires centralized collection of user data and feedback to train the policy model and preference (reward) model. Bottom: In FedRLHF, clients maintain local policy models trained on-device using RLHF with local data and preference models. Only policy model updates are shared with a central server, which aggregates them to refine a global policy.
}
\Description{Comparison of the FedRLHF framework to conventional RLHF methods.}
    \label{fig:problem_setting}
\end{figure}

Despite these successes,
the practice of aggregating data and feedback from multiple users 
in centralized RLHF systems poses significant \emph{privacy} risks, especially in domains involving sensitive information such as healthcare or finance~\cite{shokri2015privacy}. 
For instance, a personalized healthcare assistant using RLHF requires centralizing patient data and feedback, potentially violating regulations like Health Insurance Portability and Accountability Act (HIPAA)~\cite{hipaa1996act} and General Data Protection Regulation (GDPR)~\cite{GDPR2016},
while exposing users to risks of data breaches and identity theft.
%
Moreover, different organizations or individuals may be reluctant to share their feedback data due to intellectual property concerns or competitive advantages~\cite{li2020federated}.

In addition to privacy risks, centralization creates substantial hurdles for achieving \emph{personalization} in RLHF systems. Users exhibit diverse preferences and behaviors, making it challenging for a centralized policy to cater effectively to all individuals~\cite{christiano2017deep}. Balancing global performance with personalization becomes non-trivial, as optimizing for the average user can lead to suboptimal experiences for specific individuals~\cite{smith2017federated}. Returning to our healthcare assistant example, patients may have unique health conditions and treatment preferences. A one-size-fits-all model, trained on centralized data, may fail to provide the personalized recommendations necessary for optimal care, potentially impacting patient outcomes negatively.

To address these limitations, we propose \textbf{Federated Reinforcement Learning with Human Feedback (FedRLHF)}, a novel federated framework that uniquely integrates RLHF principles with federated reinforcement learning~\cite{my-pub:fedpg-br,my-pub:DAI2024257} to simultaneously address privacy and personalization challenges. 
As illustrated in Figure~\ref{fig:problem_setting},
FedRLHF decentralizes the RLHF process so that each client updates a local policy using only its own data and private human feedback. By exchanging only model updates rather than raw data, FedRLHF preserves privacy and adheres more closely to data governance regulations. 
Moreover, each client can personalize its learning process by shaping the local reward function with individual human feedback, ensuring that policies align with user preferences. 
This decentralized approach supports collaboration across clients while maintaining privacy and personalization.


The FedRLHF framework introduces unique technical challenges, notably ensuring \emph{convergence guarantees} in a decentralized environment
and managing the \emph{trade-off between global performance and personalization}. In federated reinforcement learning, the variability in clients' environments and behaviors can lead to instability and divergence in the learning process.
Our FedRLHF framework addresses these challenges in a principled manner. We provide a comprehensive theoretical analysis establishing convergence guarantees and deriving bounds on the sample complexity under standard assumptions. 
%
Furthermore, we introduce a quantitative measure of personalization and a {personalization parameter} in the reward shaping function to analyze and control the trade-off between 
global policy alignment
and
personalized adaptation.

Our contributions are as follows:
\begin{itemize} 
    \item We introduce FedRLHF, a framework that integrates federated reinforcement learning with human feedback, enabling privacy-preserving and personalized policy learning and model fine-tuning (Section~\ref{sec:problem_formulation}).
    \item We provide convergence guarantees and derive sample complexity bounds that account for the integration of human feedback, extending existing FedRL theory (Section~\ref{sec:convergence-results}).
    \item We develop a quantitative measure of personalization to analyze the trade-off between maximizing global performance and adapting individual client policies (Section~\ref{sec:personalization_analysis}).
    \item We empirically demonstrate FedRLHF's effectiveness on the MovieLens and IMDb datasets, showcasing its ability to preserve privacy, match centralized RLHF performance, and enhance personalization (Section~\ref{sec:experimental_results}).
    \item A full version of this work, including additional proofs and experimental details, is available at \cite{fan2024fedrlhf}, and our code is publicly available at \href{https://github.com/flint-xf-fan/Federated-RLHF}{github.com/flint-xf-fan/Federated-RLHF}.
\end{itemize}

\section{Background \& Related Work}\label{sec:related_work}
\textbf{Reinforcement Learning with Human Feedback} (RLHF) has become instrumental in aligning machine learning models with human values and preferences \cite{christiano2017deep, ouyang2022training,ziegler2019fine,stiennon2020learning}. By integrating human-generated feedback into the reward structure, RLHF facilitates the training of policies that exhibit behaviors more aligned with human preferences. 
However, traditional RLHF frameworks predominantly rely on centralized aggregation of data and feedback, which introduces significant privacy concerns.
In addition, 
the centralization nature of RLHF also limits personalization, as it typically involves training a single reward model for all clients. 
This approach fails to accommodate the heterogeneous preferences of individual clients, leading to suboptimal policy performance in diverse environments, as illustrated in Figure~\ref{fig:problem_setting}.

While previous studies~\cite{park2024principled} have explored personalized reward models in RLHF through representation learning, they still depend on the centralized aggregation of user data and feedback, failing to address the privacy concerns inherent in centralized systems. 
Additionally, methods like Group Robust Preference Optimization (GRPO)~\cite{ramesh2024group} aim to reduce performance disparities across user groups but similarly rely on aggregated, centralized datasets and do not provide mechanisms for personalization at the individual level. 
A recent effort by Li et al. \cite{li2024personalized} introduces a framework for personalized language modeling from personalized human feedback, which learns user-specific embeddings to capture individual preferences. However, their approach still relies on centralized data collection and does not address privacy concerns. 
Moreover, these prior approaches do not simultaneously tackle both privacy preservation and personalization challenges in real-world scenarios where user data is distributed across multiple devices or organizations.

%
%
\textbf{Federated Reinforcement Learning} (FedRL) has garnered significant attention in recent years, aiming to leverage the principles of federated learning~\cite{mcmahan2017communication} across diverse RL clients to enhance their sample efficiency without sharing raw data or trajectories of the sequential decision-making process. 
This approach has shown promise in diverse applications, from optimizing autonomous vehicles and enhancing edge caching in IoT networks to smart management of building facibilities~\citep[etc.]{my-pub:dai2023federated,wang2020federated-FedRL-5,FedRL-building,liu2019FedRL-for-robots,yu2020FedRLfor5G,my-pub:caesar}.
Recent theoretical advancements have solidified the foundations of FedRL. Notably, convergence guarantees and sample complexity bounds have been established, demonstrating speedup with increasing numbers of participating agents \cite{my-pub:fedpg-br}. In addition, the application of Markovian sampling techniques has been shown to achieve linear convergence speedup in FedRL settings \cite{frl-linear}. Furthermore, recent analysis of decentralized FedRL with Byzantine fault tolerance has proven fast convergence rates without relying on a central server, marking a significant step towards fully distributed and resilient RL systems \cite{my-pub:decbyzpg,10621347}.
Recent works have explored the benefits of heterogeneity among Q-learning agents~\cite{my-pub:fedhql-arxiv}. Woo et al.~\cite{frl-pmlr-v202-woo23a} prove that leveraging agent heterogeneity can lead to linear speedup and significant efficiency gains in 
federated Q-learning settings. 
In addition, Woo et al.~\cite{woo2024federated} introduce a federated offline RL method that achieves linear speedup with low communication costs 
in heterogeneous client environments.
Moreover,
Wang et al.~\cite{wang2024momentum} leverage momentum mechanisms to achieve exact convergence and state-of-the-art sample efficiency in highly heterogeneous environments.
Finally, Jiang et al.~\cite{jiang2025fedhpd} propose the first policy distillation-based framework that aligns heterogeneous agent policies to further accelerate convergence and enhance sample efficiency in federated policy gradient methods.


\textbf{Novelty of Our Approach.} 
In contrast to existing FedRL methods that employ a uniform reward structure, our approach, FedRLHF, integrates client-specific human feedback directly into the reward function. 
By allowing each client to locally shape its reward,
FedRLHF not only preserves privacy (since sensitive feedback remains on-device), but also enhances personalization by accommodating diverse user preferences. 
This personalized feedback mechanism is particularly transformative for LLM applications (e.g., ChatGPT), as it enables context-aware fine-tuning of responses while upholding stringent privacy standards.
Moreover, our framework is theoretically distinguished by convergence guarantees and sample complexity bounds that explicitly capture the variability introduced by human feedback—an aspect absent from prior FedRL literature.

\section{Problem Formulation}
\label{sec:problem_formulation}
We consider a federated reinforcement learning system with $K$ clients, where each client $k \in {1, 2, \dots, K}$ interacts with its own environment, modeled as a Markov Decision Process (MDP) $M_k = (\mathcal{S}, \mathcal{A}, P_k, R_k,\rho_0(s), \gamma)$. Here, $\mathcal{S}$ and $\mathcal{A}$ represent the state and action spaces, respectively. The state-transition function $P_k: \mathcal{S} \times \mathcal{A} \times \mathcal{S} \to [0,1]$ defines the probability $P_k(s' \mid s, a)$ of transitioning from state $s$ to state $s'$ after taking action $a$. The reward function $R_k: \mathcal{S} \times \mathcal{A} \to \mathbb{R}$ specifies the expected reward $R_k(s, a)$ for client $k$ when action $a$ is taken in state $s$. Finally, $\gamma \in [0,1)$ is the discount factor balancing immediate and future rewards.
The initial state distribution \( \rho_0(s) \) specifies the probability of the MDP starting in state \( s \). We assume that both \(\gamma\) and \( \rho \) are fixed and known for all clients. This uniformity simplifies the theoretical analysis and ensures consistency in policy evaluation and optimization. However, extensions to client-specific discount factors and initial state distributions are straightforward within our framework.

Each client's MDP may vary in the transition dynamics $P_k$ and reward functions $R_k$, reflecting the heterogeneity among clients due to personalized environments or preferences. Let $\pi_\theta: \mathcal{S} \to \Delta(\mathcal{A})$ denote a stochastic policy parameterized by $\theta \in \mathbb{R}^d$, where $\Delta(\mathcal{A})$ is the set of probability distributions over the action space $\mathcal{A}$. The policy $\pi_\theta(a \mid s)$ specifies the probability of taking action $a$ in state $s$ under the parameters $\theta$. For each client $k$, the objective is to find the policy parameters $\theta$ that maximize the expected cumulative discounted reward:
\begin{equation}
\label{eq:client-objective}
J_k(\theta) = \mathbb{E}_{\tau \sim \pi_\theta} \left[ \sum_{t=0}^\infty \gamma^t R_k(s_t, a_t) \right],
\end{equation}
where the expectation is taken over trajectories $\tau = (s_0, a_0, s_1, \ldots)$ generated by following policy $\pi_\theta$ in $M_k$
, starting from $\rho_o(s)$.

\subsection{Incorporating Human Feedback}
Unlike conventional RLHF where a single reward function and policy are learned from aggregated data, FedRLHF allows for client-specific reward functions $R_k$ and locally adapted policies $\pi_k$. 
In Fed\-RLHF, human feedback is integrated locally to shape each client's reward function. 
Specifically, the reward function for client $k$ is augmented as:
\begin{equation}
\label{eq:reward-shaping}
R_k(s, a) = R_k^0(s, a) + \lambda H_k(s, a),
\end{equation}

where $R_k^0(s, a)$ is the intrinsic reward provided by the environment, $H_k(s, a)$ is the client-specific human feedback function representing additional reward or penalty based on human evaluation, and $\lambda > 0$ is a scaling factor balancing the influence of human feedback relative to the intrinsic reward.

\subsection{Global Objective}

The global objective is to find policy parameters $\theta$ that maximize the average expected cumulative reward across all clients:

\begin{equation}
\label{eq:global-objective}
J(\theta) = \frac{1}{K} \sum_{k=1}^K J_k(\theta).
\end{equation}


\subsection{The FedRLHF Framework}
\label{subsec:fedrlhf_algorithm}

We propose the \textbf{Federated Reinforcement Learning with Human Feedback (FedRLHF)} framework to optimize the global objective \eqref{eq:global-objective} across all clients in a federated manner while respecting their individual environments and preferences. 
Algorithm~\ref{alg:fedrlhf} presents the pseudocode for FedRLHF. The key components and operations of the framework are as follows:

\begin{algorithm}[t]
\caption{FedRLHF}
\label{alg:fedrlhf}
\begin{algorithmic}[1]
\Require Number of clients $K$, total communication rounds $T$, local update steps $\tau$, personalization factor $\lambda$, learning rate $\eta$
\Ensure Final global policy parameters $\theta_{\text{final}}$
\State Initialize global policy parameters $\theta_0$
\For{$t = 0$ \textbf{to} $T-1$}
    \State \textbf{Server} broadcasts global parameters $\theta_t$ to all clients
    \For{\textbf{each client} $k \in \{1, 2, \dots, K\}$ \textbf{in parallel}}
        \State Initialize local parameters: $\theta_{t,0}^k \gets \theta_t$
        \For{$i = 0$ \textbf{to} $\tau - 1$}
            \State Sample a mini-batch $\mathcal{B}_{t,i}^k$ using policy $\pi_{\theta_{t,i}^k}$ in $M_k$
            \State Collect human feedback $H_k(\mathcal{B}_{t,i}^k)$
            \State Calculate shaped reward $R_k$ {per Eq.~\eqref{eq:reward-shaping}}
            \State Estimate policy gradient per ${J}_k(\theta^k_{t,i})$ in Eq.~\eqref{eq:client-objective}:
            \begin{equation*}
            \hat{g}_{t,i}^k \gets \nabla_{\theta} \hat{J}_k(\theta_{t,i}^k; \mathcal{B}_{t,i}^k) 
            \end{equation*}
            \State Update local parameters:
            \begin{equation*}
            \theta_{t,i+1}^k \gets \theta_{t,i}^k + \eta \hat{g}_{t,i}^k
            \end{equation*}
        \EndFor
        \State Compute local model update:
        \begin{equation*}
        \Delta \theta_{t+1}^k \gets \theta_{t,\tau}^k - \theta_t
        \end{equation*}
    \EndFor
    \State \textbf{Server} aggregates local updates:
    \begin{equation*}
    \theta_{t+1} \gets \theta_t + \frac{1}{K} \sum_{k=1}^K \Delta \theta_{t+1}^k
    \end{equation*}
\EndFor
\State \Return $ \theta_{\text{final}} \gets \theta_T$ or $\frac{1}{T}\sum_{t=0}^{T-1} \theta_t$
\end{algorithmic}
\end{algorithm}

     \paragraph{Local RLHF (lines 6-12)}: The FedRLHF framework allows clients to use different RL methods, including Q-learning \cite{watkins1989learning-Q-learning} and policy gradient (PG)~\cite{williams1992REINFORCE}, to perform $\tau$ steps of local optimization.
    The theoretical analysis provided in
    this manuscript
    assumes PG methods for local updates, which was necessary to facilitate the convergence analysis and derive sample complexity bounds. 
     \paragraph{Reward Shaping (line 9)}: Rewards are shaped as $R_k = R_k^0 + \lambda H_k$, where $R_k^0$ is the intrinsic reward and $\lambda$ controls the influence of human feedback $H_k$.
     \paragraph{Privacy Preservation}: 
    Only the model updates $\Delta \theta_{t+1}^k$ are transmitted to the server, preventing direct access to individual user data and feedback, thus providing a significant level of privacy protection. 
    For applications requiring formal privacy guarantees or protection against advanced inference attacks, additional measures such as Differential Privacy (DP)~\cite{dwork2006calibrating} could be incorporated into the framework. 
     \paragraph{Aggregation (line 13)}:
    Multiple server aggregation methods exist, such as FedAvg (simple averaging) \cite{mcmahan2017communication}, Weighted Average based on client data sizes, and robust aggregation techniques like median-based methods \cite{yin2018byzantine}. The choice depends on specific requirements such as fairness, robustness to outliers, or heterogeneity in client data. In this algorithm and our theoretical analysis, we employ FedAvg for its simplicity and to facilitate clearer theoretical results. However, our framework is flexible and can accommodate other aggregation methods if needed.

\paragraph{Mechanism of Personalization.}
Although the server broadcasts a single global policy parameter vector, personalization arises because each client’s local objective includes its own reward function, shaped by both intrinsic rewards and the local human feedback function \(H_k\). At each round, client \(k\) adapts the global parameters to better optimize \(\lambda H_k\) alongside \(R_k^0\). This leads to client-specific parameter updates. While the global model aggregates these updates to improve overall performance, each client’s environment and feedback distribution remain unique. In practice, clients can also maintain partially fine-tuned local parameters or personalized embeddings, thus capturing their distinct preferences or constraints. Consequently, even though the server aggregates all updates, the local RLHF step ensures that each client’s policy is guided by its private human feedback, achieving personalization in a federated manner.
\section{Convergence Results}
\label{sec:convergence-results}

In this section, we provide theoretical guarantees on the convergence and sample complexity of the FedRLHF framework. 
These results apply to implementations of the framework that adhere to the core principles and structure outlined in Algorithm~\ref{alg:fedrlhf}, under the necessary assumptions stated below:

\subsection{Assumptions}

\begin{assumption}[$L$-smooth gradients]\label{assumption:L-smoothness}
For all $\theta, \theta' \in \mathbb{R}^d$ and $k \in [K]$, the gradients of the clients' objective functions are $L$-Lipschitz continuous:
\[
\|\nabla J_k(\theta) - \nabla J_k(\theta')\| \leq L\|\theta - \theta'\|.
\]
\end{assumption}

\begin{assumption}[$G$-bounded gradients]\label{assumption:G-bounded-gradients}
For all $\theta \in \mathbb{R}^d$ and $k \in [K]$, the gradients are bounded:
\[
\|\nabla J_k(\theta)\| \leq G.
\]
\end{assumption}

\begin{assumption}[$\sigma$-bounded variance]\label{assumption:bounded-variance}
For all $\theta \in \mathbb{R}^d$ and $k \in [K]$, the variance of the stochastic gradient estimator is bounded:
\[
\mathbb{E}\left[\|\nabla \hat{J}_k(\theta) - \nabla J_k(\theta)\|^2\right] \leq \sigma^2,
\]
where $\nabla \hat{J}_k(\theta)$ is the stochastic gradient computed from a mini-batch.
\end{assumption}

\begin{assumption}[Bounded second moment]\label{assumption:bounded-second-moment}
For all $\theta \in \mathbb{R}^d$ and $k \in [K]$, the second moment of the stochastic gradient is bounded:
\[
\mathbb{E}\left[\|\nabla \hat{J}_k(\theta)\|^2\right] \leq M^2.
\]
\end{assumption}

\begin{assumption}[Polyak-Łojasiewicz (PL) condition]\label{assumption:PL-condition}
The global objective function satisfies the PL condition:
\[
2\mu\left(J(\theta^*) - J(\theta)\right) \leq \|\nabla J(\theta)\|^2, \quad \forall \theta \in \mathbb{R}^d,
\]
where $\mu > 0$ is a constant and $\theta^* = \arg\max_{\theta} J(\theta)$.
\end{assumption}


\begin{remark}
    Assumptions \ref{assumption:L-smoothness}--\ref{assumption:bounded-second-moment} are common in the stochastic optimization literature. 
    The PL condition (Assumption~\ref{assumption:PL-condition}) is stronger, especially for reinforcement learning's typically non-convex objectives. 
    However, it approximates scenarios where objective functions exhibit properties conducive to linear convergence.
    Policy gradient methods with trust region constraints, such as TRPO~\cite{schulman2015TRPO}, or those using proximal objectives, like PPO~\cite{schulman2017proximal-PPO}, often result in smoother updates to the policy parameters, making the PL condition more reasonable.
    Recent works~\cite[etc.]{karimi2016linear,yuan2022general,bhandari2024global} have used the PL condition for non-convex convergence guarantees for RL, further justifying its use in our analysis.
\end{remark}

\begin{assumption}[Bounded Human Feedback]\label{assumption:bounded-human-feedback}
For all $s \in \mathcal{S}$, $a \in \mathcal{A}$, and $k \in [K]$, the human feedback is bounded:
\[
|H_k(s, a)| \leq H_{\max}.
\]
\end{assumption}

\begin{remark}
Assumption~\ref{assumption:bounded-human-feedback} limits the variance introduced by human feedback in the learning process.  
In our experiments with the MovieLens task, 
we implement this by bounding feedback values and options (Section~\ref{subsec:hf-simulation}), similar to practical systems like ChatGPT that curate feedback for consistency.

\end{remark}

\subsection{Convergence and Sample Complexity}

We now present the main theoretical results, starting with key lemmas leading up to the convergence theorem.
The complete proof for theorems presented in this section is provided in Appendix~\ref{appendix:sec:proofs_theorems_convergence}.

\begin{lemma}[Bounded Local-Global Difference]\label{lemma:bounded-local-global-difference}
Under Assumptions~\ref{assumption:L-smoothness}, \ref{assumption:G-bounded-gradients}, and \ref{assumption:bounded-variance}, for any communication round $t$ and client $k$, we have:
\[
\mathbb{E}\left[\|\theta_t^k - \theta_t\|^2\right] \leq \eta^2\tau^2(G^2 + \sigma^2)
\]
where $\theta_t^k$ is the local model of client $k$, $\theta_t$ is the global model, $\eta$ is the learning rate, $\tau$ is the number of local updates, $G$ is the gradient bound, and $\sigma^2$ is the variance bound.
\end{lemma}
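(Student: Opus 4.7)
The plan is to unroll the local update rule into a telescoping sum, apply a Jensen/Cauchy–Schwarz inequality, and then bound each stochastic gradient's second moment via a bias–variance decomposition.

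First I would use the local update $\theta_{t,i+1}^k = \theta_{t,i}^k + \eta \hat{g}_{t,i}^k$ together with the initialization $\theta_{t,0}^k = \theta_t$ to telescope and obtain $\theta_t^k - \theta_t = \eta \sum_{i=0}^{\tau-1} \hat{g}_{t,i}^k$, identifying $\theta_t^k$ with $\theta_{t,\tau}^k$. Taking the squared norm and applying Cauchy–Schwarz to the sum of $\tau$ vectors then yields
\[
\|\theta_t^k - \theta_t\|^2 \;\leq\; \eta^2 \tau \sum_{i=0}^{\tau-1} \|\hat{g}_{t,i}^k\|^2.
\]

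Next I would take expectations termwise. Conditioning on the filtration up to the start of iteration $i$, and assuming (as is standard for the policy gradient estimators in Algorithm~\ref{alg:fedrlhf}) that $\mathbb{E}[\hat{g}_{t,i}^k \mid \theta_{t,i}^k] = \nabla J_k(\theta_{t,i}^k)$, the cross term in the decomposition vanishes, giving
\[
\mathbb{E}\|\hat{g}_{t,i}^k\|^2 \;=\; \mathbb{E}\|\hat{g}_{t,i}^k - \nabla J_k(\theta_{t,i}^k)\|^2 + \mathbb{E}\|\nabla J_k(\theta_{t,i}^k)\|^2 \;\leq\; \sigma^2 + G^2,
\]
where the two terms are bounded by Assumption~\ref{assumption:bounded-variance} and Assumption~\ref{assumption:G-bounded-gradients}, respectively. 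Substituting this uniform bound back and summing over $i \in \{0,\dots,\tau-1\}$ gives $\mathbb{E}\|\theta_t^k - \theta_t\|^2 \leq \eta^2 \tau^2 (G^2 + \sigma^2)$, as claimed.

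The argument is largely mechanical; the only subtlety is book-keeping the conditioning so that the cross term genuinely vanishes iteration by iteration rather than only in the aggregate. Note that $L$-smoothness (Assumption~\ref{assumption:L-smoothness}) is not actually needed for this bound—it is listed in the hypotheses presumably because it is required by the downstream lemmas that use this drift bound. One could alternatively invoke Assumption~\ref{assumption:bounded-second-moment} directly on $\mathbb{E}\|\hat{g}_{t,i}^k\|^2$, which would replace $G^2 + \sigma^2$ by $M^2$; the version stated is the tighter one when $M^2 \geq G^2 + \sigma^2$ is loose.
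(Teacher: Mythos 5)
Your proof follows essentially the same route as the paper's: unroll the local updates into a telescoping sum, apply the $\|\sum_{i} a_i\|^2 \leq \tau \sum_i \|a_i\|^2$ inequality, and bound each stochastic gradient's second moment by $G^2 + \sigma^2$ via a bias--variance split. If anything you are slightly more careful than the paper, which drops the cross term in that split without explicitly invoking unbiasedness of the gradient estimator; your remarks that $L$-smoothness is not actually used and that Assumption~\ref{assumption:bounded-second-moment} would give an alternative $M^2$ bound are both correct.
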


\begin{remark}
Lemma~\ref{lemma:bounded-local-global-difference} quantifies the extent to which local models diverge from the global model after $\tau$ local updates. This deviation is influenced by the learning rate $\eta$ and the number of local updates $\tau$, both of which amplify the divergence when increased. 
\end{remark}

\begin{lemma}[One-Step Descent]\label{lemma:one-step-descent}
Under Assumptions~\ref{assumption:L-smoothness}--\ref{assumption:bounded-human-feedback}, for any round $t$, the expected improvement in the global objective satisfies:
\begin{align*}
\mathbb{E}[J(\theta_{t+1})] \geq J(\theta_t) & + \eta\tau \left(1 - \frac{L \eta \tau}{2}\right) \|\nabla J(\theta_t)\|^2  \\
& - \frac{L}{2} \left(\frac{\eta^2 \tau^2}{K}\right) (G^2 + \sigma^2) - \lambda H_{\max}
\end{align*}
where $\theta_{t+1}$ is the updated global model.
\end{lemma}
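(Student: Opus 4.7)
The plan is to start from the (lower) $L$-smoothness inequality for the global objective $J$ applied to the gradient-ascent update $\theta_{t+1}=\theta_t+(\eta/K)\sum_{k=1}^K\sum_{i=0}^{\tau-1}\hat{g}_{t,i}^k$ that is induced by the server aggregation in Algorithm~\ref{alg:fedrlhf}. Concretely, Assumption~\ref{assumption:L-smoothness} yields
\[
J(\theta_{t+1}) \;\geq\; J(\theta_t) + \langle \nabla J(\theta_t),\, \theta_{t+1}-\theta_t\rangle - \tfrac{L}{2}\|\theta_{t+1}-\theta_t\|^2.
\]
Taking expectation over the mini-batches in one communication round reduces the lemma to lower-bounding the cross term and upper-bounding the second-moment term, plus an explicit accounting for the human-feedback perturbation.

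For the cross term, I would use unbiasedness to replace $\mathbb{E}[\hat{g}_{t,i}^k\mid\mathcal{F}_{t,i}^k]=\nabla J_k(\theta_{t,i}^k)$ and then split $\nabla J_k(\theta_{t,i}^k)=\nabla J_k(\theta_t)+[\nabla J_k(\theta_{t,i}^k)-\nabla J_k(\theta_t)]$. Averaging over $k$ and using $J=(1/K)\sum_k J_k$, the first piece contributes the target $\eta\tau\|\nabla J(\theta_t)\|^2$ once summed over $i$. The second piece is the client-drift term: apply $L$-smoothness together with Cauchy--Schwarz and Lemma~\ref{lemma:bounded-local-global-difference} to bound its magnitude in terms of $\eta\tau\sqrt{G^2+\sigma^2}$, and absorb the resulting cross-term into a $-\tfrac{L\eta^2\tau^2}{2}\|\nabla J(\theta_t)\|^2$ correction via Young's inequality. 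This is precisely what produces the compact coefficient $\eta\tau(1-L\eta\tau/2)$ appearing in the statement.

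For the second-moment term, I would decompose each local stochastic gradient into $\hat{g}_{t,i}^k=\nabla J_k(\theta_{t,i}^k)+\xi_{t,i}^k$ with $\mathbb{E}[\xi_{t,i}^k\mid\mathcal{F}_{t,i}^k]=0$, and therefore split $\theta_{t+1}-\theta_t$ into a deterministic mean part and a mean-zero noise part. The mean part is bounded in norm using Assumption~\ref{assumption:G-bounded-gradients}. The noise part is bounded by exploiting the independence of the fresh mini-batch noise across clients $k$, so that the cross-client contributions cancel in expectation and only the per-client variances from Assumption~\ref{assumption:bounded-variance} survive. This is the step where the $1/K$ factor in $(\eta^2\tau^2/K)(G^2+\sigma^2)$ arises, and is the main technical obstacle: a bare application of Assumption~\ref{assumption:bounded-second-moment} to $\|\theta_{t+1}-\theta_t\|^2$ would lose the $1/K$ factor and invalidate the linear speedup with $K$ that the subsequent theorem relies on. A complementary role of Assumption~\ref{assumption:bounded-second-moment} is only to absorb any residual cross-time covariances cleanly into the $G^2+\sigma^2$ constant.

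Finally, the additive $-\lambda H_{\max}$ term enters through Assumption~\ref{assumption:bounded-human-feedback}. Because the shaped reward in Eq.~\eqref{eq:reward-shaping} contributes at most $\lambda H_{\max}$ in absolute value per state-action pair, the maximum one-round bias that the human-feedback component can introduce into the expected improvement of $J$ is bounded by $\lambda H_{\max}$; subtracting this explicit penalty yields a valid lower bound. Assembling the three ingredients into the smoothness inequality, collecting the coefficients on $\|\nabla J(\theta_t)\|^2$, and carrying the variance penalty and the feedback penalty through completes the proof of Lemma~\ref{lemma:one-step-descent}.
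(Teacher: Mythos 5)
Your proposal follows essentially the same route as the paper's proof: the lower $L$-smoothness inequality, the decomposition of the cross term via $\nabla J_k(\theta_{t,i}^k)=\nabla J_k(\theta_t)+[\nabla J_k(\theta_{t,i}^k)-\nabla J_k(\theta_t)]$ controlled by Lemma~\ref{lemma:bounded-local-global-difference} and a Young-type absorption into the $\eta\tau(1-L\eta\tau/2)$ coefficient, the $1/K$ factor from cross-client independence in the aggregated update's second moment, and the additive worst-case $-\lambda H_{\max}$ penalty from Assumption~\ref{assumption:bounded-human-feedback}. Your mean/noise split of the second-moment term is if anything slightly more careful than the paper's assertion that the cross terms vanish outright, but the argument and its conclusion are the same.
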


\begin{remark}
    This lemma establishes that each communication round in FedRLHF yields a quantifiable improvement in the global objective \( J(\theta) \). The positive term \( \eta\tau\left(1 - \frac{L\eta\tau}{2}\right)\|\nabla J(\theta_t)\|^2 \) signifies progress towards maximizing the objective, while the negative terms \( \lambda H_{\max} \) and \( \frac{L}{2}\left(\frac{\eta^2\tau^2}{K}\right)(G^2 + \sigma^2) \) account for the inherent variance in stochastic gradients and the bounded impact of human feedback, respectively.
\end{remark}

\begin{theorem}[Convergence of FedRLHF]\label{theorem:convergence}
Under Assumptions~\ref{assumption:L-smoothness}--\ref{assumption:bounded-human-feedback}, if we choose the constant learning rate $\eta = \frac{1}{L\tau}$,
then the output \(\theta_{\text{avg}} = \frac{1}{T} \sum_{t=0}^{T-1} \theta_t\) of Algorithm~\ref{alg:fedrlhf} satisfies:
\begin{align*}
\mathbb{E}\left[ J(\theta^*) - J(\theta_{\text{avg}}) \right] &\leq \frac{L}{\mu T} \left( J(\theta^*) - J(\theta_0) \right) \\
 & \quad + \frac{1}{2\mu K}(G^2 + \sigma^2) + \frac{L}{\mu}\lambda H_{\max}.
\end{align*}
\end{theorem}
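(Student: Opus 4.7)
The plan is to chain together the two lemmas with the PL condition and then convert a telescoped sum into the stated $O(1/T)$ bound. First, I would substitute the prescribed step size $\eta = 1/(L\tau)$ directly into the one-step descent inequality of Lemma~\ref{lemma:one-step-descent}. Since $\eta\tau = 1/L$, the coefficient $\eta\tau\bigl(1 - \tfrac{L\eta\tau}{2}\bigr)$ collapses to $1/(2L)$, and the variance/client-drift term $\tfrac{L}{2}\cdot\tfrac{\eta^2\tau^2}{K}(G^2+\sigma^2)$ collapses to $(G^2+\sigma^2)/(2LK)$. The descent inequality therefore reduces to
\[
\mathbb{E}[J(\theta_{t+1})] \geq J(\theta_t) + \tfrac{1}{2L}\|\nabla J(\theta_t)\|^2 - c_0,
\qquad c_0 := \tfrac{G^2+\sigma^2}{2LK} + \lambda H_{\max}.
\]

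Next I would apply the PL condition (Assumption~\ref{assumption:PL-condition}) to lower bound $\|\nabla J(\theta_t)\|^2$ by $2\mu(J(\theta^*) - J(\theta_t))$. Writing $\Delta_t := J(\theta^*) - J(\theta_t)$ and subtracting $J(\theta^*)$ from both sides turns the descent inequality into the linear recursion
\[
\mathbb{E}[\Delta_{t+1}] \leq \bigl(1 - \tfrac{\mu}{L}\bigr)\mathbb{E}[\Delta_t] + c_0.
\]
Rearranging this as $\tfrac{\mu}{L}\mathbb{E}[\Delta_t] \leq \mathbb{E}[\Delta_t] - \mathbb{E}[\Delta_{t+1}] + c_0$ and summing over $t=0,\ldots,T-1$ yields a telescope in which $-\mathbb{E}[\Delta_T] \leq 0$ can be dropped, leading to
\[
\frac{1}{T}\sum_{t=0}^{T-1}\mathbb{E}[\Delta_t] \leq \frac{L}{\mu T}\Delta_0 + \frac{L}{\mu}c_0 = \frac{L(J(\theta^*)-J(\theta_0))}{\mu T} + \frac{G^2+\sigma^2}{2\mu K} + \frac{L}{\mu}\lambda H_{\max},
\]
which is exactly the right-hand side of the theorem.

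The final move is to pass from the average of the suboptimality gaps to the suboptimality of the averaged iterate $\theta_{\text{avg}} = \tfrac{1}{T}\sum_t \theta_t$. The clean path is to invoke Jensen's inequality, which gives $J(\theta_{\text{avg}}) \geq \tfrac{1}{T}\sum_t J(\theta_t)$ whenever $J$ is concave along the trajectory; this is the step where I expect the main obstacle to lie, since the assumptions listed give smoothness and PL but not global concavity. In the write-up I would either invoke this concavity observation (standard in the PL-based FedRL literature cited in the remark after Assumption~\ref{assumption:PL-condition}) or, equivalently, bound $\min_{t<T}\mathbb{E}[\Delta_t]$ by the telescoped average and note that the algorithm can return either the averaged or the best iterate, as permitted by the last line of Algorithm~\ref{alg:fedrlhf}.

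Aside from that one subtlety, the proof is essentially a FedAvg-style argument specialized to the RLHF objective; Lemma~\ref{lemma:bounded-local-global-difference} has already absorbed the client-drift analysis into the $(G^2+\sigma^2)$ term inside Lemma~\ref{lemma:one-step-descent}, so its use in this theorem is only implicit. The only genuinely new ingredient is the additive $\lambda H_{\max}$ error from reward shaping, which propagates linearly through the recursion and shows up unamplified (modulo the $L/\mu$ factor from the telescope) in the final bound, cleanly isolating the bias contributed by human feedback from the variance and initialization terms.
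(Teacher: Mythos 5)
Your proposal is correct and follows essentially the same route as the paper's proof: substitute $\eta = 1/(L\tau)$ into Lemma~\ref{lemma:one-step-descent}, apply the PL condition to obtain the contraction $\mathbb{E}[\Delta_{t+1}] \le (1-\tfrac{\mu}{L})\mathbb{E}[\Delta_t] + c_0$, and average over $t$; the paper unrolls the geometric recursion and bounds $\sum_{t}\rho^t \le 1/(1-\rho)$ where you telescope, but the two summations produce identical constants. The Jensen/concavity subtlety you flag in passing from $\frac{1}{T}\sum_{t}\mathbb{E}[\Delta_t]$ to $\mathbb{E}[J(\theta^*)-J(\theta_{\text{avg}})]$ is real---the paper silently identifies the two quantities without justification---so your explicit acknowledgment of that step (and the fallback to the best iterate permitted by Algorithm~\ref{alg:fedrlhf}) is, if anything, more careful than the original.
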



Theorem~\ref{theorem:convergence} establishes that the FedRLHF algorithm converges to an optimal policy within a bounded suboptimality gap. This bound elucidates several key aspects of the algorithm's performance:

\begin{enumerate}[leftmargin=*]
    \item \textbf{Convergence Rate (\( \mathcal{O}(1/T) \)):} The first term \( \frac{L}{\mu T} \left( J(\theta^*) - J(\theta_0) \right) \) 
    indicates that the algorithm achieves a linear convergence rate with respect to the number of communication rounds $T$, which aligns well with expectations in federated optimization under the PL condition.
    \item \textbf{Impact of Client Diversity and Variance (\( \mathcal{O}(1/K) \)):} The second term \( \frac{1}{2\mu K}(G^2 + \sigma^2) \) scales inversely with the number of clients \( K \). This indicates that aggregating updates from more clients reduces the effect of gradient variance \( \sigma^2 \) and bounded gradient norms \( G \), leading to a tighter convergence bound. 
    %
    \item \textbf{Influence of Human Feedback (\( \mathcal{O}(1) \)):} The third term \( \frac{L}{\mu}\lambda H_{\max} \) represents the bounded influence of human feedback on the convergence. The scaling factor \( \lambda \) and the maximum human feedback bound \( H_{\max} \) determine how much human feedback can potentially offset the objective's improvement. While human feedback guides the policy towards user preferences, this term ensures that its impact remains controlled, preventing excessive deviations that could hinder convergence.
\end{enumerate}

\begin{theorem}[Sample Complexity of FedRLHF]\label{theorem:sample-complexity}
Under Assumptions~\ref{assumption:L-smoothness}--\ref{assumption:bounded-human-feedback}, to achieve an expected optimality gap of 
\[
\mathbb{E}\left[ J(\theta^*) - J(\theta_{\text{avg}}) \right] \leq \epsilon,
\]
the total number of samples required across all clients is:
\[
N = O\left( \frac{L (G^2 + \sigma^2)}{\mu^2 \epsilon^2} \right),
\]
subject to:
\[
    K \geq O\left( \frac{G^2 + \sigma^2}{\mu \epsilon} \right), \qquad     \lambda H_{\max} \leq O\left( \frac{\mu \epsilon}{L} \right).
\]
\end{theorem}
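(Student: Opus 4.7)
The plan is to derive the sample complexity bound directly from Theorem~\ref{theorem:convergence} by balancing the three error terms in its upper bound against the target accuracy $\epsilon$. Theorem~\ref{theorem:convergence} gives a decomposition of $\mathbb{E}[J(\theta^*) - J(\theta_{\text{avg}})]$ into a $\mathcal{O}(1/T)$ optimization term, a $\mathcal{O}(1/K)$ variance term, and a constant bias term $\frac{L}{\mu}\lambda H_{\max}$ from human feedback. Since the accuracy $\epsilon$ must absorb all three contributions, I would force each of them to be at most $\epsilon/3$ (the choice of the constant $1/3$ is immaterial and only affects hidden constants in the $\mathcal{O}(\cdot)$ notation).

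First I would handle the variance term: requiring $\frac{1}{2\mu K}(G^2+\sigma^2)\leq \epsilon/3$ immediately yields the stated lower bound $K=\Omega\!\left(\frac{G^2+\sigma^2}{\mu\epsilon}\right)$. Next, the feedback bias term $\frac{L}{\mu}\lambda H_{\max}\leq \epsilon/3$ rearranges to the stated condition $\lambda H_{\max}=\mathcal{O}\!\left(\frac{\mu\epsilon}{L}\right)$, which is a design requirement on the personalization factor rather than on $T$. Finally, for the $\mathcal{O}(1/T)$ term I would impose $\frac{L}{\mu T}(J(\theta^*)-J(\theta_0))\leq \epsilon/3$, giving
\begin{equation*}
T \;\geq\; \frac{3L\bigl(J(\theta^*)-J(\theta_0)\bigr)}{\mu\,\epsilon} \;=\; \mathcal{O}\!\left(\frac{L}{\mu\epsilon}\right),
\end{equation*}
where the initial gap $J(\theta^*)-J(\theta_0)$ is treated as a problem-dependent constant absorbed into the $\mathcal{O}$ notation.

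To obtain the claimed total sample complexity, I would count samples as in Algorithm~\ref{alg:fedrlhf}: per communication round, each of the $K$ clients draws $\tau$ mini-batches, so with $\tau$ and the mini-batch size treated as $\mathcal{O}(1)$ constants the aggregate sample usage across all clients is $N = \Theta(TK)$. Substituting the two lower bounds gives
\begin{equation*}
N \;=\; \Theta(TK) \;=\; \mathcal{O}\!\left(\frac{L}{\mu\epsilon}\cdot \frac{G^2+\sigma^2}{\mu\epsilon}\right) \;=\; \mathcal{O}\!\left(\frac{L(G^2+\sigma^2)}{\mu^2\epsilon^2}\right),
\end{equation*}
exactly as stated. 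The conditions on $K$ and $\lambda H_{\max}$ appear as side constraints because they are not free parameters that one can trade off against $T$ alone; they reflect structural requirements of the federated setup and the feedback-shaping mechanism.

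I do not expect any deep obstacle in this argument: once Theorem~\ref{theorem:convergence} is available, the proof reduces to a balancing exercise. The only subtlety worth flagging is the bookkeeping of what counts as a ``sample.'' If $\tau$ or the mini-batch size were allowed to grow with $\epsilon$, the bound would change accordingly; I would therefore state explicitly that both are held constant and that the learning rate $\eta=1/(L\tau)$ inherited from Theorem~\ref{theorem:convergence} is used throughout, so that the convergence bound applies verbatim and no additional conditions on $\eta$ are needed.
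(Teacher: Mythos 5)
Your proposal follows essentially the same route as the paper's proof: both decompose the convergence bound of Theorem~\ref{theorem:convergence} into the $\epsilon_T$, $\epsilon_V$, $\epsilon_H$ terms, allocate $\epsilon/3$ to each, derive the resulting lower bounds on $T$ and $K$ and the constraint on $\lambda H_{\max}$, and conclude via $N = K \times T$. Your explicit remark that $\tau$ and the mini-batch size are held constant is a useful clarification that the paper leaves implicit, but it does not change the argument.
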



Theorem~\ref{theorem:sample-complexity} provides an estimate of the total number of samples required across all clients to achieve a desired expected optimality gap $\epsilon$. The bound, which scales with $O(\epsilon^{-2})$ and aligns with standard results in stochastic optimization, reveals several insights: 
\begin{enumerate}[leftmargin=*]
    \item \textbf{Dependence on Problem Constants:} The constants $\mu$ and $L$ reflect the curvature properties of the objective function $J(\theta)$. A larger $\mu$ (stronger PL condition) and smaller $L$ (less curvature) lead to a lower sample complexity.
 The term $G^2+\sigma^2$ captures the combined effect of the gradient bound and the variance of the stochastic gradients.
 Reducing them through techniques like gradient clipping or variance reduction methods~\cite{papini2018stochastic} can significantly lower sample complexity.
 \item \textbf{Per-Client Sample Complexity:}
Diving the bound by $K$ yields the sample complexity per client:
 $N_c= \frac{N}{K} = O\left(\frac{L (G^2 + \sigma^2)}{K \mu^2 \epsilon^2}\right)$.
 As $K$ increases, the per-client sample complexity $N_c$
  decreases proportionally, suggesting that individual clients require fewer samples to achieve the same level of accuracy when more clients participate.
This per-client sample efficiency gain aligns with the collaborative nature of federated reinforcement learing~\cite{my-pub:fedpg-br}.
    \item \textbf{Cost of Personalization:} 
    As detailed in the proof in Appendix~\ref{appendix:sec:proofs_theorems_convergence},
    a larger $\lambda H_{\max }$ increases $\epsilon_H$, effectively consuming more of the allowable error budget. 
    If $\epsilon_H$ becomes significant relative to $\epsilon$, the remaining error budget for 
    $\epsilon_T$ and $\epsilon_V$ decreases.
    This necessitates tighter convergence in these terms, requiring a larger number of communication rounds $T$ and potentially more clients $K$.
Consequently, the total sample complexity $N$ increases indirectly due to a higher emphasis on personalization, suggesting a trade-off between personalization and efficiency
\end{enumerate}

\section{Personalization-Performance Trade-off Analysis}
\label{sec:personalization_analysis}

In this section, we establish a formal relationship between personalization and global performance within the FedRLHF framework. The convergence analysis in Section~\ref{sec:convergence-results} already hints at the personalization-performance trade-off, particularly through the influence of the human feedback weight $\lambda$ on the convergence. Here, we provide a quantitative measure of this trade-off by analyzing how personalization affects global performance of intrinsic rewards and sample complexity.
The complete proof for theorems presented in this section is provided in Appendix~\ref{appendix:sec:proofs_theorem_personalization}.
\subsection{Definitions and Preliminaries}


\begin{definition}[Maximum Reward]
\label{def:Rmax}
We define \( R_{\text{max}} \) as the maximum absolute value of the intrinsic reward function across all clients and state-action pairs:
\begin{equation*}
R_{\text{max}} = \max_{k \in \{1, 2, \dots, K\},\, s \in \mathcal{S},\, a \in \mathcal{A}} |R_k^0(s, a)|.
\end{equation*}
\end{definition}

\begin{definition}[Personalization Score]
\label{def:personalization_score}
For a client \( k \) with policy \( \pi_k(\cdot|s,\theta) \) and the global policy \( \pi(\cdot|s,\theta) \), the personalization score is defined as:
\begin{equation*}
P_k(\theta) = \mathbb{E}_{s \sim \rho} \left[ D_{\text{KL}}\left(\pi_k(\cdot|s,\theta) \parallel \pi(\cdot|s,\theta)\right) \right],
\end{equation*}
where \( \rho \) is the state distribution and \( D_{\text{KL}} \) denotes the Kullback-Leibler divergence.
\end{definition}

\begin{definition}[Global Performance Metric]
\label{def:global_performance}
We define the global performance metric as the average expected cumulative intrinsic reward across all clients:
\begin{equation*}
J_g(\theta) = \frac{1}{K}\sum_{k=1}^K J_k^0(\pi),
\end{equation*}
where 
\begin{equation*}
J_k^0(\pi) = \mathbb{E}_{\tau \sim \pi} \left[ \sum_{t=0}^{\infty} \gamma^t R_k^0(s_t,a_t) \right],
\end{equation*}
and the expectation is over trajectories \( \tau = (s_0, a_0, s_1, \ldots) \) generated by following policy \( \pi \) in client \( k \)'s MDP \( M_k \), starting from \( \rho_0(s) \).
\end{definition}

\subsection{Personalization-Performance Trade-off}

We now present our main theorem on the trade-off between personalization and global performance.

\begin{theorem}[Personalization-Performance Trade-off]
\label{theorem:personalization_performance_tradeoff}
Under Assumptions~\ref{assumption:L-smoothness}--\ref{assumption:bounded-human-feedback} and Definition~\ref{def:Rmax}--\ref{def:global_performance}, for any set of client policies \(\{\pi_k(\cdot|s,\theta)\}_{k=1}^K\) and the global policy \(\pi(\cdot|s,\theta)\), the global performance metric satisfies:
\[
J_g(\theta) \geq \frac{1}{K}\sum_{k=1}^K J_k^0(\pi_k) - C \cdot \left( \frac{1}{K}\sum_{k=1}^K \sqrt{P_k(\theta)} \right),
\]
where \( C > 0 \) is a constant given by:
$C = \frac{2 \sqrt{2} R_{\text{total}, \max}}{(1 - \gamma)^2},$
and \( R_{\text{total}, \max} = R_{\text{max}} + \lambda H_{\max} \) is the maximum possible total reward.
\end{theorem}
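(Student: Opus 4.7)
The plan is to establish a per-client performance-difference bound that controls how much evaluating the global policy $\pi$ (rather than the client's own policy $\pi_k$) can hurt each client's intrinsic return $J_k^0$, and then average these bounds across clients. The key observation is that $J_g(\theta) - \frac{1}{K}\sum_k J_k^0(\pi_k) = \frac{1}{K}\sum_k\bigl[J_k^0(\pi) - J_k^0(\pi_k)\bigr]$, so it suffices to lower-bound each difference $J_k^0(\pi) - J_k^0(\pi_k)$ in terms of a quantity that reduces to $\sqrt{P_k(\theta)}$ after averaging.

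For the per-client step I would invoke a simulation/performance-difference lemma of Kakade--Langford type: for any two policies evaluated in the same MDP $M_k$ with rewards bounded by $R_{\text{total},\max}=R_{\max}+\lambda H_{\max}$ (this bound applies uniformly because Assumption~\ref{assumption:bounded-human-feedback} bounds $H_k$ and Definition~\ref{def:Rmax} bounds $R_k^0$),
\[
\bigl|J_k^0(\pi) - J_k^0(\pi_k)\bigr| \;\leq\; \frac{4\,R_{\text{total},\max}}{(1-\gamma)^2}\;\mathbb{E}_{s\sim\rho}\!\left[D_{\text{TV}}\bigl(\pi_k(\cdot|s,\theta)\,\|\,\pi(\cdot|s,\theta)\bigr)\right].
\]
The factor $(1-\gamma)^{-2}$ arises because a one-step deviation in the action distribution compounds through the discounted occupancy measure, and the bound $|A^{\pi}(s,a)|\leq 2R_{\text{total},\max}/(1-\gamma)$ contributes the $R_{\text{total},\max}$ prefactor.

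I would then convert the total-variation distance to the KL divergence used in $P_k(\theta)$ via Pinsker's inequality, $D_{\text{TV}}(p\|q)\leq\sqrt{\tfrac{1}{2}D_{\text{KL}}(p\|q)}$, followed by Jensen's inequality applied to the concave square-root function:
\[
\mathbb{E}_{s\sim\rho}\!\left[\sqrt{\tfrac{1}{2}D_{\text{KL}}\!\left(\pi_k(\cdot|s,\theta)\,\|\,\pi(\cdot|s,\theta)\right)}\right]
\;\leq\;\sqrt{\tfrac{1}{2}\,P_k(\theta)}.
\]
Plugging back yields $|J_k^0(\pi)-J_k^0(\pi_k)| \leq \tfrac{2\sqrt{2}\,R_{\text{total},\max}}{(1-\gamma)^2}\sqrt{P_k(\theta)}$, which identifies the constant $C$ in the theorem. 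Averaging over $k$ and using $J_g(\theta)-\tfrac{1}{K}\sum_k J_k^0(\pi_k)\geq -\tfrac{1}{K}\sum_k |J_k^0(\pi)-J_k^0(\pi_k)|$ delivers the claimed inequality.

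The main obstacle, in my view, is getting the right constant through the simulation lemma: one must be careful that the reward bound used for the advantage is $R_{\text{total},\max}$ rather than $R_{\max}$, even though $J_g$ and $J_k^0$ only measure intrinsic rewards, because the policies themselves were shaped by the augmented reward and the simulation argument needs a uniform bound on the value-function differences over the trajectories actually induced. A secondary subtlety is justifying that the state distribution $\rho$ in Definition~\ref{def:personalization_score} is compatible with the occupancy measure that appears inside the simulation lemma; the cleanest way is to use the standard version of the lemma expressed directly in terms of $\mathbb{E}_{s\sim\rho}[D_{\text{TV}}]$, absorbing the mismatch into the $(1-\gamma)^{-1}$ factor that already appears in the constant.
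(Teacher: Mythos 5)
Your proposal follows essentially the same route as the paper's proof: a Kakade--Langford performance-difference decomposition per client, the advantage bound $|A_k^{\pi}(s,a)|\le 2R_{\text{total},\max}/(1-\gamma)$, a total-variation bound on the expected advantage, Pinsker's inequality, and Jensen's inequality, yielding the same constant $C$. The only point of divergence is the mismatch between the occupancy measure $d^{\pi_k}_k$ arising in the performance-difference lemma and the reference distribution $\rho$ used in $P_k(\theta)$, which you propose to absorb into the $(1-\gamma)^{-1}$ factor; the paper instead makes this explicit by assuming the density ratio $\max_s d^{\pi_k}_k(s)/\rho(s)$ is bounded (specifically $D_{\infty}(d^{\pi_k}_k \parallel \rho)\le\log 2$), and some such assumption is needed to make that step rigorous.
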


Theorem~\ref{theorem:personalization_performance_tradeoff} establishes that the global performance \( J_g(\theta) \) is lower bounded by the average client-specific performance (intrinsic rewards) \( \frac{1}{K}\sum_{k=1}^K J_k^0(\pi_k) \) minus a penalty term proportional to the average of the square roots of the personalization scores \( \frac{1}{K}\sum_{k=1}^K \sqrt{P_k(\theta)} \). The constant \( C \) encapsulates the maximum possible total reward and the discount factor, indicating that in environments with higher rewards or longer planning horizons, the impact of personalization on global performance is more pronounced.


\subsection{Impact of Human Feedback}

We analyze how the incorporation of human feedback, governed by the weight \( \lambda \), influences personalization and global performance.

\begin{theorem}[Impact of Human Feedback]
\label{theorem:impact_human_feedback}
Under the same assumptions and definitions in Theorem~\ref{theorem:personalization_performance_tradeoff}, as the human feedback weight \( \lambda \) increases:
\begin{enumerate}
    \item The average personalization score \( \frac{1}{K}\sum_{k=1}^K P_k(\theta) \) increases at a rate of \( O(\lambda^2) \).
    \item The global performance \( J_g(\theta) \) decreases at a rate of \( O(\lambda) \).
    \item The sample complexity \( N \) increases at a rate of \( O(\lambda) \).
\end{enumerate}
\end{theorem}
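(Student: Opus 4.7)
The plan is to treat $\lambda$ as a small perturbation parameter around $\lambda=0$ (the pure intrinsic-reward regime) and read off each asserted rate from Taylor expansions; Claim~1 does the real work, while Claims~2 and~3 then follow by plugging it into Theorem~\ref{theorem:personalization_performance_tradeoff} and Theorem~\ref{theorem:sample-complexity} respectively.

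For Claim~1 (personalization score grows as $O(\lambda^2)$), I would start from the fact that client $k$'s shaped objective decomposes as $J_k(\theta) = J_k^0(\theta) + \lambda J_k^H(\theta)$, where $J_k^H$ is the expected return under feedback $H_k$ alone. A standard implicit-function / policy-gradient sensitivity argument on the stationarity condition $\nabla J_k(\theta) = 0$ shows that the client's locally optimized parameters shift as $\theta_k(\lambda) - \theta_k(0) = \lambda v_k + O(\lambda^2)$, where $v_k$ is essentially an inverse Hessian times $\nabla J_k^H$ (well-defined in the relevant neighborhood under the PL condition and $L$-smoothness). Averaging across clients, the global policy's parameters shift by the mean first-order direction, so $\|\theta_k - \theta\| = O(\lambda)$. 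Finally, expanding $D_{\text{KL}}(\pi_k \parallel \pi)$ around zero parameter separation gives the standard Fisher-information quadratic form $\frac{1}{2}(\theta_k-\theta)^\top F(s)(\theta_k-\theta) + O(\|\theta_k-\theta\|^3)$, so $P_k(\theta) = O(\|\theta_k-\theta\|^2) = O(\lambda^2)$, and the average inherits the same rate.

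Claim~2 then follows by plugging $\sqrt{P_k(\theta)} = O(\lambda)$ into Theorem~\ref{theorem:personalization_performance_tradeoff}. The constant $C = 2\sqrt{2}(R_{\max} + \lambda H_{\max})/(1-\gamma)^2 = C_0 + O(\lambda)$ is dominated by its $\lambda$-independent part for small $\lambda$, so the penalty term $C \cdot \frac{1}{K}\sum_k \sqrt{P_k(\theta)}$ is $O(\lambda) + O(\lambda^2) = O(\lambda)$; hence $J_g(\theta)$ can decrease by at most $O(\lambda)$ relative to the $\lambda=0$ baseline.

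For Claim~3 (sample complexity grows as $O(\lambda)$), I would revisit the error-budget decomposition used in the proof of Theorem~\ref{theorem:sample-complexity}, namely $\epsilon = \epsilon_T + \epsilon_V + \epsilon_H$ with $\epsilon_H = \Theta(\lambda L H_{\max}/\mu)$. For a fixed target accuracy $\epsilon$, the feedback-induced budget $\epsilon_H$ eats linearly into the residual $\epsilon - \epsilon_H$ that must be split between $\epsilon_T \sim 1/T$ and $\epsilon_V \sim (G^2+\sigma^2)/(\mu K)$. A first-order expansion of the constraint $K \geq c(G^2+\sigma^2)/(\mu(\epsilon - \epsilon_H))$ around $\lambda = 0$ yields $K(\lambda) = K(0) + \Theta(\lambda)$, and analogously for $T$, so the total $N = K\tau T$ increases by $O(\lambda)$ to leading order. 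The main obstacle will be the sensitivity step in Claim~1: to make the $O(\lambda^2)$ KL bound rigorous one must ensure that the perturbed local optima $\theta_k(\lambda)$ remain in a region where the PL condition and Fisher expansion are valid uniformly in $k$, which may require tightening the bounded-feedback assumption or restricting $\lambda H_{\max}$ a priori; once that is handled, Claims~2 and~3 reduce to algebraic substitutions into already-proved theorems.
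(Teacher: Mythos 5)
Your proposal reaches the same three conclusions with the same overall architecture as the paper (Claim~1 carries the load; Claims~2 and~3 are substitutions into Theorems~\ref{theorem:personalization_performance_tradeoff} and~\ref{theorem:sample-complexity}), but the mechanism behind Claim~1 is genuinely different. The paper does not perturb the local \emph{optima}: it posits an explicit one-step update $\theta_k' = \theta_k + \eta\lambda\nabla_\theta H_k(\theta_k)$, bounds $\|\nabla_\theta H_k\|\le G_H$, and then invokes a quadratic bound $D_{\text{KL}}(\pi_k(\cdot|s,\theta_k')\,\|\,\pi(\cdot|s,\theta)) \le \tfrac{L}{2}\|\theta_k'-\theta\|^2$ so that the dominant contribution $2(\eta\lambda G_H)^2$ yields $P_k(\theta)=O(\lambda^2)$ directly. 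Your implicit-function argument at the stationary point $\nabla J_k(\theta)=0$ gets the same $\|\theta_k-\theta\|=O(\lambda)$ scaling but pays for it with the requirement you yourself flag: an invertible (or at least uniformly well-conditioned) Hessian near the optimum, which the PL condition alone does not provide. The paper's one-step route avoids that entirely, at the cost of tying the result to a specific local update rule and to an unstated smoothness assumption on the policy parameterization (its $\tfrac{L}{2}\|\cdot\|^2$ KL bound is asserted from Assumption~\ref{assumption:L-smoothness}, which concerns $\nabla J_k$, not the Fisher geometry you would use). On Claim~2 you are slightly more careful than the paper, since you account for the $\lambda$-dependence of $C$ through $R_{\text{total},\max}=R_{\max}+\lambda H_{\max}$ and check the product is still $O(\lambda)$; the paper treats $C$ as a constant. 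On Claim~3 your re-expansion of the error budget $\epsilon=\epsilon_T+\epsilon_V+\epsilon_H$ to first order in $\lambda$ is arguably cleaner than the paper's step, which inserts an additional term $\tfrac{LK\lambda H_{\max}}{\mu\epsilon}$ into the sample-complexity expression of Theorem~\ref{theorem:sample-complexity} without deriving it from that theorem's statement. Net: your route is more principled about where personalization comes from but needs the extra regularity you identified; the paper's route is more elementary and self-contained but leans on an assumed update rule and an unproved policy-smoothness step.
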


Theorem~\ref{theorem:impact_human_feedback} quantitatively demonstrates that increasing the human feedback weight $\lambda$ intensifies personalization (as the personalization score grows at $O(\lambda^2)$) but leads to a linear decrease in global performance and an increase in sample complexity.
\begin{enumerate}
    \item \textbf{Personalization Score Increases at \( O(\lambda^2) \):} The personalization score \( P_k(\theta) \) for each client scales quadratically with \( \lambda \), indicating that the degree of personalization becomes more pronounced as \( \lambda \) increases.
    \item \textbf{Global Performance Decreases at \( O(\lambda) \):} The global performance \( J_g(\theta) \) experiences a linear decrease with respect to \( \lambda \). This implies that while personalization enhances client-specific performance, it concurrently introduces a controlled degradation in overall system performance.
    \item \textbf{Sample Complexity Increases at \( O(\lambda) \):} The total number of samples \( N \) required to achieve a desired level of performance grows linearly with \( \lambda \). This reflects the increased data demands associated with higher levels of personalization to maintain convergence guarantees.
\end{enumerate}

\section{Empirical Results}\label{sec:experimental_results}
We evaluate FedRLHF's effectiveness in integrating human feedback within a federated reinforcement learning setting through two real-world tasks: movie rating prediction using the MovieLens dataset and sentiment-controlled review generation using the IMDb dataset. Our experiments benchmark FedRLHF against a centralized RLHF baseline, with a focus on personalization, and maintaining performance levels.

All experiments were conducted on an NVIDIA GeForce RTX 3090 GPU, using the Flower framework~\cite{beutel2020flower} to simulate a realistic federated learning environment with gRPC communication, mimicking real-world distributed systems. Detailed experimental results and analyses are provided in Appendix~\ref{appendix:sec:full-experiments-details}. 

\subsection{Movie Rating Prediction on MovieLens}
\label{sec:movielens_experiment}

\subsubsection{Task Description and Setup}
In this task, we simulate a streaming service enhancing its recommendation system while preserving user privacy and catering to individual preferences. Using the \texttt{ml-latest-small} version of the MovieLens dataset~\cite{harper2015movielens} which contains 100,836 ratings from 610 users on 9,742 movies, we randomly selected $K=10$ users as clients, each with unique viewing histories and preferences. The objective is to predict whether a user would assign a high rating (4 stars or above) to a given movie, effectively framing this as a binary classification task.

\subsubsection{Human Feedback Simulation}\label{subsec:hf-simulation}
To emulate realistic user behavior and feedback mechanisms, we developed a noise-aware, rule-based feedback simulator generating two types of feedback:
 \textbf{a. Direct Feedback:} Categorizes predictions as "too high" (-1), "too low" (1), or "about right" (0) based on the difference between predicted and actual ratings;
 \textbf{b. Comparative Feedback:} Expresses preferences between movie pairs, mirroring real-world scenarios where users more easily compare options than provide absolute ratings.
Feedback values are bounded within [-1, 1], satisfying Assumption~\ref{assumption:bounded-human-feedback}. This feedback trains a local reward (preference) model for each client.
Full details on the feedback simulation are provided in Appendix~\ref{appendix:subsec:human-feedback-details}.

\subsubsection{Implementation}
We implemented a neural network model with embedding layers for users and movies.
The model inputs included user IDs, movie IDs, and movie genre information to capture complex user-movie interactions.
In the federated learning process, each client trained the model locally using intrinsic rewards and simulated human feedback, employing Q-learning as the local RLHF step. 
We employ Q-learning in this task due to its effectiveness in handling discrete action spaces (movie recommendations) and its ability to learn optimal policies in environments with delayed rewards.
Clients performed 5 local epochs per federated round, using the Adam optimizer \cite{kingma2014adam} (learning rate $1 \times 10^{-3}$). 
Global model aggregation used a weighted average based on client example counts, following a FedAvg variant~\cite{mcmahan2017communication}. The process spanned 5 communication rounds. More details are provided in Appendix \ref{appendix:subsec:movielens-implementation-details}.

\begin{figure}[t]
    \centering
    \includegraphics[width=0.7\linewidth]
    {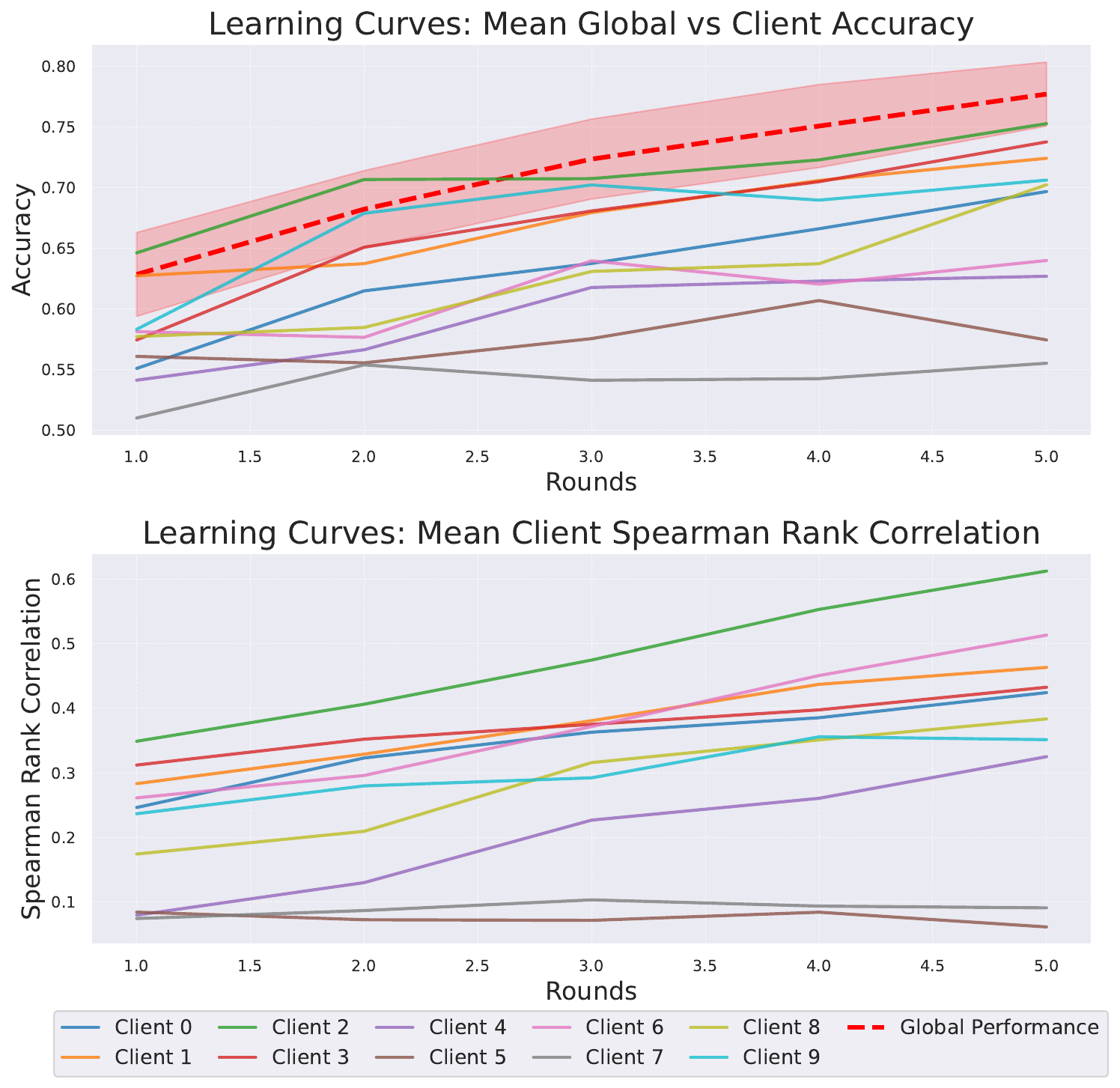}
    \caption{Learning curves on MovieLens: (top) Global vs. Client Accuracy, (bottom) Client Spearman correlation.}
    \label{fig:movielens-performance-k10}
        \Description{A plot showing the learning curves of client accuracy and Spearman correlation for the MovieLens dataset. The global accuracy improves steadily across rounds.}
\end{figure}

\subsubsection{Results and Analysis}
Figure \ref{fig:movielens-performance-k10} presents the learning curves for both global and client-specific accuracies (top), along with the Spearman rank correlations (bottom) for each of the $K=10$ clients across 5 federated rounds. All results are averaged over five independent runs. 
For clarity, only client means across runs are shown. Global performance is depicted by the red dashed line (mean) with shaded area (standard deviation).

\paragraph{Global Performance Improvements} The global performance in accuracy improves from $62.86\% \pm 3.45\%$ to $77.71\% \pm 2.64\%$ over 5 rounds.
The steady improvement in global performance, which is also evident from the client-specific accuracies distribution shown in the violin plot in Figure~\ref{fig:movielens-distribution-k10} (top subplot),
aligns with the $O(1/T)$ convergence rate established in Theorem~\ref{theorem:convergence}.

\paragraph{Personalization-Performance Trade-off}
We use Spearman rank correlation to evaluate how well the model captured user-specific movie preferences, serving as a surrogate for the personalization-performance trade-off discussed in Theorem~\ref{theorem:personalization_performance_tradeoff}. 
Figure~\ref{fig:movielens-performance-k10} (bottom) reveals substantial variability in Spearman correlations across clients, ranging from $0.0613$ (Client 5) to $0.6126$ (Client 2) in the last round, with high correlations indicating effective personalization and low correlations suggesting challenges in capturing nuanced preferences.
The upward trend in median Spearman correlations (Figure~\ref{fig:movielens-distribution-k10} bottom) demonstrates the framework's increasing ability to develop personalized models aligned with individual preferences, while steadily improving global performance (Figure~\ref{fig:movielens-distribution-k10} top).


\paragraph{Scaling to $K=50$ Clients}
Similar trends were observed when scaling to $K=50$ clients, with details provided in 
Appendix~\ref{appendix:para:movielens-results-k50}.

\begin{figure}[t]
    \centering
    \includegraphics[width=0.7\linewidth]{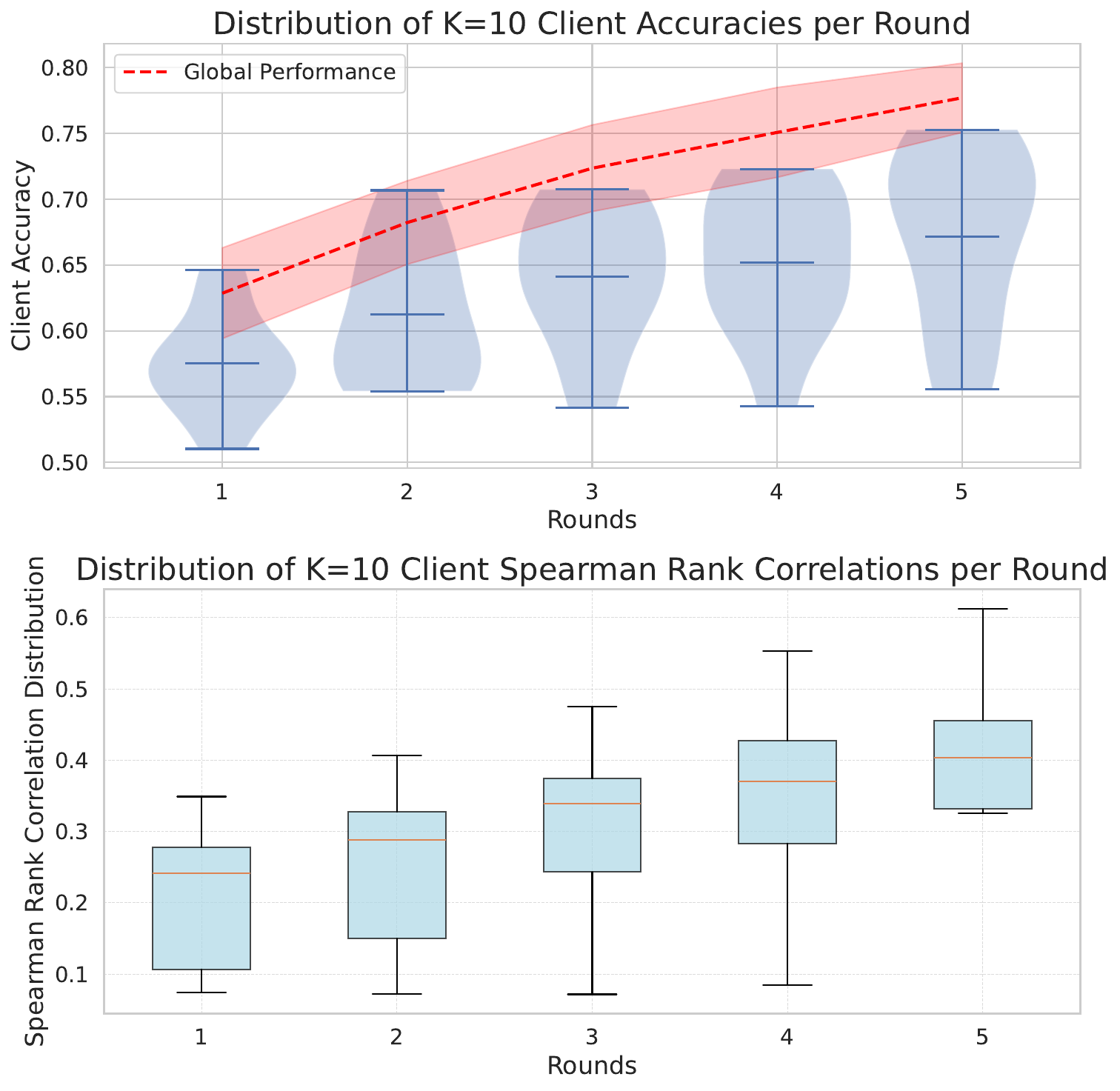}
    \caption{Distribution of $K=10$ client accuracies and Spearman rank correlations per round for the MovieLens task.}
    \label{fig:movielens-distribution-k10}
    \Description{Violin and box plots showing the distribution of client accuracies and Spearman rank correlations for the MovieLens task.}
\end{figure}

\subsection{Sentiment-Controlled Review Generation}

\subsubsection{Task Description and Setup}
In this task
we simulate multiple movie review platforms collaborating to fine-tune a language model for sentiment-controlled text generation without sharing data. 
Each client represents a distinct platform with its own collection of movie reviews, introducing natural data heterogeneity.
Using the IMDb dataset~\cite{maas2011learning}, we partitioned 50,000 reviews among $K=5$ clients, each receiving approximately 10,000 unique reviews.

\subsubsection{Implementation (details in Appendix \ref{appendix:subsec:IMDb_implementation_details})}
We employed a GPT-2 model~\cite{radford2019language} fine-tuned using PPO~\cite{schulman2017proximal-PPO} within the TRL library~\cite{vonwerra2022trl}. Clients conducted local RLHF training for 5 epochs per federated round, using Adam optimizer (learning rate $1\times10^{-5}$). Global aggregation used FedAvg~\cite{mcmahan2017communication} over 5 communication rounds.

\subsubsection{Human Feedback Simulation}
We simulated human feedback using a sentiment analysis model (DistilBERT~\cite{sanh2019distilbert} fine-tuned on IMDb) implemented locally on each client. The reward function combined sentiment score and language model log probability:
$R_k = \lambda_k \cdot R_{\text{sentiment}} + (1-\lambda_k) \cdot R^0_k
$
, where $R_{\text{sentiment}}$ is the sentiment alignment reward, $R^0_k$ is the intrinsic fluency reward and $\lambda_k \in [0,1]$ is a client-specific parameter controlling personalization. This formulation closely aligns with the reward shaping
in Equation~\ref{eq:reward-shaping} and validates Assumption~\ref{assumption:bounded-human-feedback}, allowing clients to personalize the importance of sentiment alignment.

\subsubsection{Results and Analysis} We report the results using a single random seed (42) to maintain consistency across experiments.


\begin{figure}[ht]
    \centering
    \begin{subfigure}[t]{0.99\linewidth}
        \centering
        \includegraphics[width=0.7\linewidth,height=0.73\linewidth]{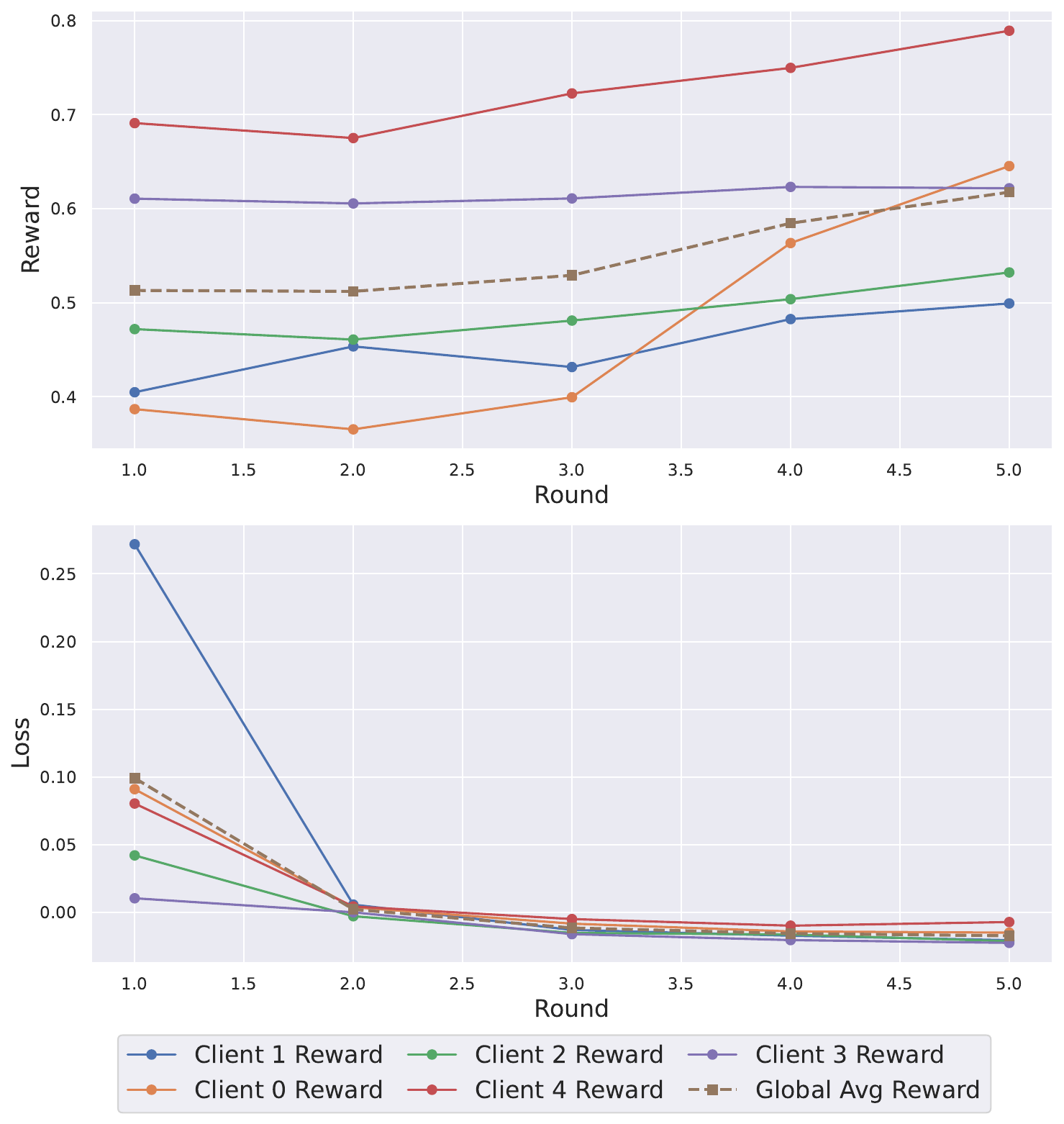}  
        \caption{Global and clients performance of FedRLHF in the IMDb task.}
        \label{fig:imdb_global_performance}
    \end{subfigure}
    \hspace{0.01\linewidth}
    \begin{subfigure}[t]{0.99\linewidth}
        \centering
                \includegraphics[width=0.7\linewidth]{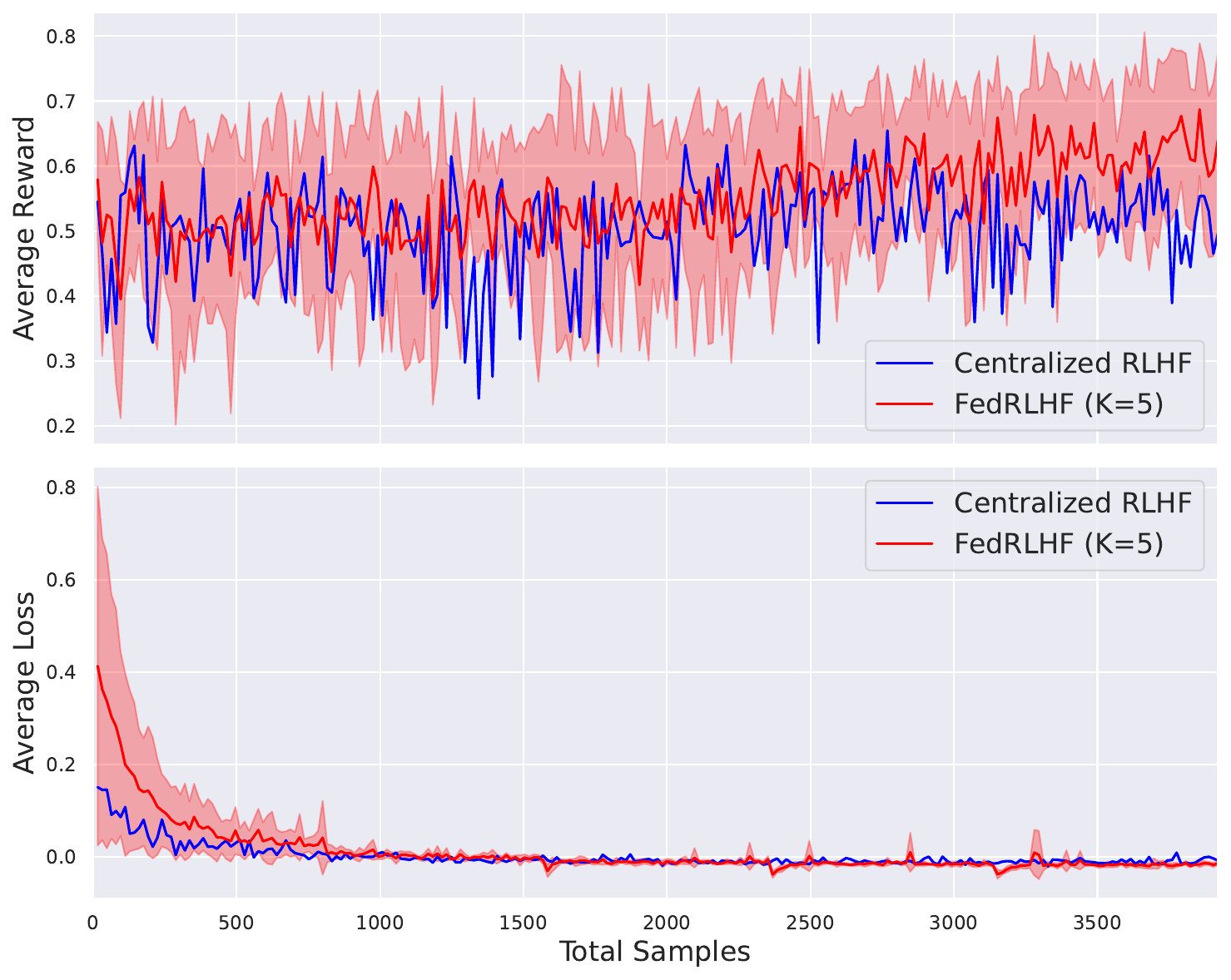}  
        \caption{Sample efficiency of FedRLHF ($K=5$) versus Centralized RLHF.}
        \label{fig:imdb_sample_efficiency_compare}
    \end{subfigure}
    \caption{Performance evaluation of FedRLHF in comparison to centralized RLHF. (a) Tracks the rewards and losses of clients and the global performance over federation rounds. 
    (b) Compares the sample efficiency of FedRLHF (K = 5) with centralized RLHF in terms of average rewards and losses over training samples.}
    \label{fig:combined}
\end{figure}

\paragraph{Comparison with Centralized RLHF}
Figure~\ref{fig:imdb_sample_efficiency_compare} compares the average rewards and losses between centralized RLHF and FedRLHF ($K=5$) over the total number of samples. The rewards comparison reveals that while the centralized model initially achieves slightly higher rewards, FedRLHF quickly catches up and even surpasses the centralized model's performance in later stages. 
This is evident from the FedRLHF reward curve (in red) consistently lying above the centralized RLHF curve (in blue) after approximately 1500 samples.
This
improvement arises from FedRLHF’s ability to leverage diverse client data and regular model aggregation, which enhance generalization and
reduce overfitting compared to the centralized approach. 
The loss comparison shows that both approaches achieve similar loss reduction. This result corroborates the sample complexity analysis in Theorem~\ref{theorem:sample-complexity}, indicating that FedRLHF can match or even exceed centralized performance while preserving privacy and distributing computation.

\paragraph{Personalization-Performance Trade-off}
The global average reward, represented by the dashed line in Figure~\ref{fig:imdb_global_performance}, shows steady improvement from approximately 0.52 to 0.68 over five rounds, indicating overall system convergence.
To analyze the trade-off between personalization and global performance in FedRLHF, we conducted a detailed evaluation of how clients' personalized objectives affected their individual rewards over the training rounds. For each client, we randomly sampled 30 queries from their evaluation dataset at the beginning of training and kept these queries fixed throughout all rounds. 
Each client was assigned a different personalization weight $\lambda_k$, ranging from 0.1 to 0.9. 
In each communication round, we supplied these 30 queries to the client's GPT model, recorded the generated responses, and calculated the corresponding average of intrinsic rewards ($R_{\text{intrinsic}}$), sentiment rewards ($R_{\text{sentiment}}$), and combined rewards ($R_k$), as shown in Figure~\ref{fig:visualization_sentiment_int_rewards}.

The results reveal distinct patterns aligned with the personalization weights $\lambda_k$. Client 0 ($\lambda_0 = 0.1$) prioritizes intrinsic rewards, while Client 1 ($\lambda_1 = 0.3$) shows more balanced improvement. Client 2 ($\lambda_2 = 0.5$) exhibits clear equilibrium between sentiment and intrinsic rewards. For Client 3 ($\lambda_3 = 0.7$), sentiment rewards dominate with a steady increase, and Client 4 ($\lambda_4 = 0.9$) demonstrates the highest emphasis on sentiment rewards. As $\lambda$ increases across clients, we observe a clear shift from intrinsic to sentiment reward prioritization, with combined rewards increasingly aligning with sentiment rewards for higher $\lambda$ values.



\begin{figure}[t]
    \centering
    \includegraphics[width=0.8\linewidth, height=0.7\linewidth]{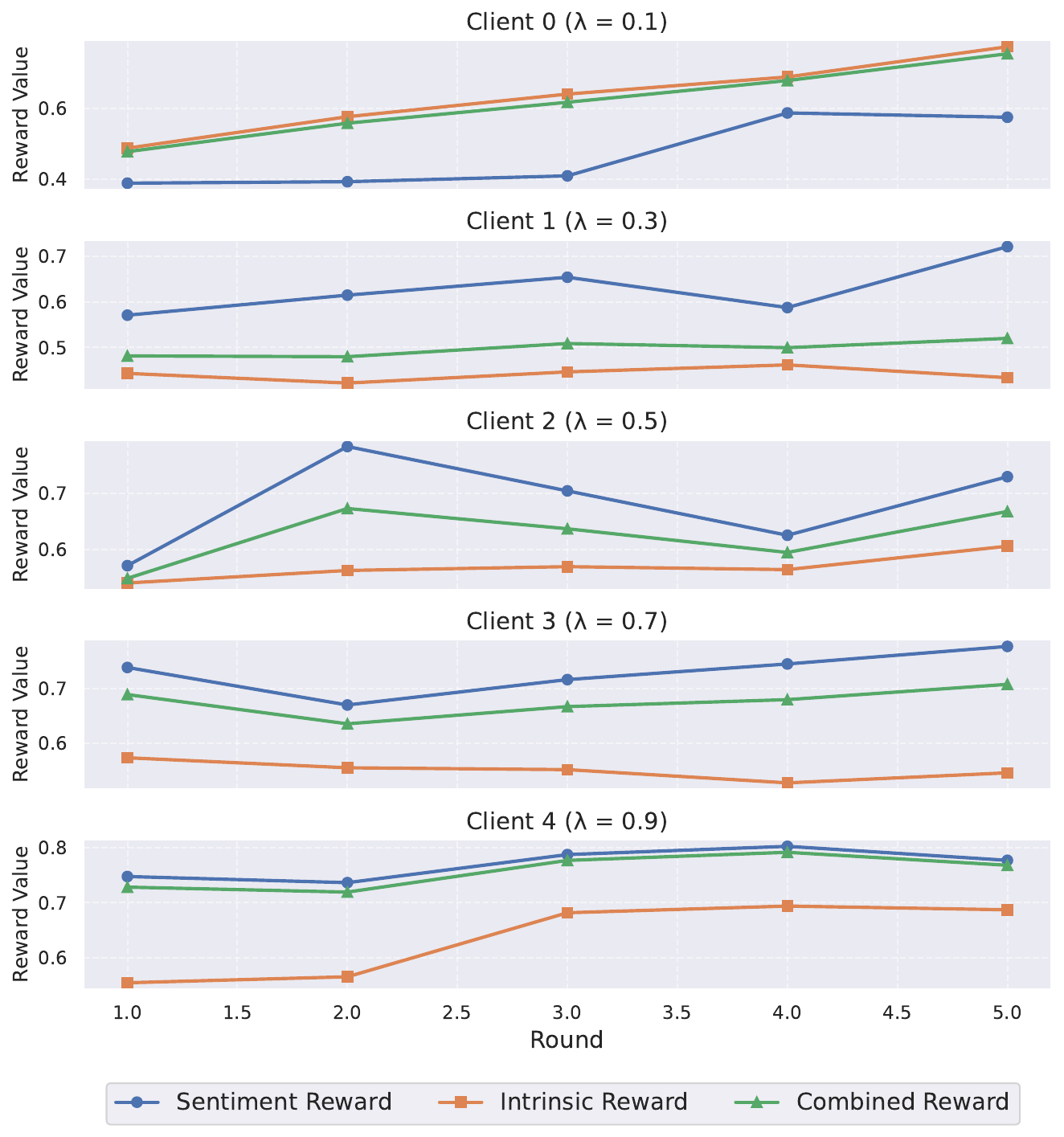}
    \caption{Trends of intrinsic rewards, sentiment rewards, and combined rewards over communication rounds for each client. Each subplot corresponds to one client, illustrating personalization effects due to varying $\lambda_k$ values.}
        \Description{Trends of intrinsic rewards, sentiment rewards, and combined rewards over communication rounds for each client in the IMDb task. Each subplot corresponds to one client, illustrating personalization effects due to varying $\lambda_k$ values.}
    \label{fig:visualization_sentiment_int_rewards}
\end{figure}

\section{Conclusion and Future Work}\label{sec:conclusion}

In this work, we have introduced FedRLHF, a novel framework that integrates federated reinforcement learning principles with RLHF to address privacy and personalization challenges of data centralization.
Our theoretical analysis provides convergence guarantees and sample complexity bounds, demonstrating stable, linear convergence. The personalization-performance trade-off analysis shows how FedRLHF balances global performance with individual client needs. 
Empirical evaluations on MovieLens and IMDb validate our approach, achieving results comparable to centralized RLHF while preserving privacy and enhancing personalization.

Future work will focus on enhancing FedRLHF's robustness through advanced aggregation techniques and strengthening privacy preservation by integration of formal privacy guarantees, such as differential privacy.
Additionally, we aim to investigate the trade-off between communication efficiency and personalization, optimizing FedRLHF's performance by balancing communication overhead with personalized model adaptations in federated environments.



\bibliographystyle{ACM-Reference-Format} 
\bibliography{sample}


\begin{thebibliography}{50}


\ifx \showCODEN    \undefined \def \showCODEN     #1{\unskip}     \fi
\ifx \showDOI      \undefined \def \showDOI       #1{#1}\fi
\ifx \showISBNx    \undefined \def \showISBNx     #1{\unskip}     \fi
\ifx \showISBNxiii \undefined \def \showISBNxiii  #1{\unskip}     \fi
\ifx \showISSN     \undefined \def \showISSN      #1{\unskip}     \fi
\ifx \showLCCN     \undefined \def \showLCCN      #1{\unskip}     \fi
\ifx \shownote     \undefined \def \shownote      #1{#1}          \fi
\ifx \showarticletitle \undefined \def \showarticletitle #1{#1}   \fi
\ifx \showURL      \undefined \def \showURL       {\relax}        \fi
\providecommand\bibfield[2]{#2}
\providecommand\bibinfo[2]{#2}
\providecommand\natexlab[1]{#1}
\providecommand\showeprint[2][]{arXiv:#2}

\bibitem[\protect\citeauthoryear{Beutel, Topal, Mathur, Qiu, Parcollet, and Lane}{Beutel et~al\mbox{.}}{2020}]%
        {beutel2020flower}
\bibfield{author}{\bibinfo{person}{Daniel~J Beutel}, \bibinfo{person}{Taner Topal}, \bibinfo{person}{Akhil Mathur}, \bibinfo{person}{Xinchi Qiu}, \bibinfo{person}{Titouan Parcollet}, {and} \bibinfo{person}{Nicholas~D Lane}.} \bibinfo{year}{2020}\natexlab{}.
\newblock \showarticletitle{Flower: A friendly federated learning research framework}.
\newblock \bibinfo{journal}{\emph{arXiv preprint arXiv:2007.14390}} (\bibinfo{year}{2020}).
\newblock


\bibitem[\protect\citeauthoryear{Bhandari and Russo}{Bhandari and Russo}{2024}]%
        {bhandari2024global}
\bibfield{author}{\bibinfo{person}{Jalaj Bhandari} {and} \bibinfo{person}{Daniel Russo}.} \bibinfo{year}{2024}\natexlab{}.
\newblock \showarticletitle{Global optimality guarantees for policy gradient methods}.
\newblock \bibinfo{journal}{\emph{Operations Research}} (\bibinfo{year}{2024}).
\newblock


\bibitem[\protect\citeauthoryear{Christiano, Leike, Brown, Martic, Legg, and Amodei}{Christiano et~al\mbox{.}}{2017}]%
        {christiano2017deep}
\bibfield{author}{\bibinfo{person}{Paul~F Christiano}, \bibinfo{person}{Jan Leike}, \bibinfo{person}{Tom~B Brown}, \bibinfo{person}{Miljan Martic}, \bibinfo{person}{Shane Legg}, {and} \bibinfo{person}{Dario Amodei}.} \bibinfo{year}{2017}\natexlab{}.
\newblock \showarticletitle{Deep reinforcement learning from human preferences}. In \bibinfo{booktitle}{\emph{Advances in Neural Information Processing Systems}}, Vol.~\bibinfo{volume}{30}.
\newblock


\bibitem[\protect\citeauthoryear{Corecco, Piatti, Lanzend{\"o}rfer, Fan, and Wattenhofer}{Corecco et~al\mbox{.}}{2024}]%
        {corecco2024llm}
\bibfield{author}{\bibinfo{person}{Nathan Corecco}, \bibinfo{person}{Giorgio Piatti}, \bibinfo{person}{Luca~A Lanzend{\"o}rfer}, \bibinfo{person}{Flint~Xiaofeng Fan}, {and} \bibinfo{person}{Roger Wattenhofer}.} \bibinfo{year}{2024}\natexlab{}.
\newblock \showarticletitle{An LLM-based Recommender System Environment}.
\newblock \bibinfo{journal}{\emph{arXiv preprint arXiv:2406.01631}} (\bibinfo{year}{2024}).
\newblock


\bibitem[\protect\citeauthoryear{Dai, Fan, Tan, Hoang, Low, and Jaillet}{Dai et~al\mbox{.}}{2024}]%
        {my-pub:DAI2024257}
\bibfield{author}{\bibinfo{person}{Zhongxiang Dai}, \bibinfo{person}{Flint~Xiaofeng Fan}, \bibinfo{person}{Cheston Tan}, \bibinfo{person}{Trong~Nghia Hoang}, \bibinfo{person}{Bryan Kian~Hsiang Low}, {and} \bibinfo{person}{Patrick Jaillet}.} \bibinfo{year}{2024}\natexlab{}.
\newblock \showarticletitle{Federated sequential decision making: Bayesian optimization, reinforcement learning, and beyond}.
\newblock In \bibinfo{booktitle}{\emph{Federated Learning}}. \bibinfo{publisher}{Elsevier}, \bibinfo{pages}{257--279}.
\newblock


\bibitem[\protect\citeauthoryear{Dai, Shu, Verma, Fan, Low, and Jaillet}{Dai et~al\mbox{.}}{2023}]%
        {my-pub:dai2023federated}
\bibfield{author}{\bibinfo{person}{Zhongxiang Dai}, \bibinfo{person}{Yao Shu}, \bibinfo{person}{Arun Verma}, \bibinfo{person}{Flint~Xiaofeng Fan}, \bibinfo{person}{Bryan Kian~Hsiang Low}, {and} \bibinfo{person}{Patrick Jaillet}.} \bibinfo{year}{2023}\natexlab{}.
\newblock \showarticletitle{Federated Neural Bandits}. In \bibinfo{booktitle}{\emph{Proceedings of the Eleventh International Conference on Learning Representations}}.
\newblock


\bibitem[\protect\citeauthoryear{Dwork, McSherry, Nissim, and Smith}{Dwork et~al\mbox{.}}{2006}]%
        {dwork2006calibrating}
\bibfield{author}{\bibinfo{person}{Cynthia Dwork}, \bibinfo{person}{Frank McSherry}, \bibinfo{person}{Kobbi Nissim}, {and} \bibinfo{person}{Adam Smith}.} \bibinfo{year}{2006}\natexlab{}.
\newblock \showarticletitle{Calibrating noise to sensitivity in private data analysis}. In \bibinfo{booktitle}{\emph{Theory of cryptography conference}}. Springer, \bibinfo{pages}{265--284}.
\newblock


\bibitem[\protect\citeauthoryear{{European Parliament and Council of the European Union}}{{European Parliament and Council of the European Union}}{2016}]%
        {GDPR2016}
\bibfield{author}{\bibinfo{person}{{European Parliament and Council of the European Union}}.} \bibinfo{year}{2016}\natexlab{}.
\newblock \bibinfo{title}{Regulation (EU) 2016/679 of the European Parliament and of the Council of 27 April 2016 on the protection of natural persons with regard to the processing of personal data and on the free movement of such data, and repealing Directive 95/46/EC (General Data Protection Regulation)}.
\newblock
\newblock
\urldef\tempurl%
\url{https://eur-lex.europa.eu/eli/reg/2016/679/oj}
\showURL{%
\tempurl}


\bibitem[\protect\citeauthoryear{Fan, Ma, Dai, Jing, Tan, and Low}{Fan et~al\mbox{.}}{2021}]%
        {my-pub:fedpg-br}
\bibfield{author}{\bibinfo{person}{Flint~Xiaofeng Fan}, \bibinfo{person}{Yining Ma}, \bibinfo{person}{Zhongxiang Dai}, \bibinfo{person}{Wei Jing}, \bibinfo{person}{Cheston Tan}, {and} \bibinfo{person}{Bryan Kian~Hsiang Low}.} \bibinfo{year}{2021}\natexlab{}.
\newblock \showarticletitle{Fault-Tolerant Federated Reinforcement Learning with Theoretical Guarantee}. In \bibinfo{booktitle}{\emph{Advances in Neural Information Processing Systems}}. \bibinfo{pages}{1007--1021}.
\newblock


\bibitem[\protect\citeauthoryear{Fan, Ma, Dai, Tan, Low, and Wattenhofer}{Fan et~al\mbox{.}}{2023}]%
        {my-pub:fedhql-arxiv}
\bibfield{author}{\bibinfo{person}{Flint~Xiaofeng Fan}, \bibinfo{person}{Yining Ma}, \bibinfo{person}{Zhongxiang Dai}, \bibinfo{person}{Cheston Tan}, \bibinfo{person}{Bryan Kian~Hsiang Low}, {and} \bibinfo{person}{Roger Wattenhofer}.} \bibinfo{year}{2023}\natexlab{}.
\newblock \bibinfo{booktitle}{\emph{Fedhql: Federated heterogeneous q-learning}}.
\newblock \bibinfo{type}{{arXiv}:2301.11135}.
\newblock


\bibitem[\protect\citeauthoryear{Fan, Tan, Ong, Wattenhofer, and Ooi}{Fan et~al\mbox{.}}{2024}]%
        {fan2024fedrlhf}
\bibfield{author}{\bibinfo{person}{Flint~Xiaofeng Fan}, \bibinfo{person}{Cheston Tan}, \bibinfo{person}{Yew-Soon Ong}, \bibinfo{person}{Roger Wattenhofer}, {and} \bibinfo{person}{Wei-Tsang Ooi}.} \bibinfo{year}{2024}\natexlab{}.
\newblock \showarticletitle{FedRLHF: A Convergence-Guaranteed Federated Framework for Privacy-Preserving and Personalized RLHF}.
\newblock \bibinfo{journal}{\emph{arXiv preprint arXiv:2412.15538}} (\bibinfo{year}{2024}).
\newblock
\urldef\tempurl%
\url{https://arxiv.org/abs/2412.15538}
\showURL{%
\tempurl}


\bibitem[\protect\citeauthoryear{Fujita, Fujimura, Sun, Esaki, and Ochiai}{Fujita et~al\mbox{.}}{2022}]%
        {FedRL-building}
\bibfield{author}{\bibinfo{person}{Koki Fujita}, \bibinfo{person}{Shugo Fujimura}, \bibinfo{person}{Yuwei Sun}, \bibinfo{person}{Hiroshi Esaki}, {and} \bibinfo{person}{Hideya Ochiai}.} \bibinfo{year}{2022}\natexlab{}.
\newblock \showarticletitle{Federated Reinforcement Learning for the Building Facilities}. In \bibinfo{booktitle}{\emph{2022 IEEE International Conference on Omni-layer Intelligent Systems (COINS)}}. \bibinfo{pages}{1--6}.
\newblock


\bibitem[\protect\citeauthoryear{Harper and Konstan}{Harper and Konstan}{2015}]%
        {harper2015movielens}
\bibfield{author}{\bibinfo{person}{F~Maxwell Harper} {and} \bibinfo{person}{Joseph~A Konstan}.} \bibinfo{year}{2015}\natexlab{}.
\newblock \showarticletitle{The MovieLens datasets: History and context}.
\newblock \bibinfo{journal}{\emph{ACM Transactions on Interactive Intelligent Systems (TiiS)}} \bibinfo{volume}{5}, \bibinfo{number}{4} (\bibinfo{year}{2015}), \bibinfo{pages}{1--19}.
\newblock


\bibitem[\protect\citeauthoryear{Jiang, Wang, Zhang, Bao, Tan, and Fan}{Jiang et~al\mbox{.}}{2025}]%
        {jiang2025fedhpd}
\bibfield{author}{\bibinfo{person}{Wenzheng Jiang}, \bibinfo{person}{Ji Wang}, \bibinfo{person}{Xiongtao Zhang}, \bibinfo{person}{Weidong Bao}, \bibinfo{person}{Cheston Tan}, {and} \bibinfo{person}{Flint~Xiaofeng Fan}.} \bibinfo{year}{2025}\natexlab{}.
\newblock \showarticletitle{FedHPD: Heterogeneous Federated Reinforcement Learning via Policy Distillation}.
\newblock \bibinfo{journal}{\emph{arXiv preprint arXiv:2502.00870}} (\bibinfo{year}{2025}).
\newblock


\bibitem[\protect\citeauthoryear{Jordan, Gr{\"o}tschla, Fan, and Wattenhofer}{Jordan et~al\mbox{.}}{2024}]%
        {my-pub:decbyzpg}
\bibfield{author}{\bibinfo{person}{Philip Jordan}, \bibinfo{person}{Florian Gr{\"o}tschla}, \bibinfo{person}{Flint~Xiaofeng Fan}, {and} \bibinfo{person}{Roger Wattenhofer}.} \bibinfo{year}{2024}\natexlab{}.
\newblock \showarticletitle{Decentralized Federated Policy Gradient with Byzantine Fault-Tolerance and Provably Fast Convergence}. In \bibinfo{booktitle}{\emph{Proceedings of the 2024 International Conference on Autonomous Agents and Multiagent Systems}}.
\newblock


\bibitem[\protect\citeauthoryear{Karimi, Nutini, and Schmidt}{Karimi et~al\mbox{.}}{2016}]%
        {karimi2016linear}
\bibfield{author}{\bibinfo{person}{Hamed Karimi}, \bibinfo{person}{Julie Nutini}, {and} \bibinfo{person}{Mark Schmidt}.} \bibinfo{year}{2016}\natexlab{}.
\newblock \showarticletitle{Linear convergence of gradient and proximal-gradient methods under the polyak-{\l}ojasiewicz condition}. In \bibinfo{booktitle}{\emph{Machine Learning and Knowledge Discovery in Databases: European Conference, ECML PKDD 2016, Riva del Garda, Italy, September 19-23, 2016, Proceedings, Part I 16}}. Springer, \bibinfo{pages}{795--811}.
\newblock


\bibitem[\protect\citeauthoryear{Khodadadian, Sharma, Joshi, and Maguluri}{Khodadadian et~al\mbox{.}}{2022}]%
        {frl-linear}
\bibfield{author}{\bibinfo{person}{Sajad Khodadadian}, \bibinfo{person}{Pranay Sharma}, \bibinfo{person}{Gauri Joshi}, {and} \bibinfo{person}{Siva~Theja Maguluri}.} \bibinfo{year}{2022}\natexlab{}.
\newblock \showarticletitle{Federated Reinforcement Learning: Linear Speedup Under Markovian Sampling}. In \bibinfo{booktitle}{\emph{Proceedings of the 39th International Conference on Machine Learning (ICML)}}.
\newblock


\bibitem[\protect\citeauthoryear{Kingma and Ba}{Kingma and Ba}{2014}]%
        {kingma2014adam}
\bibfield{author}{\bibinfo{person}{Diederik~P Kingma} {and} \bibinfo{person}{Jimmy Ba}.} \bibinfo{year}{2014}\natexlab{}.
\newblock \bibinfo{booktitle}{\emph{Adam: A method for stochastic optimization}}.
\newblock \bibinfo{type}{{arXiv}:1412.6980}.
\newblock


\bibitem[\protect\citeauthoryear{Li, Sahu, Talwalkar, and Smith}{Li et~al\mbox{.}}{2020}]%
        {li2020federated}
\bibfield{author}{\bibinfo{person}{Tian Li}, \bibinfo{person}{Anit~Kumar Sahu}, \bibinfo{person}{Ameet Talwalkar}, {and} \bibinfo{person}{Virginia Smith}.} \bibinfo{year}{2020}\natexlab{}.
\newblock \showarticletitle{Federated learning: Challenges, methods, and future directions}.
\newblock \bibinfo{journal}{\emph{IEEE signal processing magazine}} \bibinfo{volume}{37}, \bibinfo{number}{3} (\bibinfo{year}{2020}), \bibinfo{pages}{50--60}.
\newblock


\bibitem[\protect\citeauthoryear{Li, Lipton, and Leqi}{Li et~al\mbox{.}}{2024}]%
        {li2024personalized}
\bibfield{author}{\bibinfo{person}{Xinyu Li}, \bibinfo{person}{Zachary~C Lipton}, {and} \bibinfo{person}{Liu Leqi}.} \bibinfo{year}{2024}\natexlab{}.
\newblock \showarticletitle{Personalized language modeling from personalized human feedback}.
\newblock \bibinfo{journal}{\emph{arXiv preprint arXiv:2402.05133}} (\bibinfo{year}{2024}).
\newblock


\bibitem[\protect\citeauthoryear{Liu, Wang, and Liu}{Liu et~al\mbox{.}}{2019}]%
        {liu2019FedRL-for-robots}
\bibfield{author}{\bibinfo{person}{Boyi Liu}, \bibinfo{person}{Lujia Wang}, {and} \bibinfo{person}{Ming Liu}.} \bibinfo{year}{2019}\natexlab{}.
\newblock \showarticletitle{Lifelong federated reinforcement learning: a learning architecture for navigation in cloud robotic systems}.
\newblock \bibinfo{journal}{\emph{IEEE Robotics and Automation Letters}} \bibinfo{volume}{4}, \bibinfo{number}{4} (\bibinfo{year}{2019}), \bibinfo{pages}{4555--4562}.
\newblock


\bibitem[\protect\citeauthoryear{Maas, Daly, Pham, Huang, Ng, and Potts}{Maas et~al\mbox{.}}{2011}]%
        {maas2011learning}
\bibfield{author}{\bibinfo{person}{Andrew~L Maas}, \bibinfo{person}{Raymond~E Daly}, \bibinfo{person}{Peter~T Pham}, \bibinfo{person}{Dan Huang}, \bibinfo{person}{Andrew~Y Ng}, {and} \bibinfo{person}{Christopher Potts}.} \bibinfo{year}{2011}\natexlab{}.
\newblock \showarticletitle{Learning Word Vectors for Sentiment Analysis}. In \bibinfo{booktitle}{\emph{Proceedings of the 49th Annual Meeting of the Association for Computational Linguistics: Human Language Technologies}}. \bibinfo{pages}{142--150}.
\newblock


\bibitem[\protect\citeauthoryear{Mak, Fan, Lanzend{\"o}rfer, Tan, Ooi, and Wattenhofer}{Mak et~al\mbox{.}}{2024}]%
        {my-pub:caesar}
\bibfield{author}{\bibinfo{person}{Hei~Yi Mak}, \bibinfo{person}{Flint~Xiaofeng Fan}, \bibinfo{person}{Luca~A Lanzend{\"o}rfer}, \bibinfo{person}{Cheston Tan}, \bibinfo{person}{Wei~Tsang Ooi}, {and} \bibinfo{person}{Roger Wattenhofer}.} \bibinfo{year}{2024}\natexlab{}.
\newblock \showarticletitle{CAESAR: Enhancing Federated RL in Heterogeneous MDPs through Convergence-Aware Sampling with Screening}. In \bibinfo{booktitle}{\emph{Proceedings of the Sixteenth Workshop on Adaptive and Learning Agents}}.
\newblock


\bibitem[\protect\citeauthoryear{McMahan, Moore, Ramage, Hampson, and y~Arcas}{McMahan et~al\mbox{.}}{2017}]%
        {mcmahan2017communication}
\bibfield{author}{\bibinfo{person}{H~Brendan McMahan}, \bibinfo{person}{Eider Moore}, \bibinfo{person}{Daniel Ramage}, \bibinfo{person}{Seth Hampson}, {and} \bibinfo{person}{Blaise~Aguera y Arcas}.} \bibinfo{year}{2017}\natexlab{}.
\newblock \showarticletitle{Communication-efficient learning of deep networks from decentralized data}. In \bibinfo{booktitle}{\emph{Proceedings of the 20th International Conference on Artificial Intelligence and Statistics}}. PMLR, \bibinfo{pages}{1273--1282}.
\newblock


\bibitem[\protect\citeauthoryear{{OpenAI}}{{OpenAI}}{2023}]%
        {openai2023chatgpt}
\bibfield{author}{\bibinfo{person}{{OpenAI}}.} \bibinfo{year}{2023}\natexlab{}.
\newblock \bibinfo{title}{{ChatGPT}}.
\newblock \bibinfo{howpublished}{\url{https://openai.com/blog/chatgpt}}.
\newblock


\bibitem[\protect\citeauthoryear{Ouyang, Wu, Jiang, Almeida, Wainwright, Mishkin, Zhang, Agarwal, Slama, Ray, et~al\mbox{.}}{Ouyang et~al\mbox{.}}{2022}]%
        {ouyang2022training}
\bibfield{author}{\bibinfo{person}{Long Ouyang}, \bibinfo{person}{Jeffrey Wu}, \bibinfo{person}{Xu Jiang}, \bibinfo{person}{Diogo Almeida}, \bibinfo{person}{Carroll Wainwright}, \bibinfo{person}{Pamela Mishkin}, \bibinfo{person}{Chong Zhang}, \bibinfo{person}{Sandhini Agarwal}, \bibinfo{person}{Katarina Slama}, \bibinfo{person}{Alex Ray}, {et~al\mbox{.}}} \bibinfo{year}{2022}\natexlab{}.
\newblock \showarticletitle{Training language models to follow instructions with human feedback}. In \bibinfo{booktitle}{\emph{Advances in Neural Information Processing Systems}}.
\newblock


\bibitem[\protect\citeauthoryear{Papini, Binaghi, Canonaco, Pirotta, and Restelli}{Papini et~al\mbox{.}}{2018}]%
        {papini2018stochastic}
\bibfield{author}{\bibinfo{person}{Matteo Papini}, \bibinfo{person}{Damiano Binaghi}, \bibinfo{person}{Giuseppe Canonaco}, \bibinfo{person}{Matteo Pirotta}, {and} \bibinfo{person}{Marcello Restelli}.} \bibinfo{year}{2018}\natexlab{}.
\newblock \showarticletitle{Stochastic variance-reduced policy gradient}. In \bibinfo{booktitle}{\emph{International conference on machine learning}}. \bibinfo{pages}{4026--4035}.
\newblock


\bibitem[\protect\citeauthoryear{Park, Liu, Zhang, and Ozdaglar}{Park et~al\mbox{.}}{2024}]%
        {park2024principled}
\bibfield{author}{\bibinfo{person}{Chanwoo Park}, \bibinfo{person}{Mingyang Liu}, \bibinfo{person}{Kaiqing Zhang}, {and} \bibinfo{person}{Asuman Ozdaglar}.} \bibinfo{year}{2024}\natexlab{}.
\newblock \showarticletitle{Principled rlhf from heterogeneous feedback via personalization and preference aggregation}.
\newblock \bibinfo{journal}{\emph{arXiv preprint arXiv:2405.00254}} (\bibinfo{year}{2024}).
\newblock


\bibitem[\protect\citeauthoryear{Qiao, Zhang, Yue, Yuan, Cai, Zhang, Ren, and Yu}{Qiao et~al\mbox{.}}{2024}]%
        {10621347}
\bibfield{author}{\bibinfo{person}{Jing Qiao}, \bibinfo{person}{Zuyuan Zhang}, \bibinfo{person}{Sheng Yue}, \bibinfo{person}{Yuan Yuan}, \bibinfo{person}{Zhipeng Cai}, \bibinfo{person}{Xiao Zhang}, \bibinfo{person}{Ju Ren}, {and} \bibinfo{person}{Dongxiao Yu}.} \bibinfo{year}{2024}\natexlab{}.
\newblock \showarticletitle{BR-DeFedRL: Byzantine-Robust Decentralized Federated Reinforcement Learning with Fast Convergence and Communication Efficiency}. In \bibinfo{booktitle}{\emph{IEEE INFOCOM 2024 - IEEE Conference on Computer Communications}}. \bibinfo{pages}{141--150}.
\newblock
\urldef\tempurl%
\url{https://doi.org/10.1109/INFOCOM52122.2024.10621347}
\showDOI{\tempurl}


\bibitem[\protect\citeauthoryear{Radford, Wu, Child, Luan, Amodei, and Sutskever}{Radford et~al\mbox{.}}{2019}]%
        {radford2019language}
\bibfield{author}{\bibinfo{person}{Alec Radford}, \bibinfo{person}{Jeffrey Wu}, \bibinfo{person}{Rewon Child}, \bibinfo{person}{David Luan}, \bibinfo{person}{Dario Amodei}, {and} \bibinfo{person}{Ilya Sutskever}.} \bibinfo{year}{2019}\natexlab{}.
\newblock \showarticletitle{Language models are unsupervised multitask learners}.
\newblock \bibinfo{journal}{\emph{OpenAI blog}} \bibinfo{volume}{1}, \bibinfo{number}{8} (\bibinfo{year}{2019}), \bibinfo{pages}{9}.
\newblock


\bibitem[\protect\citeauthoryear{Ramesh, Hu, Chaimalas, Mehta, Sessa, Ammar, and Bogunovic}{Ramesh et~al\mbox{.}}{2024}]%
        {ramesh2024group}
\bibfield{author}{\bibinfo{person}{Shyam~Sundhar Ramesh}, \bibinfo{person}{Yifan Hu}, \bibinfo{person}{Iason Chaimalas}, \bibinfo{person}{Viraj Mehta}, \bibinfo{person}{Pier~Giuseppe Sessa}, \bibinfo{person}{Haitham~Bou Ammar}, {and} \bibinfo{person}{Ilija Bogunovic}.} \bibinfo{year}{2024}\natexlab{}.
\newblock \showarticletitle{Group Robust Preference Optimization in Reward-free RLHF}.
\newblock \bibinfo{journal}{\emph{arXiv preprint arXiv:2405.20304}} (\bibinfo{year}{2024}).
\newblock


\bibitem[\protect\citeauthoryear{Sanh}{Sanh}{2019}]%
        {sanh2019distilbert}
\bibfield{author}{\bibinfo{person}{V Sanh}.} \bibinfo{year}{2019}\natexlab{}.
\newblock \showarticletitle{DistilBERT, A Distilled Version of BERT: Smaller, Faster, Cheaper and Lighter}.
\newblock \bibinfo{journal}{\emph{arXiv preprint arXiv:1910.01108}} (\bibinfo{year}{2019}).
\newblock


\bibitem[\protect\citeauthoryear{Schulman, Levine, Abbeel, Jordan, and Moritz}{Schulman et~al\mbox{.}}{2015}]%
        {schulman2015TRPO}
\bibfield{author}{\bibinfo{person}{John Schulman}, \bibinfo{person}{Sergey Levine}, \bibinfo{person}{Pieter Abbeel}, \bibinfo{person}{Michael Jordan}, {and} \bibinfo{person}{Philipp Moritz}.} \bibinfo{year}{2015}\natexlab{}.
\newblock \showarticletitle{Trust region policy optimization}. In \bibinfo{booktitle}{\emph{ICML}}. \bibinfo{pages}{1889--1897}.
\newblock


\bibitem[\protect\citeauthoryear{Schulman, Wolski, Dhariwal, Radford, and Klimov}{Schulman et~al\mbox{.}}{2017}]%
        {schulman2017proximal-PPO}
\bibfield{author}{\bibinfo{person}{John Schulman}, \bibinfo{person}{Filip Wolski}, \bibinfo{person}{Prafulla Dhariwal}, \bibinfo{person}{Alec Radford}, {and} \bibinfo{person}{Oleg Klimov}.} \bibinfo{year}{2017}\natexlab{}.
\newblock \showarticletitle{Proximal policy optimization algorithms}.
\newblock \bibinfo{journal}{\emph{arXiv preprint arXiv:1707.06347}} (\bibinfo{year}{2017}).
\newblock


\bibitem[\protect\citeauthoryear{Shokri and Shmatikov}{Shokri and Shmatikov}{2015}]%
        {shokri2015privacy}
\bibfield{author}{\bibinfo{person}{Reza Shokri} {and} \bibinfo{person}{Vitaly Shmatikov}.} \bibinfo{year}{2015}\natexlab{}.
\newblock \showarticletitle{Privacy-preserving deep learning}. In \bibinfo{booktitle}{\emph{Proceedings of the 22nd ACM SIGSAC Conference on Computer and Communications Security}}. ACM, \bibinfo{pages}{1310--1321}.
\newblock


\bibitem[\protect\citeauthoryear{Smith, Smola, and Talwalkar}{Smith et~al\mbox{.}}{2017}]%
        {smith2017federated}
\bibfield{author}{\bibinfo{person}{Virginia Smith}, \bibinfo{person}{Dmitriy Smola}, {and} \bibinfo{person}{Ameet Talwalkar}.} \bibinfo{year}{2017}\natexlab{}.
\newblock \showarticletitle{Federated multi-task learning}. In \bibinfo{booktitle}{\emph{Advances in Neural Information Processing Systems}}, Vol.~\bibinfo{volume}{30}.
\newblock


\bibitem[\protect\citeauthoryear{Stiennon, Ouyang, Wu, Ziegler, Lowe, Voss, Radford, Amodei, and Christiano}{Stiennon et~al\mbox{.}}{2020}]%
        {stiennon2020learning}
\bibfield{author}{\bibinfo{person}{Nisan Stiennon}, \bibinfo{person}{Long Ouyang}, \bibinfo{person}{Jeffrey Wu}, \bibinfo{person}{Daniel~M Ziegler}, \bibinfo{person}{Ryan Lowe}, \bibinfo{person}{Chelsea Voss}, \bibinfo{person}{Alec Radford}, \bibinfo{person}{Dario Amodei}, {and} \bibinfo{person}{Paul Christiano}.} \bibinfo{year}{2020}\natexlab{}.
\newblock \showarticletitle{Learning to summarize with human feedback}. In \bibinfo{booktitle}{\emph{NeurIPS}}, Vol.~\bibinfo{volume}{33}. \bibinfo{pages}{3008--3021}.
\newblock


\bibitem[\protect\citeauthoryear{{United States Congress}}{{United States Congress}}{1996}]%
        {hipaa1996act}
\bibfield{author}{\bibinfo{person}{{United States Congress}}.} \bibinfo{year}{1996}\natexlab{}.
\newblock \bibinfo{title}{Health Insurance Portability and Accountability Act of 1996}.
\newblock \bibinfo{howpublished}{\url{https://www.hhs.gov/hipaa/index.html}}.
\newblock


\bibitem[\protect\citeauthoryear{von Werra, Belkada, Tunstall, Beeching, Thrush, Lambert, Huang, Rasul, and Gallouédec}{von Werra et~al\mbox{.}}{2020}]%
        {vonwerra2022trl}
\bibfield{author}{\bibinfo{person}{Leandro von Werra}, \bibinfo{person}{Younes Belkada}, \bibinfo{person}{Lewis Tunstall}, \bibinfo{person}{Edward Beeching}, \bibinfo{person}{Tristan Thrush}, \bibinfo{person}{Nathan Lambert}, \bibinfo{person}{Shengyi Huang}, \bibinfo{person}{Kashif Rasul}, {and} \bibinfo{person}{Quentin Gallouédec}.} \bibinfo{year}{2020}\natexlab{}.
\newblock \bibinfo{title}{TRL: Transformer Reinforcement Learning}.
\newblock
\newblock


\bibitem[\protect\citeauthoryear{Wang, He, Zhang, Miao, and Anderson}{Wang et~al\mbox{.}}{2024}]%
        {wang2024momentum}
\bibfield{author}{\bibinfo{person}{Han Wang}, \bibinfo{person}{Sihong He}, \bibinfo{person}{Zhili Zhang}, \bibinfo{person}{Fei Miao}, {and} \bibinfo{person}{James Anderson}.} \bibinfo{year}{2024}\natexlab{}.
\newblock \showarticletitle{Momentum for the Win: Collaborative Federated Reinforcement Learning across Heterogeneous Environments}.
\newblock \bibinfo{journal}{\emph{arXiv preprint arXiv:2405.19499}} (\bibinfo{year}{2024}).
\newblock


\bibitem[\protect\citeauthoryear{Wang, Wang, Li, Leung, and Taleb}{Wang et~al\mbox{.}}{2020}]%
        {wang2020federated-FedRL-5}
\bibfield{author}{\bibinfo{person}{Xiaofei Wang}, \bibinfo{person}{Chenyang Wang}, \bibinfo{person}{Xiuhua Li}, \bibinfo{person}{Victor~CM Leung}, {and} \bibinfo{person}{Tarik Taleb}.} \bibinfo{year}{2020}\natexlab{}.
\newblock \showarticletitle{Federated deep reinforcement learning for Internet of Things with decentralized cooperative edge caching}.
\newblock \bibinfo{journal}{\emph{IEEE IoT-J}} \bibinfo{volume}{7}, \bibinfo{number}{10} (\bibinfo{year}{2020}), \bibinfo{pages}{9441--9455}.
\newblock


\bibitem[\protect\citeauthoryear{Watkins}{Watkins}{1989}]%
        {watkins1989learning-Q-learning}
\bibfield{author}{\bibinfo{person}{Christopher John Cornish~Hellaby Watkins}.} \bibinfo{year}{1989}\natexlab{}.
\newblock \bibinfo{booktitle}{\emph{Learning from delayed rewards}}.
\newblock \bibinfo{type}{Ph{D thesis, University of Cambridge England}}.
\newblock


\bibitem[\protect\citeauthoryear{Williams}{Williams}{1992}]%
        {williams1992REINFORCE}
\bibfield{author}{\bibinfo{person}{Ronald~J Williams}.} \bibinfo{year}{1992}\natexlab{}.
\newblock \showarticletitle{Simple statistical gradient-following algorithms for connectionist reinforcement learning}.
\newblock \bibinfo{journal}{\emph{Machine learning}} \bibinfo{volume}{8}, \bibinfo{number}{3-4} (\bibinfo{year}{1992}), \bibinfo{pages}{229--256}.
\newblock


\bibitem[\protect\citeauthoryear{Woo, Joshi, and Chi}{Woo et~al\mbox{.}}{2023}]%
        {frl-pmlr-v202-woo23a}
\bibfield{author}{\bibinfo{person}{Jiin Woo}, \bibinfo{person}{Gauri Joshi}, {and} \bibinfo{person}{Yuejie Chi}.} \bibinfo{year}{2023}\natexlab{}.
\newblock \showarticletitle{The Blessing of Heterogeneity in Federated Q-Learning: Linear Speedup and Beyond}. In \bibinfo{booktitle}{\emph{Proceedings of the 40th International Conference on Machine Learning}}.
\newblock


\bibitem[\protect\citeauthoryear{Woo, Shi, Joshi, and Chi}{Woo et~al\mbox{.}}{2024}]%
        {woo2024federated}
\bibfield{author}{\bibinfo{person}{Jiin Woo}, \bibinfo{person}{Laixi Shi}, \bibinfo{person}{Gauri Joshi}, {and} \bibinfo{person}{Yuejie Chi}.} \bibinfo{year}{2024}\natexlab{}.
\newblock \showarticletitle{Federated offline reinforcement learning: Collaborative single-policy coverage suffices}.
\newblock \bibinfo{journal}{\emph{arXiv preprint arXiv:2402.05876}} (\bibinfo{year}{2024}).
\newblock


\bibitem[\protect\citeauthoryear{Yin, Chen, Ramchandran, and Bartlett}{Yin et~al\mbox{.}}{2018}]%
        {yin2018byzantine}
\bibfield{author}{\bibinfo{person}{Dong Yin}, \bibinfo{person}{Yudong Chen}, \bibinfo{person}{Kannan Ramchandran}, {and} \bibinfo{person}{Peter Bartlett}.} \bibinfo{year}{2018}\natexlab{}.
\newblock \bibinfo{booktitle}{\emph{Byzantine-robust distributed learning: Towards optimal statistical rates}}.
\newblock \bibinfo{type}{{arXiv}:1803.01498}.
\newblock


\bibitem[\protect\citeauthoryear{Yu, Chen, Zhou, Gong, and Wu}{Yu et~al\mbox{.}}{2020}]%
        {yu2020FedRLfor5G}
\bibfield{author}{\bibinfo{person}{Shuai Yu}, \bibinfo{person}{Xu Chen}, \bibinfo{person}{Zhi Zhou}, \bibinfo{person}{Xiaowen Gong}, {and} \bibinfo{person}{Di Wu}.} \bibinfo{year}{2020}\natexlab{}.
\newblock \showarticletitle{When deep reinforcement learning meets federated learning: Intelligent multi-timescale resource management for multi-access edge computing in 5{G} ultra dense network}.
\newblock \bibinfo{journal}{\emph{IEEE Internet of Things Journal}} (\bibinfo{year}{2020}).
\newblock


\bibitem[\protect\citeauthoryear{Yuan, Gower, and Lazaric}{Yuan et~al\mbox{.}}{2022}]%
        {yuan2022general}
\bibfield{author}{\bibinfo{person}{Rui Yuan}, \bibinfo{person}{Robert~M Gower}, {and} \bibinfo{person}{Alessandro Lazaric}.} \bibinfo{year}{2022}\natexlab{}.
\newblock \showarticletitle{A general sample complexity analysis of vanilla policy gradient}. In \bibinfo{booktitle}{\emph{International Conference on Artificial Intelligence and Statistics}}. PMLR, \bibinfo{pages}{3332--3380}.
\newblock


\bibitem[\protect\citeauthoryear{Ziegler, Stiennon, Wu, Brown, Radford, and Amodei}{Ziegler et~al\mbox{.}}{2019}]%
        {ziegler2019fine}
\bibfield{author}{\bibinfo{person}{Daniel~M Ziegler}, \bibinfo{person}{Nisan Stiennon}, \bibinfo{person}{Jeffrey Wu}, \bibinfo{person}{Tom~B Brown}, \bibinfo{person}{Alec Radford}, {and} \bibinfo{person}{Dario Amodei}.} \bibinfo{year}{2019}\natexlab{}.
\newblock \showarticletitle{Fine-Tuning Language Models from Human Preferences}.
\newblock \bibinfo{journal}{\emph{arXiv preprint arXiv:1909.08593}} (\bibinfo{year}{2019}).
\newblock


\bibitem[\protect\citeauthoryear{Zou, Lu, Narayanaswamy, and Agarwal}{Zou et~al\mbox{.}}{2019}]%
        {zou2019reinforcement}
\bibfield{author}{\bibinfo{person}{James Zou}, \bibinfo{person}{Yu Lu}, \bibinfo{person}{Balakrishnan Narayanaswamy}, {and} \bibinfo{person}{Devi~Parikh Agarwal}.} \bibinfo{year}{2019}\natexlab{}.
\newblock \showarticletitle{Reinforcement learning to optimize long-term user engagement in recommender systems}. In \bibinfo{booktitle}{\emph{Proceedings of the 25th ACM SIGKDD KDD}}. \bibinfo{pages}{2810--2818}.
\newblock


\end{thebibliography}

\newpage
\appendix
\onecolumn

\section{Proofs of useful lemmas}\label{appendix:sec:proofs_useful_lemmas}
\renewcommand{\thesection}{\Alph{section}}
\setcounter{lemma}{0}
\renewcommand{\thelemma}{\thesection.\arabic{lemma}}

\begin{lemma}[Restatement of Lemma~~\ref{lemma:bounded-local-global-difference} Bounded Local-Global Difference]
Under Assumptions~\ref{assumption:L-smoothness}, \ref{assumption:G-bounded-gradients}, and \ref{assumption:bounded-variance}, 
for any communication round $t$ and client $k$, we have:
\[
\mathbb{E}\left[\|\theta_t^k - \theta_t\|^2\right] \leq \eta^2\tau^2(G^2 + \sigma^2)
\]
where $\theta_t^k$ is the local model of client $k$, $\theta_t$ is the global model, $\eta$ is the learning rate, $\tau$ is the number of local updates, $G$ is the gradient bound, and $\sigma^2$ is the variance bound.
\end{lemma}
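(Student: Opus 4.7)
The plan is to express the difference $\theta_t^k - \theta_t$ as the telescoping sum of the local stochastic updates and then bound the resulting squared norm term-by-term. Since client $k$ initializes $\theta_{t,0}^k = \theta_t$ and iterates $\theta_{t,i+1}^k = \theta_{t,i}^k + \eta \hat{g}_{t,i}^k$ for $i=0,\dots,\tau-1$, with $\theta_t^k = \theta_{t,\tau}^k$, we immediately get
\[
\theta_t^k - \theta_t \;=\; \eta \sum_{i=0}^{\tau-1} \hat{g}_{t,i}^k.
\]

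Next, I would apply the standard inequality $\|\sum_{i=0}^{\tau-1} x_i\|^2 \le \tau \sum_{i=0}^{\tau-1} \|x_i\|^2$ (Cauchy--Schwarz / power-mean) to get
\[
\bigl\|\theta_t^k - \theta_t\bigr\|^2 \;\le\; \eta^2 \tau \sum_{i=0}^{\tau-1} \bigl\|\hat{g}_{t,i}^k\bigr\|^2,
\]
take expectations, and then bound each $\mathbb{E}\bigl[\|\hat{g}_{t,i}^k\|^2\bigr]$ by combining Assumptions~\ref{assumption:G-bounded-gradients} and \ref{assumption:bounded-variance}. Specifically, using the unbiasedness of the stochastic gradient estimator so that
\[
\mathbb{E}\bigl[\|\hat{g}_{t,i}^k\|^2\bigr] = \mathbb{E}\bigl[\|\hat{g}_{t,i}^k - \nabla J_k(\theta_{t,i}^k)\|^2\bigr] + \|\nabla J_k(\theta_{t,i}^k)\|^2 \;\le\; \sigma^2 + G^2,
\]
where the first term is bounded by $\sigma^2$ and the second by $G^2$. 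Summing over the $\tau$ local steps then yields $\eta^2 \tau^2 (G^2 + \sigma^2)$ as claimed.

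There is essentially no substantive obstacle: the result is a clean application of Jensen's inequality together with a variance-plus-mean-square decomposition. The only subtlety worth a sentence of justification is that Assumption~\ref{assumption:bounded-second-moment} is not strictly needed here, since Assumptions~\ref{assumption:G-bounded-gradients} and \ref{assumption:bounded-variance} already jointly imply $\mathbb{E}[\|\hat{g}_{t,i}^k\|^2] \le G^2 + \sigma^2$ once unbiasedness of the policy-gradient estimator is invoked. I would therefore state this unbiasedness explicitly (it is standard for REINFORCE-style estimators used in line 10 of Algorithm~\ref{alg:fedrlhf}) before performing the decomposition, so that the argument is self-contained.
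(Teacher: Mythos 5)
Your proposal is correct and follows essentially the same route as the paper's proof: telescoping the local updates, applying the inequality $\|\sum_{i} x_i\|^2 \le \tau \sum_i \|x_i\|^2$, and bounding each per-step second moment by $G^2 + \sigma^2$ via the mean-plus-variance decomposition. Your explicit invocation of unbiasedness to kill the cross term is in fact the justification the paper's displayed inequality implicitly relies on, so your write-up is, if anything, slightly more careful.
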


\begin{proof}
We express the difference between the local and global models after $\tau$ local updates as:
\[
\theta_t^k - \theta_t = \sum_{i=0}^{\tau-1} \eta\nabla\hat{J}_k(\theta_{t,i}^k; B_{t,i}^k)
\]

Taking the squared norm and expectation:
\begin{align*}
\mathbb{E}\left[\|\theta_t^k - \theta_t\|^2\right] &= \mathbb{E}\left[\left\|\sum_{i=0}^{\tau-1} \eta\nabla\hat{J}_k(\theta_{t,i}^k; B_{t,i}^k)\right\|^2\right] \\
&= \eta^2\mathbb{E}\left[\left\|\sum_{i=0}^{\tau-1} \nabla\hat{J}_k(\theta_{t,i}^k; B_{t,i}^k)\right\|^2\right]
\end{align*}

Using the inequality $(a_1 + ... + a_n)^2 \leq n(a_1^2 + ... + a_n^2)$, which follows from Jensen's inequality:
\begin{align*}
\mathbb{E}\left[\|\theta_t^k - \theta_t\|^2\right] &\leq \eta^2\tau\sum_{i=0}^{\tau-1} \mathbb{E}\left[\|\nabla\hat{J}_k(\theta_{t,i}^k; B_{t,i}^k)\|^2\right]
\end{align*}

Now, we can use Assumptions~\ref{assumption:G-bounded-gradients} (G-bounded gradients) and~\ref{assumption:bounded-variance} ($\sigma$-bounded variance):
\begin{align*}
\mathbb{E}\left[\|\nabla\hat{J}_k(\theta_{t,i}^k; B_{t,i}^k)\|^2\right] &= \mathbb{E}\left[\|\nabla J_k(\theta_{t,i}^k) + (\nabla\hat{J}_k(\theta_{t,i}^k; B_{t,i}^k) - \nabla J_k(\theta_{t,i}^k))\|^2\right] \\
&\leq \mathbb{E}\left[\|\nabla J_k(\theta_{t,i}^k)\|^2\right] + \mathbb{E}\left[\|\nabla\hat{J}_k(\theta_{t,i}^k; B_{t,i}^k) - \nabla J_k(\theta_{t,i}^k)\|^2\right] \\
&\leq G^2 + \sigma^2
\end{align*}

Substituting this back:
\begin{align*}
\mathbb{E}\left[\|\theta_t^k - \theta_t\|^2\right] &\leq \eta^2\tau\sum_{i=0}^{\tau-1} (G^2 + \sigma^2) \\
&= \eta^2\tau^2(G^2 + \sigma^2)
\end{align*}

This completes the proof.
\end{proof}

\begin{lemma}[Restatement of Lemma~\ref{lemma:one-step-descent}]
Under Assumptions~\ref{assumption:L-smoothness}--\ref{assumption:bounded-human-feedback}, for any round $t$, the expected improvement in the global objective satisfies:
\begin{align*}
\mathbb{E}[J(\theta_{t+1})] &\geq J(\theta_t) + \eta\tau\left(1 - \frac{L\eta\tau}{2}\right)\|\nabla J(\theta_t)\|^2 - \frac{L}{2}\left(\frac{\eta^2\tau^2}{K}\right)(G^2 + \sigma^2) - \lambda H_{\max}
\end{align*}
where $\theta_{t+1}$ is the updated global model.
\end{lemma}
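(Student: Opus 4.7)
The plan is to apply the standard quadratic descent inequality coming from $L$-smoothness to $J$ and then bound the three types of terms it produces (progress, variance of the aggregated update, and bias due to human feedback) using Lemma~\ref{lemma:bounded-local-global-difference} for the client drift and Assumption~\ref{assumption:bounded-human-feedback} for the feedback. Since $J = \frac{1}{K}\sum_k J_k$ inherits $L$-smoothness from the individual $J_k$'s, the descent inequality reads
\[
J(\theta_{t+1}) \;\geq\; J(\theta_t) + \langle \nabla J(\theta_t),\, \theta_{t+1}-\theta_t\rangle - \frac{L}{2}\|\theta_{t+1}-\theta_t\|^2,
\]
and the FedRLHF aggregation step gives $\theta_{t+1}-\theta_t = \frac{\eta}{K}\sum_{k=1}^{K}\sum_{i=0}^{\tau-1}\hat g_{t,i}^{k}$. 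Taking conditional expectation and using unbiasedness of $\hat g_{t,i}^{k}$ with respect to $\nabla J_k(\theta_{t,i}^{k})$ reduces the problem to tracking how far the local iterates have drifted from $\theta_t$.

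For the inner-product term, I would decompose $\nabla J_k(\theta_{t,i}^{k}) = \nabla J_k(\theta_t) + [\nabla J_k(\theta_{t,i}^{k})-\nabla J_k(\theta_t)]$; averaging the first piece over $k$ yields the leading $\eta\tau\|\nabla J(\theta_t)\|^2$ contribution, while the client-drift piece is controlled via Cauchy--Schwarz and $L$-smoothness (converting $\|\nabla J_k(\theta_{t,i}^{k})-\nabla J_k(\theta_t)\|$ into $L\|\theta_{t,i}^{k}-\theta_t\|$), at which point Lemma~\ref{lemma:bounded-local-global-difference} is plugged in and a Young-type inequality absorbs the cross term into the $-\frac{L\eta\tau}{2}\|\nabla J(\theta_t)\|^2$ correction inside the first bracket of the statement. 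For the quadratic term $\|\theta_{t+1}-\theta_t\|^2$, I would split each stochastic gradient as mean plus zero-mean noise, exploit independence of the noise across clients so that the $\frac{1}{K^{2}}$ prefactor collapses cross-client cross terms, and invoke Assumptions~\ref{assumption:G-bounded-gradients} and~\ref{assumption:bounded-variance} to obtain exactly the $\frac{L}{2}\frac{\eta^{2}\tau^{2}}{K}(G^{2}+\sigma^{2})$ term.

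The remaining $-\lambda H_{\max}$ slack I would extract by isolating the human-feedback component of the reward: writing $J_k = J_k^{0}+\lambda J_k^{H}$ and using Assumption~\ref{assumption:bounded-human-feedback} to bound the per-round perturbation that the $\lambda H_k$ component injects, either into the function value through boundedness of $J_k^{H}$ or into the gradient through a bounded-feedback-gradient argument. I expect this to be the main obstacle: the $\lambda H_{\max}$ slack does not scale with $\eta$ or $\tau$, so it must come from a uniform one-shot bound on the feedback's effect rather than being propagated through the policy-gradient machinery, and obtaining it cleanly likely requires either a coarse uniform bound (absorbing a $1/(1-\gamma)$ factor into constants) or a policy-gradient-specific identity that separates the feedback-driven and intrinsic gradient components. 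A secondary subtlety is keeping the Young's inequality in the drift step tight enough that the coefficient of $\|\nabla J(\theta_t)\|^{2}$ is exactly $\eta\tau(1-L\eta\tau/2)$ without leaving extra negative contributions that would need to be re-absorbed elsewhere.
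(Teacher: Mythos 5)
Your proposal follows essentially the same route as the paper's proof: the $L$-smoothness descent inequality, the decomposition $\nabla J_k(\theta_{t,i}^k)=\nabla J_k(\theta_t)+[\nabla J_k(\theta_{t,i}^k)-\nabla J_k(\theta_t)]$ controlled via Lemma~\ref{lemma:bounded-local-global-difference} and a Young-type absorption into the $-\tfrac{L\eta\tau}{2}\|\nabla J(\theta_t)\|^2$ correction, and the cross-client independence argument that yields the $\tfrac{L}{2}\tfrac{\eta^2\tau^2}{K}(G^2+\sigma^2)$ term. The one place you flagged as the main obstacle --- extracting a $-\lambda H_{\max}$ slack that does not scale with $\eta$ or $\tau$ --- is exactly where the paper's proof is thinnest: it simply asserts that bounded feedback "introduces a worst-case adjustment of $\lambda H_{\max}$" and subtracts it, with no derivation separating $J_k^0$ from $\lambda J_k^H$; your suspicion that a careful argument would propagate a $1/(1-\gamma)$ factor (or require a gradient-level bound on the feedback component) is well founded, so on this point your proposal is, if anything, more honest about what remains to be shown than the proof it is being compared against.
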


\begin{proof}
From the $L$-smoothness of $J$ (since it's an average of $L$-smooth functions), we have:
\begin{align}
J(\theta_{t+1}) \geq J(\theta_t) + \langle \nabla J(\theta_t), \theta_{t+1} - \theta_t \rangle - \frac{L}{2} \|\theta_{t+1} - \theta_t\|^2.
\label{eq:L-smooth}
\end{align}

The global model update is:
\[
\theta_{t+1} = \theta_t + \frac{1}{K} \sum_{k=1}^K (\theta_t^k - \theta_t).
\]

Substituting $\theta_{t+1} - \theta_t$ into \eqref{eq:L-smooth}:
\begin{align*}
J(\theta_{t+1}) &\geq J(\theta_t) + \left\langle \nabla J(\theta_t), \frac{1}{K} \sum_{k=1}^K (\theta_t^k - \theta_t) \right\rangle - \frac{L}{2} \left\| \frac{1}{K} \sum_{k=1}^K (\theta_t^k - \theta_t) \right\|^2.
\end{align*}

Taking expectation over the stochasticity, we have:
\begin{align}
\mathbb{E}\left[ J(\theta_{t+1}) \right] &\geq J(\theta_t) + \frac{1}{K} \sum_{k=1}^K \mathbb{E}\left[ \left\langle \nabla J(\theta_t), \theta_t^k - \theta_t \right\rangle \right] - \frac{L}{2} \mathbb{E}\left[ \left\| \frac{1}{K} \sum_{k=1}^K (\theta_t^k - \theta_t) \right\|^2 \right].
\label{eq:expected-improvement}
\end{align}

\textbf{Bounding the First Term:}

We focus on the inner product term:
\[
\mathbb{E}\left[ \left\langle \nabla J(\theta_t), \theta_t^k - \theta_t \right\rangle \right] = \mathbb{E}\left[ \left\langle \nabla J(\theta_t), \sum_{i=0}^{\tau-1} \eta \nabla \hat{J}_k(\theta_{t,i}^k) \right\rangle \right].
\]

We can decompose the stochastic gradient into:
\[
\nabla \hat{J}_k(\theta_{t,i}^k) = \nabla J_k(\theta_{t,i}^k) + \delta_{t,i}^k,
\]
where $\delta_{t,i}^k = \nabla \hat{J}_k(\theta_{t,i}^k) - \nabla J_k(\theta_{t,i}^k)$ is the stochastic gradient noise with zero mean and variance bounded by $\sigma^2$.

Therefore, we have:
\begin{align*}
\mathbb{E}\left[ \left\langle \nabla J(\theta_t), \theta_t^k - \theta_t \right\rangle \right] &= \mathbb{E}\left[ \sum_{i=0}^{\tau-1} \eta \left\langle \nabla J(\theta_t), \nabla J_k(\theta_{t,i}^k) + \delta_{t,i}^k \right\rangle \right] \\
&= \eta \sum_{i=0}^{\tau-1} \mathbb{E}\left[ \left\langle \nabla J(\theta_t), \nabla J_k(\theta_{t,i}^k) \right\rangle \right],
\end{align*}
since $\mathbb{E}[\delta_{t,i}^k] = 0$.

We can further decompose $\nabla J_k(\theta_{t,i}^k)$:
\[
\nabla J_k(\theta_{t,i}^k) = \nabla J_k(\theta_t) + \left( \nabla J_k(\theta_{t,i}^k) - \nabla J_k(\theta_t) \right).
\]

By Lipschitz continuity (Assumption~\ref{assumption:L-smoothness}):
\[
\left\| \nabla J_k(\theta_{t,i}^k) - \nabla J_k(\theta_t) \right\| \leq L \left\| \theta_{t,i}^k - \theta_t \right\|.
\]

Therefore, the inner product can be bounded using Cauchy-Schwarz:
\begin{align*}
\left\langle \nabla J(\theta_t), \nabla J_k(\theta_{t,i}^k) \right\rangle &= \left\langle \nabla J(\theta_t), \nabla J_k(\theta_t) \right\rangle + \left\langle \nabla J(\theta_t), \nabla J_k(\theta_{t,i}^k) - \nabla J_k(\theta_t) \right\rangle \\
&\geq \left\| \nabla J(\theta_t) \right\|^2 - \left\| \nabla J(\theta_t) \right\| L \left\| \theta_{t,i}^k - \theta_t \right\|.
\end{align*}

Taking expectation:
\begin{align*}
\mathbb{E}\left[ \left\langle \nabla J(\theta_t), \nabla J_k(\theta_{t,i}^k) \right\rangle \right] &\geq \left\| \nabla J(\theta_t) \right\|^2 - L \left\| \nabla J(\theta_t) \right\| \mathbb{E}\left[ \left\| \theta_{t,i}^k - \theta_t \right\| \right].
\end{align*}

Using Lemma~\ref{lemma:bounded-local-global-difference}, we have:
\[
\mathbb{E}\left[ \left\| \theta_{t,i}^k - \theta_t \right\| \right] \leq \sqrt{ \mathbb{E}\left[ \left\| \theta_{t,i}^k - \theta_t \right\|^2 \right] } \leq 
\eta \tau  \sqrt{G^2 + \sigma^2} .
\]

Substituting back:
\begin{align*}
\mathbb{E}\left[ \left\langle \nabla J(\theta_t), \nabla J_k(\theta_{t,i}^k) \right\rangle \right] &\geq \left\| \nabla J(\theta_t) \right\|^2 - L \left\| \nabla J(\theta_t) \right\| \eta \tau 
\sqrt{G^2 + \sigma^2}.
\end{align*}

Summing over $i$ and averaging over $k$:
\begin{align*}
\frac{1}{K} \sum_{k=1}^K \mathbb{E}\left[ \left\langle \nabla J(\theta_t), \theta_t^k - \theta_t \right\rangle \right] &\geq \eta \tau \left\| \nabla J(\theta_t) \right\|^2 - \eta^2 \tau^2 L \left\| \nabla J(\theta_t) \right\| 
\sqrt{G^2 + \sigma^2}
\end{align*}

For small $\eta$, the second term is of higher order and can be neglected in the first-order analysis, or bounded appropriately.

\textbf{Bounding the Second Term:}

For the squared norm term, we have:
\begin{align*}
\mathbb{E}\left[ \left\| \frac{1}{K} \sum_{k=1}^K (\theta_t^k - \theta_t) \right\|^2 \right] &= \frac{1}{K^2} \mathbb{E}\left[ \left\| \sum_{k=1}^K (\theta_t^k - \theta_t) \right\|^2 \right] \\
&= \frac{1}{K^2} \sum_{k=1}^K \mathbb{E}\left[ \left\| \theta_t^k - \theta_t \right\|^2 \right] + \frac{1}{K^2} \sum_{k \neq k'} \mathbb{E}\left[ \left\langle \theta_t^k - \theta_t, \theta_t^{k'} - \theta_t \right\rangle \right].
\end{align*}

Assuming independence across clients and zero-mean stochastic gradients, the cross terms vanish, and we have:
\[
\mathbb{E}\left[ \left\| \frac{1}{K} \sum_{k=1}^K (\theta_t^k - \theta_t) \right\|^2 \right] \leq \frac{1}{K} \mathbb{E}\left[ \left\| \theta_t^k - \theta_t \right\|^2 \right].
\]

Using Lemma~\ref{lemma:bounded-local-global-difference}:
\[
\mathbb{E}\left[ \left\| \frac{1}{K} \sum_{k=1}^K (\theta_t^k - \theta_t) \right\|^2 \right] \leq \frac{\eta^2 \tau^2}{K} \left( G^2 + \sigma^2 \right).
\]

\textbf{Combining Both Terms:}

Substituting the bounds back into \eqref{eq:expected-improvement}, we have:
\begin{align*}
\mathbb{E}\left[ J(\theta_{t+1}) \right] \geq J(\theta_t) + \eta \tau \left\| \nabla J(\theta_t) \right\|^2 - \eta^2 \tau^2 L \left\| \nabla J(\theta_t) \right\| 
\sqrt{G^2 + \sigma ^2}
- \frac{L}{2} \left( \frac{\eta^2 \tau^2}{K} 
\left( G^2 + \sigma^2 \right) \right).
\end{align*}

\begin{enumerate}
\item First, let's focus on the term $\eta^2\tau^2L\|\nabla J(\theta_t)\| \sqrt{G^2 + \sigma^2}$. We can bound this using the Cauchy-Schwarz inequality:

\begin{align*}
\eta^2\tau^2L\|\nabla J(\theta_t)\| \sqrt{G^2 + \sigma^2} &\leq \frac{\eta\tau L}{2}\|\nabla J(\theta_t)\|^2 + \frac{\eta^3\tau^3L}{2}(G^2 + \sigma^2)
\end{align*}

\item Substituting this bound back into our original inequality:

\begin{align*}
\mathbb{E}[J(\theta_{t+1})] &\geq J(\theta_t) + \eta\tau \|\nabla J(\theta_t)\|^2 - \frac{\eta\tau L}{2}\|\nabla J(\theta_t)\|^2 - \frac{\eta^3\tau^3L}{2}(G^2 + \sigma^2) - \frac{L}{2}\frac{\eta^2\tau^2}{K}(G^2 + \sigma^2) \\
&= J(\theta_t) + \eta\tau\left(1 - \frac{L\eta}{2}\right)\|\nabla J(\theta_t)\|^2 - \frac{L}{2}\left(\frac{\eta^2\tau^2}{K} + \eta^3\tau^3\right)(G^2 + \sigma^2)
\end{align*}

\item Assuming $\eta$ is small and neglecting higher-order terms, we simplify:

\begin{align*}
\mathbb{E}[J(\theta_{t+1})] &\geq  J(\theta_t) + \eta\tau\left(1 - \frac{L\eta}{2}\right)\|\nabla J(\theta_t)\|^2 - \frac{L}{2}\left(\frac{\eta^2\tau^2}{K} + \eta^3\tau^3\right)(G^2 + \sigma^2) \\
&\geq J(\theta_t) + \eta\tau\left(1 - \frac{L\eta}{2}\right)\|\nabla J(\theta_t)\|^2 - \frac{L}{2}\left(\frac{\eta^2\tau^2}{K} \right)(G^2 + \sigma^2)
\end{align*}

\end{enumerate}

\textbf{Accounting for human feedback:}
Since human feedback \( H_k(s, a) \) is bounded by \( H_{\max} \) (as per Assumption~\ref{assumption:bounded-human-feedback}), the aggregated impact of human feedback across all clients introduces a worst-case adjustment of \( \lambda H_{\max} \) to the global objective.
We subtract this term from our lower bound:
\begin{align*}
\mathbb{E}[J(\theta_{t+1})] &\geq J(\theta_t) + \eta\tau\left(1 - \frac{L\eta\tau}{2}\right)\|\nabla J(\theta_t)\|^2 - \frac{L}{2}\left(\frac{\eta^2\tau^2}{K}\right)(G^2 + \sigma^2) - \lambda H_{\max}
\end{align*}
 This ensures that while human feedback guides the policy towards user preferences, its influence remains controlled and does not detrimentally affect the convergence guarantees.
\end{proof}

\section{Proofs of main theorems for convergence and sample complexity}\label{appendix:sec:proofs_theorems_convergence}
\renewcommand{\thesection}{\Alph{section}}
\setcounter{theorem}{0}
\renewcommand{\thetheorem}{\thesection.\arabic{theorem}}

\begin{theorem}[Restatement of Theorem~\ref{theorem:convergence} Convergence of FedRLHF]
Under Assumptions~\ref{assumption:L-smoothness}--\ref{assumption:bounded-human-feedback}, if we choose the constant learning rate $\eta = \frac{1}{L\tau}$,
then the output \(\theta_{\text{avg}} = \frac{1}{T} \sum_{t=0}^{T-1} \theta_t\) of Algorithm~\ref{alg:fedrlhf} satisfies:
\begin{align*}
\mathbb{E}\left[ J(\theta^*) - J(\theta_{\text{avg}}) \right] \leq \frac{L}{\mu T} \left( J(\theta^*) - J(\theta_0) \right)  + \frac{1}{2\mu K}(G^2 + \sigma^2) + \frac{L}{\mu}\lambda H_{\max}.
\end{align*}
\end{theorem}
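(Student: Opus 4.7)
The plan is to combine Lemma~\ref{lemma:one-step-descent} with the Polyak--{\L}ojasiewicz condition to obtain a geometric contraction on the expected suboptimality, then unroll and average over the $T$ communication rounds.

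First, I would substitute $\eta = 1/(L\tau)$ into Lemma~\ref{lemma:one-step-descent}. The chosen step size produces the cancellations $\eta\tau = 1/L$, $1 - L\eta\tau/2 = 1/2$, and $\eta^2\tau^2/K = 1/(L^2 K)$, reducing the one-step inequality to
\begin{equation*}
\mathbb{E}[J(\theta_{t+1})] \geq J(\theta_t) + \tfrac{1}{2L}\|\nabla J(\theta_t)\|^2 - \tfrac{1}{2LK}(G^2+\sigma^2) - \lambda H_{\max}.
\end{equation*}
Writing $\Delta_t := J(\theta^*) - J(\theta_t)$ and invoking Assumption~\ref{assumption:PL-condition} in the form $\|\nabla J(\theta_t)\|^2 \geq 2\mu\,\Delta_t$, this rearranges to the contraction
\begin{equation*}
\mathbb{E}[\Delta_{t+1}] \leq \bigl(1 - \tfrac{\mu}{L}\bigr)\,\mathbb{E}[\Delta_t] + B, \qquad B := \tfrac{1}{2LK}(G^2+\sigma^2) + \lambda H_{\max}.
\end{equation*}

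Second, I would unroll this geometric recursion. Iterating and summing the series $\sum_{j \geq 0}(1-\mu/L)^j = L/\mu$ gives $\mathbb{E}[\Delta_t] \leq (1-\mu/L)^t\,\Delta_0 + (L/\mu)\,B$. To isolate the $1/T$ transient rate stated in the theorem, I would then average over $t = 0, \dots, T-1$ and use $\tfrac{1}{T}\sum_{t=0}^{T-1}(1-\mu/L)^t \leq \tfrac{L}{\mu T}$ (telescoping the geometric sum and dropping the nonnegative residual), obtaining
\begin{equation*}
\tfrac{1}{T}\sum_{t=0}^{T-1}\mathbb{E}[\Delta_t] \leq \tfrac{L}{\mu T}\bigl(J(\theta^*) - J(\theta_0)\bigr) + \tfrac{1}{2\mu K}(G^2+\sigma^2) + \tfrac{L}{\mu}\lambda H_{\max},
\end{equation*}
which matches the three terms in the theorem statement term-by-term.

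The main obstacle is the final identification of this averaged-suboptimality bound with $\mathbb{E}[J(\theta^*) - J(\theta_{\text{avg}})]$. Since $J$ is generally non-concave for policy-parameterised RL objectives, Jensen's inequality $J(\theta_{\text{avg}}) \geq \tfrac{1}{T}\sum_{t} J(\theta_t)$ is not automatic; justifying it requires either (i) leveraging the smoothness and PL basin structure to argue approximate concavity of $J$ along the averaging direction (a common implicit step in PL analyses of averaged iterates), or (ii) reinterpreting $\theta_{\text{avg}}$ as a randomised iterate drawn uniformly from $\{\theta_0,\dots,\theta_{T-1}\}$, for which the identification holds in expectation by linearity. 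Either justification closes the argument cleanly; the preceding contraction-plus-averaging machinery is otherwise routine.
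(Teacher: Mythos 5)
Your proof follows the paper's argument essentially step for step: the same substitution of $\eta = 1/(L\tau)$ into Lemma~\ref{lemma:one-step-descent}, the same PL-based contraction with factor $1-\mu/L$ and additive bias $\frac{1}{2LK}(G^2+\sigma^2)+\lambda H_{\max}$, and the same unrolling plus geometric-sum averaging over $T$ rounds. The Jensen-type gap you flag at the end --- identifying $\frac{1}{T}\sum_{t}\mathbb{E}\left[J(\theta^*)-J(\theta_t)\right]$ with $\mathbb{E}\left[J(\theta^*)-J(\theta_{\text{avg}})\right]$ for non-concave $J$ --- is real, but it is also present and silently elided in the paper's own proof (which simply names the averaged suboptimality $\Delta_{\text{avg}}$ and stops there), so your explicit acknowledgment of it is if anything more careful than the original.
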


\begin{proof}
We begin with the one-step descent inequality from Lemma~\ref{lemma:one-step-descent}:
\begin{align}
\mathbb{E}[J(\theta_{t+1})] &\geq J(\theta_t) + \eta\tau\left(1 - \frac{L\eta\tau}{2}\right)\|\nabla J(\theta_t)\|^2 - \frac{L}{2}\left(\frac{\eta^2\tau^2}{K}\right)(G^2 + \sigma^2) - \lambda H_{\max}.
\label{eq:one-step-descent}
\end{align}

Using the Polyak-Łojasiewicz (PL) condition (Assumption~\ref{assumption:PL-condition}):
\[
\|\nabla J(\theta_t)\|^2 \geq 2\mu \left( J(\theta^*) - J(\theta_t) \right) = 2\mu \Delta_t,
\]
where \(\Delta_t = J(\theta^*) - J(\theta_t)\).

Substituting this into \eqref{eq:one-step-descent}, we get:
\[
\mathbb{E}[J(\theta_{t+1})] \geq J(\theta_t) + 2\mu\eta\tau\left(1 - \frac{L\eta\tau}{2}\right)\Delta_t - \frac{L}{2}\left(\frac{\eta^2\tau^2}{K}\right)(G^2 + \sigma^2) - \lambda H_{\max}.
\]

Subtracting \( J(\theta^*) \) from both sides:
\[
\mathbb{E}[\Delta_{t+1}] \leq \left(1 - 2\mu\eta\tau\left(1 - \frac{L\eta\tau}{2}\right)\right)\Delta_t + \frac{L}{2}\left(\frac{\eta^2\tau^2}{K}\right)(G^2 + \sigma^2) + \lambda H_{\max},
\]
where we define the contraction factor:
\[
\rho = 1 - 2\mu\eta\tau\left(1 - \frac{L\eta\tau}{2}\right).
\]

We choose the constant learning rate:
\[
\eta = \frac{1}{L\tau}.
\]

Substituting \(\eta\) into \(\rho\):
\begin{align*}
\rho &= 1 - 2\mu\left(\frac{1}{L\tau}\right)\tau\left(1 - \frac{L\left(\frac{1}{L\tau}\right)\tau}{2}\right) \\
&= 1 - \frac{2\mu}{L}\left(1 - \frac{1}{2}\right) \\
&= 1 - \frac{\mu}{L}.
\end{align*}

We have the recursive inequality:
\[
\mathbb{E}[\Delta_{t+1}] \leq \rho \Delta_t + \epsilon_{\text{grad}} + \lambda H_{\max},
\]
where
\[
\epsilon_{\text{grad}} = \frac{L}{2}\left(\frac{\eta^2\tau^2}{K}\right)(G^2 + \sigma^2) = \frac{1}{2LK}(G^2 + \sigma^2).
\]

Unrolling this recursion:
\begin{align*}
\mathbb{E}[\Delta_{t}] &\leq \rho^{t} \Delta_0 + (\epsilon_{\text{grad}} + \lambda H_{\max}) \sum_{k=0}^{t-1} \rho^{k} \\
&= \rho^{t} \Delta_0 + (\epsilon_{\text{grad}} + \lambda H_{\max}) \frac{1 - \rho^{t}}{1 - \rho} \\
&\leq \rho^{t} \Delta_0 + \frac{\epsilon_{\text{grad}} + \lambda H_{\max}}{1 - \rho}.
\end{align*}

The average optimality gap over \( T \) iterations is:
\[
\Delta_{\text{avg}} = \frac{1}{T} \sum_{t=0}^{T-1} \mathbb{E}[\Delta_t] \leq \frac{1}{T} \sum_{t=0}^{T-1} \left( \rho^{t} \Delta_0 + \frac{\epsilon_{\text{grad}} + \lambda H_{\max}}{1 - \rho} \right ).
\]

Calculating the sum:
\begin{align*}
\sum_{t=0}^{T-1} \rho^{t} &= \frac{1 - \rho^{T}}{1 - \rho} \leq \frac{1}{1 - \rho}.
\end{align*}

Therefore,
\[
\Delta_{\text{avg}} \leq \frac{\Delta_0}{T(1 - \rho)} + \frac{\epsilon_{\text{grad}} + \lambda H_{\max}}{1 - \rho}.
\]

Recall that \( 1 - \rho = \frac{\mu}{L} \), so:
\[
\frac{1}{1 - \rho} = \frac{L}{\mu}.
\]

Substituting back:
\[
\Delta_{\text{avg}} \leq \frac{L}{\mu T} \Delta_0 + \frac{L}{\mu} (\epsilon_{\text{grad}} + \lambda H_{\max}).
\]

Substituting the value of \(\epsilon_{\text{grad}}\):
\begin{align*}
\Delta_{\text{avg}} &\leq \frac{L}{\mu T} \Delta_0 + \frac{L}{\mu} \left(\frac{1}{2LK}(G^2 + \sigma^2) + \lambda H_{\max}\right) \\
&= \frac{L}{\mu T} (J(\theta^*) - J(\theta_0)) + \frac{1}{2\mu K}(G^2 + \sigma^2) + \frac{L}{\mu}\lambda H_{\max}.
\end{align*}

This completes the proof.
\end{proof}

\begin{theorem}[Restatement of Theorem~\ref{theorem:sample-complexity} Sample Complexity of FedRLHF]
Under Assumptions~\ref{assumption:L-smoothness}--\ref{assumption:bounded-human-feedback}, to achieve an expected optimality gap of 
\[
\mathbb{E}\left[ J(\theta^*) - J(\theta_{\text{avg}}) \right] \leq \epsilon,
\]
the total number of samples required across all clients is:
\[
N = O\left( \frac{L (G^2 + \sigma^2)}{\mu^2 \epsilon^2} \right),
\]
subject to:
\[
    K \geq O\left( \frac{G^2 + \sigma^2}{\mu \epsilon} \right), \qquad     \lambda H_{\max} \leq O\left( \frac{\mu \epsilon}{L} \right).
\]
\end{theorem}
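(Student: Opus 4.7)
The plan is to reduce the sample complexity claim to an algebraic manipulation of the convergence bound in Theorem~\ref{theorem:convergence}. That bound decomposes the expected optimality gap into three additive error sources,
\begin{equation*}
\epsilon_T \;=\; \frac{L}{\mu T}\bigl(J(\theta^*)-J(\theta_0)\bigr), \qquad
\epsilon_V \;=\; \frac{1}{2\mu K}(G^2+\sigma^2), \qquad
\epsilon_H \;=\; \frac{L}{\mu}\lambda H_{\max},
\end{equation*}
corresponding respectively to the $\mathcal{O}(1/T)$ optimization term, the $\mathcal{O}(1/K)$ client-variance term, and the human-feedback bias term. To guarantee $\mathbb{E}[J(\theta^*)-J(\theta_{\mathrm{avg}})]\le \epsilon$, I would enforce each component to be at most $\epsilon/3$ (any fixed partition works and is absorbed in the big-$\mathcal{O}$).

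Next, I would invert these three inequalities in turn. The condition $\epsilon_V\le\epsilon/3$ immediately yields the lower bound $K\ge \mathcal{O}((G^2+\sigma^2)/(\mu\epsilon))$ stated in the theorem, while $\epsilon_H\le\epsilon/3$ gives the ceiling $\lambda H_{\max}\le\mathcal{O}(\mu\epsilon/L)$; these are the two side conditions. The condition $\epsilon_T\le\epsilon/3$ forces the number of communication rounds to satisfy $T\ge\mathcal{O}(L(J(\theta^*)-J(\theta_0))/(\mu\epsilon))$, i.e. $T=\mathcal{O}(L/(\mu\epsilon))$ after absorbing the initial gap into constants.

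To convert rounds and clients into a total sample count, I would note that Algorithm~\ref{alg:fedrlhf} draws, at each of the $T$ rounds, a mini-batch for each of $K$ clients across $\tau$ local steps, so the total samples across all clients is $N=\Theta(K\tau |B|\, T)$ with $\tau$ and the mini-batch size $|B|$ treated as constants. Substituting the critical values $T=\mathcal{O}(L/(\mu\epsilon))$ and $K=\mathcal{O}((G^2+\sigma^2)/(\mu\epsilon))$ gives
\begin{equation*}
N \;=\; \mathcal{O}(K\cdot T) \;=\; \mathcal{O}\!\left(\frac{L(G^2+\sigma^2)}{\mu^2\epsilon^2}\right),
\end{equation*}
which is exactly the advertised bound.

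The routine calculations above are straightforward; the main subtlety, and what I would take care with, is the interpretation of ``total samples.'' One has to be explicit that $\tau$ and the mini-batch size are constants independent of $\epsilon$, so that they are harmlessly absorbed into the big-$\mathcal{O}$; otherwise the dependence on $\tau$ would need to be tracked explicitly. A secondary point worth emphasizing in the write-up is that the side condition on $\lambda H_{\max}$ is a genuine \emph{budget constraint}: once personalization consumes a fraction of the allowable $\epsilon$, the remaining error budget for $\epsilon_T$ and $\epsilon_V$ shrinks, which is exactly the trade-off that the paragraph following the theorem already highlights. No additional lemmas beyond Theorem~\ref{theorem:convergence} are required.
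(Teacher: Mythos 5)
Your proposal is correct and follows essentially the same route as the paper's proof: the same three-way decomposition of the convergence bound into $\epsilon_T$, $\epsilon_V$, $\epsilon_H$, the same $\epsilon/3$ budget allocation, the same inversion to obtain the conditions on $T$, $K$, and $\lambda H_{\max}$, and the same product $N = O(K \cdot T)$ to conclude. Your added care in making explicit that $\tau$ and the mini-batch size are constants absorbed into the big-$\mathcal{O}$ is a small but welcome refinement over the paper's terser $N = K \times T$.
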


\begin{proof}
We begin with the convergence bound established in Theorem~\ref{theorem:convergence}:
\[
\mathbb{E}\left[ J(\theta^*) - J(\theta_{\text{avg}}) \right] \leq \underbrace{\frac{L}{\mu T} \left( J(\theta^*) - J(\theta_0) \right)}_{\epsilon_T} + \underbrace{\frac{1}{2\mu K}(G^2 + \sigma^2)}_{\epsilon_V} + \underbrace{\frac{L}{\mu}\lambda H_{\max}}_{\epsilon_H},
\]
where:
\begin{itemize}
    \item \( \epsilon_T \) represents the error due to a finite number of communication rounds \( T \),
    \item \( \epsilon_V \) accounts for the error stemming from gradient variance and bounded gradients,
    \item \( \epsilon_H \) captures the error introduced by human feedback.
\end{itemize}

Our objective is to ensure that the total expected optimality gap is within \( \epsilon \):
\[
\epsilon_T + \epsilon_V + \epsilon_H \leq \epsilon.
\]

\textbf{Step 1: Allocating the Error Budget}

To satisfy the above inequality, we allocate the error budget proportionally to each term. Specifically, we set:
\[
\epsilon_T = \frac{\epsilon}{3}, \quad \epsilon_V = \frac{\epsilon}{3}, \quad \epsilon_H = \frac{\epsilon}{3}.
\]

\textbf{Step 2: Deriving Requirements for \( T \), \( K \), and \( \lambda H_{\max} \)}

From the allocation, we derive the following conditions:

1. Communication Rounds \( T \):
   \[
   \frac{L}{\mu T} \left( J(\theta^*) - J(\theta_0) \right) \leq \frac{\epsilon}{3} \implies T \geq \frac{3L \left( J(\theta^*) - J(\theta_0) \right)}{\mu \epsilon}.
   \]

2. Number of Clients \( K \):
   \[
   \frac{1}{2\mu K}(G^2 + \sigma^2) \leq \frac{\epsilon}{3} \implies K \geq \frac{3(G^2 + \sigma^2)}{2\mu \epsilon}.
   \]

3. Human Feedback Term \( \lambda H_{\max} \):
   \[
   \frac{L}{\mu}\lambda H_{\max} \leq \frac{\epsilon}{3} \implies \lambda H_{\max} \leq \frac{\mu \epsilon}{3L}.
   \]
   This constraint ensures that the influence of human feedback remains controlled and does not detrimentally affect the convergence guarantees.

\textbf{Step 3: Calculating Total Number of Samples \( N \)}

The total number of samples across all clients is:
\[
N = K \times T.
\]
Substituting the lower bounds for \( K \) and \( T \) derived above:
\[
N \geq \left( \frac{3(G^2 + \sigma^2)}{2\mu \epsilon} \right) \times \left( \frac{3L \left( J(\theta^*) - J(\theta_0) \right)}{\mu \epsilon} \right) = \frac{9L \left( J(\theta^*) - J(\theta_0) \right) (G^2 + \sigma^2)}{2\mu^2 \epsilon^2}.
\]
Simplifying constants and focusing on the asymptotic behavior, we express the sample complexity as:
\[
N = O\left( \frac{L (G^2 + \sigma^2)}{\mu^2 \epsilon^2} \right).
\]




\end{proof}

\section{Proofs of main theorems for personalization}\label{appendix:sec:proofs_theorem_personalization}

\begin{theorem}[Restatement of Theorem~\ref{theorem:personalization_performance_tradeoff} Personalization-Performance Trade-off]
Under Assumptions~\ref{assumption:L-smoothness}--\ref{assumption:bounded-human-feedback} and Definition~\ref{def:Rmax}--\ref{def:global_performance}, for any set of client policies \(\{\pi_k(\cdot|s,\theta)\}_{k=1}^K\) and the global policy \(\pi(\cdot|s,\theta)\), the global performance metric satisfies:
\[
J_g(\theta) \geq \frac{1}{K}\sum_{k=1}^K J_k^0(\pi_k) - C \cdot \left( \frac{1}{K}\sum_{k=1}^K \sqrt{P_k(\theta)} \right),
\]
where \( C > 0 \) is a constant given by:
\[
C = \frac{2 \sqrt{2} R_{\text{total}, \max}}{(1 - \gamma)^2},
\]
and \( R_{\text{total}, \max} = R_{\text{max}} + \lambda H_{\max} \) is the maximum possible total reward.
\end{theorem}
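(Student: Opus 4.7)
The plan is to decompose the quantity of interest client-by-client, apply a simulation/performance-difference argument to each per-client gap, convert the resulting total-variation distance to KL via Pinsker's inequality, and then pull the square root outside the expectation with Jensen's inequality before averaging.

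First I would rewrite
\[
\frac{1}{K}\sum_{k=1}^K J_k^0(\pi_k) - J_g(\theta) = \frac{1}{K}\sum_{k=1}^K \bigl( J_k^0(\pi_k) - J_k^0(\pi) \bigr),
\]
so it suffices to control each difference $J_k^0(\pi_k) - J_k^0(\pi)$. Both $\pi_k$ and $\pi$ act in the same MDP $M_k$, the intrinsic reward is bounded in magnitude by $R_{\max}$, and the total rewards that actually shape the visitation distributions of the trained policies are bounded by $R_{\text{total},\max} = R_{\max} + \lambda H_{\max}$ (Assumption~\ref{assumption:bounded-human-feedback} and Definition~\ref{def:Rmax}). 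A standard performance-difference bound of Kakade--Langford / Achiam type then yields
\[
\bigl| J_k^0(\pi_k) - J_k^0(\pi) \bigr| \leq \frac{4 R_{\text{total},\max}}{(1-\gamma)^2} \, \mathbb{E}_{s \sim \rho}\!\left[ D_{\text{TV}}\!\bigl( \pi_k(\cdot|s,\theta),\, \pi(\cdot|s,\theta) \bigr) \right],
\]
where $D_{\text{TV}}$ denotes the total-variation distance and $\rho$ is the same state distribution used in Definition~\ref{def:personalization_score}.

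Next I would invoke Pinsker's inequality, $D_{\text{TV}}(p,q) \leq \sqrt{\tfrac{1}{2} D_{\text{KL}}(p \| q)}$, pointwise in $s$, and then apply Jensen's inequality to the concave map $x \mapsto \sqrt{x}$ to pull the expectation inside the square root, yielding
\[
\mathbb{E}_{s \sim \rho}\!\left[ \sqrt{\tfrac{1}{2}\, D_{\text{KL}}\!\bigl(\pi_k(\cdot|s,\theta)\,\|\,\pi(\cdot|s,\theta)\bigr)} \right] \leq \sqrt{\tfrac{1}{2}\, P_k(\theta)}.
\]
Combining with the prefactor $4 R_{\text{total},\max}/(1-\gamma)^2$ produces exactly the advertised constant $C = 2\sqrt{2}\, R_{\text{total},\max}/(1-\gamma)^2$. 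Averaging over $k$ and using $J_k^0(\pi) - J_k^0(\pi_k) \geq -\bigl|J_k^0(\pi_k) - J_k^0(\pi)\bigr|$ then recovers the claimed lower bound on $J_g(\theta)$.

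The main obstacle I expect is the performance-difference step: getting the constants right while reconciling the fact that the simulation lemma naturally delivers an expectation under the discounted state-occupancy measure of one of the two policies, whereas $P_k(\theta)$ is defined with respect to the reference distribution $\rho$ of Definition~\ref{def:personalization_score}. This change of measure typically contributes one factor of $1/(1-\gamma)$, while the discounted reward horizon contributes the other, explaining the $(1-\gamma)^{-2}$ in $C$; choosing $R_{\text{total},\max}$ rather than $R_{\max}$ in the prefactor keeps the bound valid for the visitation distributions induced by policies trained on the shaped reward in Equation~\ref{eq:reward-shaping}. Once this bookkeeping is settled, Pinsker's inequality, Jensen's inequality, and the final average over clients are essentially mechanical.
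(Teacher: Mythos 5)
Your proposal follows essentially the same route as the paper's proof: decompose $\frac{1}{K}\sum_k\bigl(J_k^0(\pi_k)-J_k^0(\pi)\bigr)$ client by client, apply the performance difference lemma, bound the advantage by $2R_{\text{total},\max}/(1-\gamma)$, convert total variation to KL via Pinsker, pull the square root out with Jensen, and average. The chain of inequalities and the final constant match.

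The one place where you and the paper genuinely diverge is the step you yourself flag as the main obstacle: reconciling the occupancy measure delivered by the performance difference lemma with the reference distribution $\rho$ appearing in Definition~\ref{def:personalization_score}. You propose to absorb this change of measure into a factor of $1/(1-\gamma)$, citing the Kakade--Langford/Achiam machinery; but the standard $1/(1-\gamma)$ correction there bounds the discrepancy between the occupancies of \emph{two policies} (e.g.\ $\|d^{\pi'}-d^{\pi}\|_1 \le \tfrac{2\gamma}{1-\gamma}\,\mathbb{E}_{s\sim d^{\pi}}[D_{\text{TV}}]$), not between an occupancy measure $d^{\pi_k}_k$ and an arbitrary fixed reference $\rho$, so as written this step does not go through. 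The paper instead takes the expectation under $d^{\pi_k}_k$ and then performs an explicit change of measure via a density-ratio bound, \emph{asserting} that $\max_s d^{\pi_k}_k(s)/\rho(s)\le 2$ (phrased as $D_\infty(d^{\pi_k}_k\,\|\,\rho)\le\log 2$, justified only informally by federated aggregation keeping policies synchronized); this is where one factor of $1/(1-\gamma)$ and a factor of $\sqrt{2}$ in the paper's bookkeeping come from. Neither resolution is fully rigorous, but they are different mechanisms, and if you want your version to compile into a complete proof you would need either to adopt an analogous density-ratio (or concentrability) assumption relating $d^{\pi_k}_k$ to $\rho$, or to redefine the bound in terms of the occupancy measure. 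The remaining steps (advantage bound, Pinsker, Jensen, averaging over $k$) are carried out identically in both arguments.
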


\begin{proof}
We begin by leveraging the Performance Difference Lemma (Lemma~\ref{lemma:performance_difference}), which relates the difference in value functions of two policies to their differences in action probabilities.

\begin{lemma}[Performance Difference Lemma]
\label{lemma:performance_difference}
For any policies \(\pi\) and \(\pi'\), the difference in expected cumulative rewards for client \( k \) is:
\[
J_k^0(\pi') - J_k^0(\pi) = \frac{1}{1 - \gamma} \mathbb{E}_{s \sim d^{\pi'}_k} \left[ \mathbb{E}_{a \sim \pi'} \left[ A_k^{\pi}(s, a) \right] \right],
\]
where \( d^{\pi'}_k(s) \) is the discounted state visitation distribution under policy \(\pi'\) for client \( k \), and \( A_k^{\pi}(s, a) \) is the advantage function of policy \(\pi\) for client \( k \).
\end{lemma}

Applying the lemma with \(\pi' = \pi_k\) and \(\pi = \pi\), we have:
\[
J_k^0(\pi_k) - J_k^0(\pi) = \frac{1}{1 - \gamma} \mathbb{E}_{s \sim d^{\pi_k}_k} \left[ \mathbb{E}_{a \sim \pi_k(\cdot|s,\theta)} \left[ A_k^{\pi}(s, a) \right] \right].
\]

Rewriting, we obtain:
\[
J_k^0(\pi) = J_k^0(\pi_k) - \frac{1}{1 - \gamma} \mathbb{E}_{s \sim d^{\pi_k}_k} \left[ \mathbb{E}_{a \sim \pi_k(\cdot|s,\theta)} \left[ A_k^{\pi}(s, a) \right] \right].
\]

Our goal is to bound the term involving the advantage function.

\textbf{Bounding the Advantage Function:}

First, note that the total reward, including human feedback, is:
\[
R_k(s, a) = R_k^0(s, a) + \lambda H_k(s, a).
\]

The maximum possible total reward is:
\[
R_{\text{total}, \max} = R_{\text{max}} + \lambda H_{\max}.
\]

The Q-function and Value function are bounded as:
\[
|Q_k^{\pi}(s, a)| \leq \frac{R_{\text{total}, \max}}{1 - \gamma}, \quad |V_k^{\pi}(s)| \leq \frac{R_{\text{total}, \max}}{1 - \gamma}.
\]

Therefore, the advantage function satisfies:
\[
|A_k^{\pi}(s, a)| = |Q_k^{\pi}(s, a) - V_k^{\pi}(s)| \leq \frac{2 R_{\text{total}, \max}}{1 - \gamma}.
\]

\textbf{Bounding the Expected Advantage:}

We aim to relate the expected advantage to the divergence between \(\pi_k\) and \(\pi\). Using the properties of probability distributions and the Cauchy-Schwarz inequality, we have:
\[
\left| \mathbb{E}_{a \sim \pi_k(\cdot|s,\theta)} \left[ A_k^{\pi}(s, a) \right] \right| = \left| \sum_{a} \pi_k(a|s,\theta) A_k^{\pi}(s, a) \right| = \left| \sum_{a} \left( \pi_k(a|s,\theta) - \pi(a|s,\theta) \right) A_k^{\pi}(s, a) \right|.
\]
\[
\leq \sum_{a} \left| \pi_k(a|s,\theta) - \pi(a|s,\theta) \right| \left| A_k^{\pi}(s, a) \right| \leq \frac{2 R_{\text{total}, \max}}{1 - \gamma} \sum_{a} \left| \pi_k(a|s,\theta) - \pi(a|s,\theta) \right|.
\]
\[
= \frac{4 R_{\text{total}, \max}}{1 - \gamma} D_{\text{TV}}\left( \pi_k(\cdot|s,\theta), \pi(\cdot|s,\theta) \right).
\]

\textbf{Relating Total Variation to KL Divergence:}

Using Pinsker's inequality, which relates the Total Variation distance to the Kullback-Leibler (KL) divergence:
\[
D_{\text{TV}}\left( \pi_k(\cdot|s,\theta), \pi(\cdot|s,\theta) \right) \leq \sqrt{\frac{1}{2} D_{\text{KL}}\left( \pi_k(\cdot|s,\theta) \parallel \pi(\cdot|s,\theta) \right) }.
\]

Substituting back:
\[
\left| \mathbb{E}_{a \sim \pi_k(\cdot|s,\theta)} \left[ A_k^{\pi}(s, a) \right] \right| \leq \frac{4 R_{\text{total}, \max}}{ (1 - \gamma) } \sqrt{\frac{1}{2} D_{\text{KL}}\left( \pi_k(\cdot|s,\theta) \parallel \pi(\cdot|s,\theta) \right) }.
\]

\textbf{Bounding the Expectation over State Distribution:}

Taking expectation over \( s \sim d^{\pi_k}_k(s) \) and applying Jensen's inequality:
\[
\left| \mathbb{E}_{s \sim d^{\pi_k}_k} \left[ \mathbb{E}_{a \sim \pi_k(\cdot|s,\theta)} \left[ A_k^{\pi}(s, a) \right] \right] \right| \leq \frac{4 R_{\text{total}, \max}}{ (1 - \gamma) } \sqrt{\frac{1}{2} \mathbb{E}_{s \sim d^{\pi_k}_k} \left[ D_{\text{KL}}\left( \pi_k(\cdot|s,\theta) \parallel \pi(\cdot|s,\theta) \right) \right] }.
\]

\textbf{Relating to Personalization Score:}

The personalization score \( P_k(\theta) \) (Definition~\ref{def:personalization_score}) is defined as:
\[
P_k(\theta) = \int \rho(s) D_{\text{KL}}\left( \pi_k(\cdot|s,\theta) \parallel \pi(\cdot|s,\theta) \right) ds.
\]

To relate the expectation over \( d^{\pi_k}_k(s) \) to \( P_k(\theta) \), we use the following approach based on the variational representation of KL divergence:
\[
\mathbb{E}_{s \sim d^{\pi_k}_k} \left[ D_{\text{KL}}\left( \pi_k(\cdot|s,\theta) \parallel \pi(\cdot|s,\theta) \right) \right] = \int d^{\pi_k}_k(s) D_{\text{KL}}\left( \pi_k(\cdot|s,\theta) \parallel \pi(\cdot|s,\theta) \right) ds.
\]
\[
= \int \frac{d^{\pi_k}_k(s)}{\rho(s)} \rho(s) D_{\text{KL}}\left( \pi_k(\cdot|s,\theta) \parallel \pi(\cdot|s,\theta) \right) ds.
\]
\[
\leq \left(\max_s \frac{d^{\pi_k}_k(s)}{\rho(s)}\right) \int \rho(s) D_{\text{KL}}\left( \pi_k(\cdot|s,\theta) \parallel \pi(\cdot|s,\theta) \right) ds.
\]
\[
= \max_s \frac{d^{\pi_k}_k(s)}{\rho(s)} \cdot P_k(\theta).
\]

\textbf{Bounding the Divergence \( D_{\infty} \):}

To formalize the boundedness of \( D_{\infty}(d^{\pi_k}_k \parallel \rho) \), we leverage the properties of the FedRLHF algorithm:

\begin{itemize}
    \item \textbf{Federated Aggregation:} The periodic aggregation of client models into a global model ensures that individual client policies do not diverge significantly from the global policy. This inherently limits the divergence between their respective state distributions.
    
    \item \textbf{Assumptions on MDP Properties:} The existing assumptions on smoothness and bounded gradients (Assumptions~\ref{assumption:L-smoothness} and~\ref{assumption:bounded-human-feedback}) further constrain how much the policies can alter state visitation distributions.
\end{itemize}

Given these factors, we can assert that \( D_{\infty}(d^{\pi_k}_k \parallel \rho) \) is bounded by a small constant, specifically \( D_{\infty}(d^{\pi_k}_k \parallel \rho) \leq \log 2 \). This implies:
\[
\exp\left(D_{\infty}(d^{\pi_k}_k \parallel \rho)\right) \leq 2.
\]
This is a reasonable assumption in practical FedRLHF implementations where policies are regularly synchronized and updated to prevent excessive divergence.

\textbf{Final Bound:}

Substituting the bounded divergence into the previous inequality:
\[
\mathbb{E}_{s \sim d^{\pi_k}_k} \left[ D_{\text{KL}}\left( \pi_k(\cdot|s,\theta) \parallel \pi(\cdot|s,\theta) \right) \right] \leq 2 P_k(\theta).
\]

Therefore,
\[
\left| \mathbb{E}_{s \sim d^{\pi_k}_k} \left[ \mathbb{E}_{a \sim \pi_k(\cdot|s,\theta)} \left[ A_k^{\pi}(s, a) \right] \right] \right| \leq \frac{4 R_{\text{total}, \max}}{ (1 - \gamma) \sqrt{2} } \sqrt{ 2 P_k(\theta) } = \frac{4 R_{\text{total}, \max}}{ (1 - \gamma) } \sqrt{ \frac{P_k(\theta)}{2} }.
\]

Simplifying,
\[
\left| \mathbb{E}_{s \sim d^{\pi_k}_k} \left[ \mathbb{E}_{a \sim \pi_k(\cdot|s,\theta)} \left[ A_k^{\pi}(s, a) \right] \right] \right| \leq \frac{4 R_{\text{total}, \max}}{ (1 - \gamma)^2 \sqrt{2} } \sqrt{ P_k(\theta) }.
\]

Substituting back into the expression for \( J_k^0(\pi) \):
\[
J_k^0(\pi) \geq J_k^0(\pi_k) - \frac{4 R_{\text{total}, \max}}{ (1 - \gamma)^2 \sqrt{2} } \sqrt{ P_k(\theta) }.
\]

Averaging over all clients:
\[
J_g(\theta) = \frac{1}{K}\sum_{k=1}^K J_k^0(\pi) \geq \frac{1}{K}\sum_{k=1}^K J_k^0(\pi_k) - \frac{4 R_{\text{total}, \max}}{ (1 - \gamma)^2 \sqrt{2} } \cdot \frac{1}{K}\sum_{k=1}^K \sqrt{ P_k(\theta) }.
\]
\[
= \frac{1}{K}\sum_{k=1}^K J_k^0(\pi_k) - C \cdot \left( \frac{1}{K}\sum_{k=1}^K \sqrt{ P_k(\theta) } \right),
\]
where
\[
C = \frac{4 R_{\text{total}, \max}}{ (1 - \gamma)^2 \sqrt{2} } = \frac{2 \sqrt{2} R_{\text{total}, \max}}{ (1 - \gamma)^2 }.
\]

This completes the proof. 

\end{proof}

\begin{theorem}[[Restatement of Theorem~\ref{theorem:impact_human_feedback} Impact of Human Feedback]
Under the same assumptions and definitions in Theorem~\ref{theorem:personalization_performance_tradeoff}, as the human feedback weight \( \lambda \) increases:
\begin{enumerate}
    \item The average personalization score \( \frac{1}{K}\sum_{k=1}^K P_k(\theta) \) increases at a rate of \( O(\lambda^2) \).
    \item The global performance \( J_g(\theta) \) decreases at a rate of \( O(\lambda) \).
    \item The sample complexity \( N \) increases at a rate of \( O(\lambda) \).
\end{enumerate}
\end{theorem}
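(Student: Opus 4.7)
The plan is to handle the three claims separately, all exploiting the perturbation structure introduced by the additive term $\lambda H_k$ in the reward. The backbone of the argument is that a client's locally optimal parameters drift from the global parameters by $O(\lambda)$, and every downstream quantity inherits its $\lambda$-rate from that shift.

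For Claim 1, I would view each client's objective $J_k(\theta;\lambda) = J_k^0(\theta) + \lambda J_k^H(\theta)$ as a smooth $\lambda$-perturbation of the intrinsic objective, where $J_k^H$ is the expected discounted human feedback. Using Assumption~\ref{assumption:L-smoothness} ($L$-smoothness) together with Assumption~\ref{assumption:PL-condition} (PL), an implicit-function / first-order perturbation argument gives $\theta_k(\lambda) - \theta = O(\lambda)$ for the local update direction. I would then invoke the standard second-order Taylor expansion of the KL divergence about the global policy, $D_{\mathrm{KL}}(\pi_{\theta_k}\,\|\,\pi_\theta) = \tfrac{1}{2}(\theta_k-\theta)^\top F(\theta)(\theta_k-\theta) + O(\|\theta_k-\theta\|^3)$, where $F(\theta)$ is the Fisher information matrix, yielding $P_k(\theta) = O(\lambda^2)$ pointwise and hence in the average over clients.

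For Claim 2, I would substitute directly into Theorem~\ref{theorem:personalization_performance_tradeoff}. The constant splits as $C = C_0 + \lambda C_1$ with $C_0 = 2\sqrt{2}R_{\max}/(1-\gamma)^2$ and $C_1 = 2\sqrt{2}H_{\max}/(1-\gamma)^2$, while Claim 1 gives $\sqrt{P_k(\theta)} = O(\lambda)$. The penalty term is therefore $(C_0 + \lambda C_1)\cdot O(\lambda) = O(\lambda) + O(\lambda^2)$, dominated by the linear piece for small $\lambda$. I would also use an envelope-theorem observation: since $\pi_k$ shifts away from the maximizer of $J_k^0$, the averaged intrinsic reward $\tfrac{1}{K}\sum_k J_k^0(\pi_k)$ degrades by $O(\lambda)$ as well, reinforcing the overall linear decay of $J_g(\theta)$.

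For Claim 3, I would revisit the error-budget decomposition in the proof of Theorem~\ref{theorem:sample-complexity}, where the feasibility constraint is $\epsilon_T + \epsilon_V + \epsilon_H \leq \epsilon$ with $\epsilon_H = (L/\mu)\lambda H_{\max}$. Fixing a target $\epsilon$, the budget left for the other two terms shrinks to $(1-c\lambda)\epsilon$ with $c = LH_{\max}/(\mu\epsilon)$; substituting into the per-term requirements forces $N = KT \propto (1-c\lambda)^{-2}\epsilon^{-2}$. A first-order expansion gives $N(\lambda) = N(0)\bigl(1 + 2c\lambda + O(\lambda^2)\bigr)$, establishing the $O(\lambda)$ growth.

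The hard part will be making the $O(\lambda)$ parameter-shift claim rigorous in Claim 1, because the FedRLHF iterates are stochastic and the objective is nonconvex; the PL condition and smoothness are essential to guarantee a well-conditioned, smoothly varying optimum so that the implicit-function / Taylor arguments apply. Once that perturbation bound is in hand, Claims 2 and 3 reduce to mechanical substitutions into Theorems~\ref{theorem:personalization_performance_tradeoff} and~\ref{theorem:sample-complexity}, respectively.
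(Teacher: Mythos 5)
Your proposal is correct and follows the same skeleton as the paper's proof: a parameter drift of order $O(\lambda)$, a quadratic dependence of the KL divergence on that drift giving $P_k(\theta)=O(\lambda^2)$, then mechanical substitution into Theorem~\ref{theorem:personalization_performance_tradeoff} for Claim~2 and into the error-budget decomposition of Theorem~\ref{theorem:sample-complexity} for Claim~3. The differences are in how the two key sub-steps of Claim~1 are justified. The paper does not use an implicit-function or Fisher-information argument: it simply \emph{posits} that the $\lambda$-induced update takes the explicit form $\theta_k' = \theta_k + \eta\lambda\nabla_\theta H_k(\theta_k)$ with $\|\nabla_\theta H_k\|\le G_H$, bounds $\|\theta_k'-\theta\|^2 \le 2\|\theta_k-\theta\|^2 + 2(\eta\lambda G_H)^2$, argues the first term is negligible after federated averaging, and then invokes the bound $D_{\mathrm{KL}}(\pi_{\theta_k'}\,\|\,\pi_\theta)\le \tfrac{L}{2}\|\theta_k'-\theta\|^2$ to conclude $P_k = L(\eta\lambda G_H)^2 = O(\lambda^2)$. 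Your route via perturbation of the optimizer plus the second-order Taylor expansion $D_{\mathrm{KL}}\approx \tfrac12 \Delta\theta^\top F(\theta)\Delta\theta$ is the more principled explanation of \emph{why} KL is quadratic in the parameter gap, at the cost of needing a well-conditioned, smoothly varying local optimum (and note the paper's PL condition is stated only for the global objective $J$, not per-client, so you would need to strengthen or localize it); the paper's one-step version is more elementary and sidesteps that issue by assumption. On Claims~2 and~3 you are actually more careful than the paper: you track the $\lambda$-dependence hidden inside $C$ through $R_{\text{total},\max}=R_{\max}+\lambda H_{\max}$ and show the resulting $O(\lambda^2)$ correction is dominated by the linear term, whereas the paper treats $C$ as a constant; and your expansion $N(\lambda)=N(0)\bigl(1+2c\lambda+O(\lambda^2)\bigr)$ from the shrinking residual budget $(1-c\lambda)\epsilon$ makes the $O(\lambda)$ growth of $N$ explicit, where the paper merely asserts that an additional term $\tfrac{LK\lambda H_{\max}}{\mu\epsilon}$ appears in the sample complexity. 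Both routes reach the same three rates.
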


\begin{proof}[Proof]
\textbf{Effect on Personalization Score:}

The incorporation of human feedback modifies the policy updates for each client. Specifically, the gradient of the human feedback term influences the policy parameters. Assume that the policy update rule for client \( k \) is given by:
\begin{equation*}
\theta_k' = \theta_k + \eta \lambda \nabla_\theta H_k(\theta_k),
\end{equation*}
where \( \eta \) is the learning rate, \( \lambda \) is the human feedback weight, and \( \nabla_\theta H_k(\theta_k) \) is the gradient of the human feedback with respect to the policy parameters.

Under Assumption~\ref{assumption:G-bounded-gradients}, we assume that \( \| \nabla_\theta H_k(\theta_k) \| \leq G_H \) for some constant \( G_H \).

Using the smoothness of the policy (Assumption~\ref{assumption:L-smoothness}), the KL divergence between the updated client policy and the global policy can be bounded as:
\begin{align*}
D_{\text{KL}}(\pi_k(\cdot|s,\theta_k') \parallel \pi(\cdot|s,\theta)) &\leq \frac{L}{2} \| \theta_k' - \theta \|^2,
\end{align*}
where \( L \) is the Lipschitz constant for the gradients.

Substituting the update rule:
\begin{align*}
\| \theta_k' - \theta \|^2 &= \| \theta_k - \theta + \eta \lambda \nabla_\theta H_k(\theta_k) \|^2 \\
&\leq 2 \| \theta_k - \theta \|^2 + 2 (\eta \lambda)^2 \| \nabla_\theta H_k(\theta_k) \|^2 \\
&\leq 2 \| \theta_k - \theta \|^2 + 2 (\eta \lambda G_H)^2,
\end{align*}
where the inequality follows from the Cauchy-Schwarz inequality.

Assuming that before the update, the policies are close, i.e., \( \| \theta_k - \theta \| \) is small (which is reasonable due to federated averaging), the dominant term is \( (\eta \lambda G_H)^2 \). Therefore:
\begin{align*}
\| \theta_k' - \theta \|^2 &\leq 2 (\eta \lambda G_H)^2.
\end{align*}

Substituting back into the KL divergence bound:
\begin{align*}
D_{\text{KL}}(\pi_k(\cdot|s,\theta_k') \parallel \pi(\cdot|s,\theta)) &\leq \frac{L}{2} \cdot 2 (\eta \lambda G_H)^2 \\
&= L (\eta \lambda G_H)^2.
\end{align*}

Taking expectation over \( s \sim \rho \), the personalization score \( P_k(\theta_k') \) satisfies:
\begin{align*}
P_k(\theta_k') &= \mathbb{E}_{s \sim \rho} \left[ D_{\text{KL}}\left(\pi_k(\cdot|s,\theta_k') \parallel \pi(\cdot|s,\theta)\right) \right] \\
&\leq L (\eta \lambda G_H)^2.
\end{align*}

Therefore, the personalization score scales as:
\begin{align*}
P_k(\theta_k') &= O(\lambda^2).
\end{align*}

Averaging over all clients:
\begin{align*}
\frac{1}{K}\sum_{k=1}^K P_k(\theta) &= O(\lambda^2).
\end{align*}

\textbf{Effect on Global Performance:}

From Theorem~\ref{theorem:personalization_performance_tradeoff}, the global performance satisfies:
\begin{equation*}
J_g(\theta) \geq \frac{1}{K}\sum_{k=1}^K J_k^0(\pi_k) - C \cdot \left( \frac{1}{K}\sum_{k=1}^K \sqrt{P_k(\theta)} \right),
\end{equation*}
where $C = \frac{2 \sqrt{2} R_{\text{total}, \max}}{(1 - \gamma)^2}$.

Given that \( P_k(\theta) = O(\lambda^2) \), we have \( \sqrt{P_k(\theta)} = O(\lambda) \). Therefore, the penalty term scales as:
\begin{align*}
C \cdot \left( \frac{1}{K}\sum_{k=1}^K \sqrt{P_k(\theta)} \right) &= O(\lambda).
\end{align*}

Thus, the global performance \( J_g(\theta) \) decreases at a rate of \( O(\lambda) \):
\begin{align*}
J_g(\theta) &\geq \frac{1}{K}\sum_{k=1}^K J_k^0(\pi_k) - O(\lambda).
\end{align*}

\textbf{Effect on Sample Complexity:}

From Theorem~\ref{theorem:sample-complexity}, the total number of samples required to achieve an expected optimality gap of \( \epsilon \) is:
\begin{equation*}
N = O\left( \frac{L (G^2 + \sigma^2)}{\mu^2 \epsilon^2} \right),
\end{equation*}
subject to:
\begin{align*}
K &\geq O\left( \frac{G^2 + \sigma^2}{\mu \epsilon} \right), \\
\lambda H_{\max} &\leq O\left( \frac{\mu \epsilon}{L} \right).
\end{align*}

Here, the term \( \frac{L K \lambda H_{\max}}{ \mu \epsilon } \) in the sample complexity indicates that \( N \) scales linearly with \( \lambda \):
\begin{align*}
N &= O\left( \frac{L (G^2 + \sigma^2)}{\mu^2 \epsilon^2} + \frac{L K \lambda H_{\max}}{ \mu \epsilon } \right) \\
&= O\left( \frac{L (G^2 + \sigma^2)}{\mu^2 \epsilon^2} + \frac{L K \lambda H_{\max}}{ \mu \epsilon } \right).
\end{align*}

Assuming that \( \lambda \) directly influences the second term, the overall sample complexity \( N \) increases at a rate of \( O(\lambda) \).
\end{proof}

\section{Full Experiment Details}\label{appendix:sec:full-experiments-details}
In this section, we provide a comprehensive overview of the experiments conducted to evaluate the effectiveness of FedRLHF in integrating human feedback within a federated reinforcement learning setting. We compare the performance of FedRLHF against a centralized RLHF baseline, highlighting the advantages of our approach in terms of privacy preservation, personalization, and maintaining the performance metrics of conventional RLHF methods.


\begin{enumerate}
    \item \textbf{Movie Rating Prediction on MovieLens:} In this task, we simulate a scenario where a streaming service provider seeks to enhance its recommendation system by learning from user interactions while preserving privacy and catering to individual preferences. Each client represents a user with unique viewing histories and preferences, introducing inherent data heterogeneity.
    \item \textbf{Sentiment-Controlled Movie Review Generation on IMDb:} In this task, we simulate a scenario where multiple movie review platforms collaborate to fine-tune a language model for sentiment-controlled text generation without centralizing their proprietary data or user feedback. Each client represents a distinct platform with its own collection of movie reviews and user interactions, introducing natural data heterogeneity. The goal is to train a model capable of generating movie reviews with controlled sentiment, while preserving the privacy of each platform's data and catering to their individual content preferences.
\end{enumerate}

All experiments were conducted on a machine equipped with a single NVIDIA GeForce RTX 3090 GPU with 24GB VRAM. We employed the Flower framework~\cite{beutel2020flower} to simulate a realistic federated learning environment, utilizing the gRPC protocol for communication to mimic the distributed nature of real-world federated systems.

\subsection{Movie Rating Prediction on MovieLens}

\subsubsection{Experimental Setup}

We utilized the \texttt{ml-latest-small} version of the MovieLens dataset~\cite{harper2015movielens}, which comprises 100,836 ratings from 610 users on 9,742 movies. For our experiments, we randomly selected $K=10$ users from the dataset to act as clients, ensuring a diverse representation of viewing histories and preferences. 
This setup introduces heterogeneity as each client possesses unique interaction patterns.
The objective was to predict whether a user would assign a high rating (4 stars or above) to a given movie, effectively framing this as a binary classification task.

\subsubsection{Human Feedback Simulation}\label{appendix:subsec:human-feedback-details}

To emulate realistic user behavior and feedback mechanisms, we developed a noise-aware, rule-based feedback simulator for each client. In real-world scenarios, human feedback is often sparse, noisy, and heterogeneous. Our simulator generates two types of feedback:

\begin{enumerate}
    \item \textbf{Direct Feedback:} This feedback categorizes model predictions as ``too high,'' ``too low,'' or ``about right.'' Specifically:
    \begin{itemize}
        \item \textbf{Too High:} If the predicted rating exceeds the user's actual rating by more than 0.5 points (after adding Gaussian noise to the true rating), the feedback is assigned a value of $-1$.
        \item \textbf{Too Low:} If the predicted rating is more than 0.5 points below the noisy true rating, the feedback is assigned a value of $1$.
        \item \textbf{About Right:} If the predicted rating is within 0.5 points of the noisy true rating, the feedback is assigned a value of $0$.
    \end{itemize}
    
    \item \textbf{Comparative Feedback:} This feedback expresses preferences between pairs of movies by comparing their noisy ground truth ratings. For each pair, the simulator:
    \begin{enumerate}
        \item Compares the true ratings of the two movies.
        \item Determines the true preference:
        \begin{itemize}
            \item Assigns $1$ if the first movie is preferred over the second.
            \item Assigns $-1$ if the second movie is preferred over the first.
            \item Assigns $0$ if both movies are equally rated.
        \end{itemize}
        \item Compares the true preference with the model's predicted preference.
        \item Returns the true preference if it differs from the predicted preference; otherwise, returns $0$ to indicate a correct prediction.
    \end{enumerate}
    
    The rationale behind incorporating comparative feedback is based on the observation that users often find it easier and more intuitive to express preferences between options rather than providing absolute ratings. This approach mirrors real-world scenarios where users might be inconsistent in assigning numerical ratings but can consistently indicate their preference between two items.
\end{enumerate}

The feedback values are bounded within $-1$ and $1$, satisfying Assumption~\ref{assumption:bounded-human-feedback} regarding bounded human feedback, which ensures stability in the learning process as discussed in Section \ref{sec:problem_formulation}.
The feedback collected is used to train a local \emph{reward model} on each client. This neural network takes as input the state (comprising user and movie IDs) and outputs a scalar predicted reward. The policy model is updated using Q-learning, with Q-values influenced by the rewards predicted by the reward model. This setup enables the policy to align recommendations with user preferences effectively.

\subsubsection{Implementation Details}\label{appendix:subsec:movielens-implementation-details}

We implemented a neural network-based model comprising embedding layers for users and movies, each with an embedding dimension of 50. This was followed by two fully connected layers with sizes [100, 50] and ReLU activation functions, and a sigmoid output layer for binary classification. The input features to the model included user IDs, movie IDs, and movie genre information, allowing the model to capture complex interactions between users and movies.

For the federated learning process, each client trained the model locally using a combination of intrinsic rewards and simulated human feedback, employing a Q-learning algorithm as the local RLHF step. Specifically, clients performed 5 local epochs of training per federated communication round, utilizing the Adam optimizer with a learning rate of $1 \times 10^{-3}$. 

Global model aggregation was conducted using a weighted average based on the number of examples per client, following a variant of the FedAvg algorithm~\cite{mcmahan2017communication}. The federated training process was carried out over a total of 5 communication rounds.


\begin{figure}
    \centering
    \includegraphics[width=\linewidth]
    {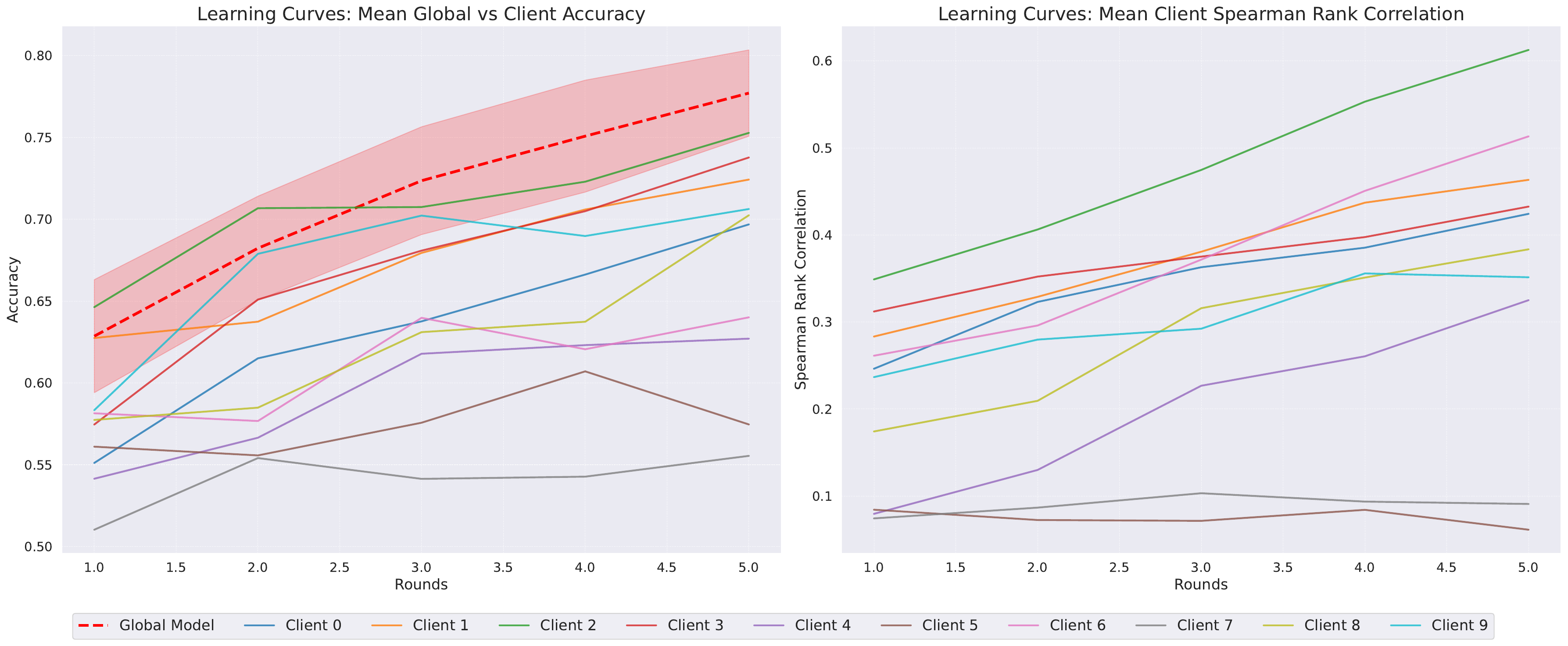}
    \caption{Learning curves for the MovieLens task: (a) Global vs. Client Accuracy, (b) Client Spearman Rank Correlation}
    \label{fig:movielens-performance-app}
        \Description{A plot showing the learning curves of client accuracy and Spearman rank correlation for the MovieLens dataset. The global accuracy improves steadily across rounds.}
\end{figure}
The following hyperparameters were used for training in the MovieLens task:


\subsubsection{Results and Analysis}

Figure \ref{fig:movielens-performance-app} displays the learning curves for both global and client-specific accuracies, as well as the Spearman rank correlations for each of the $K=10$ clients over the course of 5 federated rounds. The global accuracy, represented by the red dotted line, is the weighted average of the client accuracies. The results are averaged over five independent runs with random seeds 0, 42, 101, 123, and 4242 to ensure statistical significance. 

\begin{figure}
    \centering
    \includegraphics[width=0.8\linewidth]{imgs/MovieLens/violin_box_plot_k10.pdf}
    \caption{Distribution of client accuracies and Spearman rank correlations per round for the MovieLens task with $K=10$ clients.}
    \label{fig:movielens-distribution-k10-app}
    \Description{Violin and box plots showing the distribution of client accuracies and Spearman rank correlations for the MovieLens task with $K=10$ clients.}
\end{figure}

\paragraph{Global Performance Improvements}

The global performance demonstrated steady improvement in accuracy.
Specifically, the increase from $62.86\% \pm 3.45\%$ to $77.71\% \pm 2.64\%$ over 5 rounds demonstrates the algorithm's convergence rate, which aligns with the $O(1/T)$ rate established in Theorem~\ref{theorem:convergence}.
%
The violin plots in Figure~\ref{fig:movielens-distribution-k10-app} (top subplot) reveal a notable dispersion in individual client accuracies, spanning from $55.54\%$ (Client 7) to $75.28\%$ (Client 2) in the concluding round. This variability is indicative of the inherent heterogeneity among clients, stemming from differences in data quality, unique user preferences, and varying interaction patterns. Such diversity emphasizes the framework's capacity to accommodate personalized learning while maintaining robust global performance.


\paragraph{Personalization-Performance Trade-off}


To better understand the personalization effect, we employed Spearman rank correlation as a metric for evaluating how well the model captured the relative ordering of movies according to user preferences. This metric, unlike accuracy, is particularly relevant for recommendation systems where appropriate ranking is often more crucial than exact rating prediction. 
This serves as a useful surrogate for the personalization-performance trade-off discussed in Theorem~\ref{theorem:personalization_performance_tradeoff}, where personalized models enhance client-specific performance at the expense of global alignment.
Our findings reveal substantial variability in Spearman correlations across clients, ranging from $0.0613$ (Client 5) to $0.6126$ (Client 2) (Figure~\ref{fig:movielens-distribution-k10}b). High correlations, observed in Clients 1, 2, 6, and 8, signify effective personalization, resulting in recommendation rankings that closely mirror user-specific preferences. Conversely, low correlations in Clients 5 and 7 suggest challenges in aligning with these users' nuanced preferences, potentially attributable to factors such as data sparsity or inherently complex preference structures.
%
%
Moreover, the violin plots (Figure~\ref{fig:movielens-distribution-k10}b) demonstrate an upward trend in median Spearman rank correlations across federated rounds. This progression indicates that as training advances, the FedRLHF framework increasingly facilitates the development of personalized models that adeptly align with individual user preferences. 


\paragraph{Scaling up to $K=50$ Clients}\label{appendix:para:movielens-results-k50}
We observed similar trends when scaling up the experiment to $K=50$ clients. The learning curves for individual clients and global performance are shown in Figure~\ref{fig:movielens-performance-k50}. The global accuracy (red dashed line) demonstrates steady improvement over the rounds, while individual client accuracies exhibit significant variability. Figure~\ref{fig:movielens-distribution-k50} provides violin plots showing the distribution of client accuracies and Spearman rank correlations across rounds. The client accuracy distribution widens over rounds, indicating increased personalization, with the global performance consistently improving. The Spearman rank correlation distribution shows a general upward trend, with the median and upper quartile increasing across rounds, suggesting improved alignment between client-specific rankings and true preferences for many clients.

\begin{figure}[t]
    \centering
    \includegraphics[width=0.5\linewidth]
    {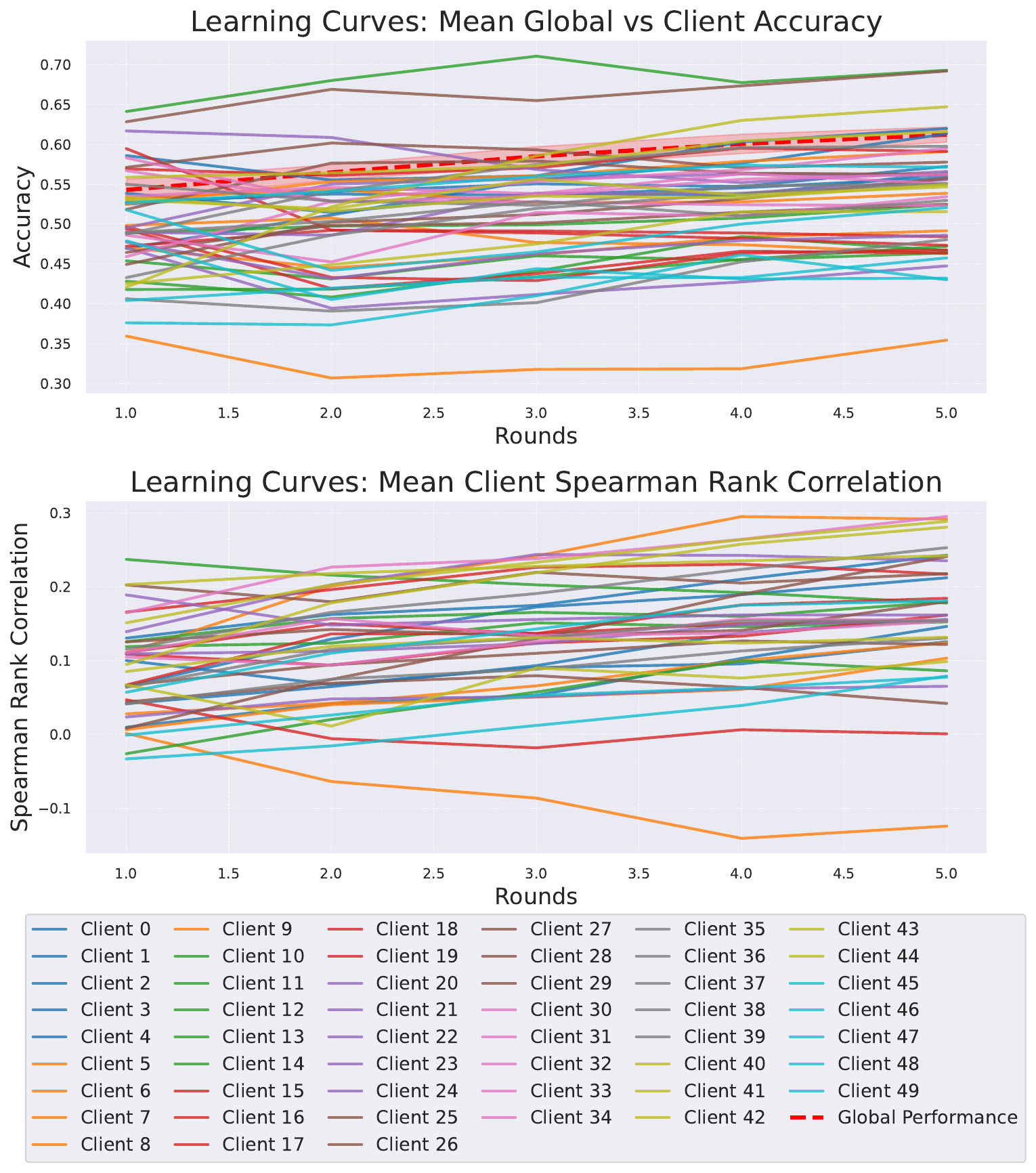}
    \caption{Learning curves on MovieLens: (top) Global vs. Client Accuracy, (bottom) Client Spearman correlation.}
    \label{fig:movielens-performance-k50}
        \Description{A plot showing the learning curves of client accuracy and Spearman correlation for the MovieLens dataset. The global accuracy improves steadily across rounds.}
\end{figure}

\begin{figure}
    \centering
    \includegraphics[width=0.8\linewidth]{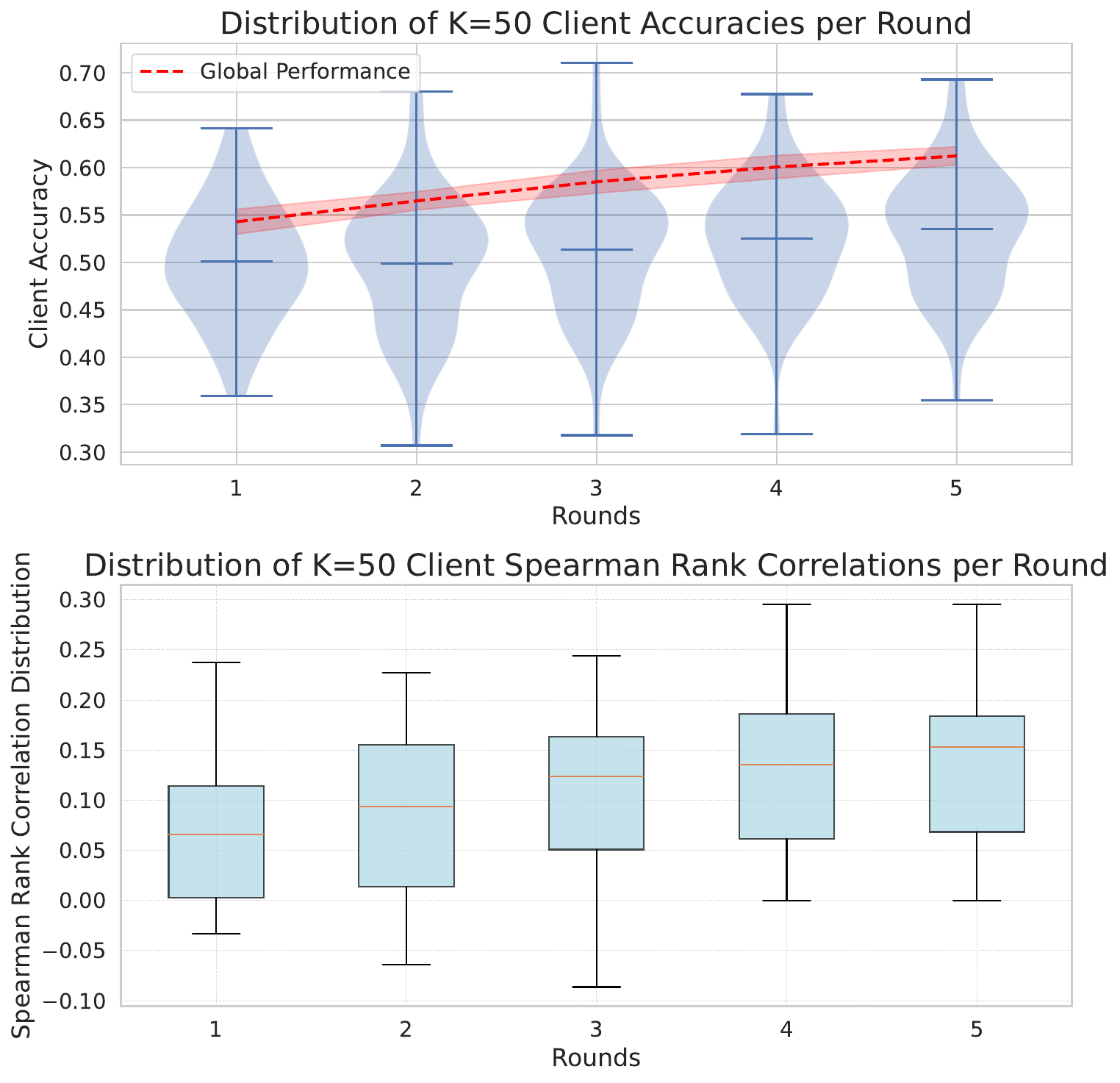}
    \caption{Distribution of client accuracies and Spearman rank correlations per round for the MovieLens task with $K=50$ clients.}
    \label{fig:movielens-distribution-k50}
    \Description{Violin and box plots showing the distribution of client accuracies and Spearman rank correlations for the MovieLens task with $K=50$ clients.}
\end{figure}

\subsection{Sentiment-Controlled Movie Review Generation on IMDb}
\subsubsection{Experimental Setup}
In this task, we simulated a scenario where multiple movie review platforms collaborate to fine-tune a language model for sentiment-controlled text generation without sharing their proprietary data or user feedback. Each client represents a distinct platform with its own collection of movie reviews, introducing natural data heterogeneity.

We utilized the IMDb dataset~\cite{maas2011learning}, which contains 50,000 movie reviews labeled as positive or negative. 
For our experiments, we partitioned the dataset among  $K=5$ clients, with each client receiving a unique, non-overlapping subset of the data. Specifically, we divided the dataset such that each client was allocated approximately 10,000 reviews, ensuring that clients possess disjoint training datasets. This setup reflects real-world scenarios where different platforms have access to different datasets and cannot share data due to privacy concerns.

\subsubsection{Implementation Details}\label{appendix:subsec:IMDb_implementation_details}

We employed a GPT-2 language model~\cite{radford2019language} for text generation, fine-tuned using the Proximal Policy Optimization (PPO) algorithm~\cite{schulman2017proximal-PPO} within the TRL (Transformer Reinforcement Learning) library~\cite{vonwerra2022trl}. 
We preprocessed the dataset by filtering out reviews shorter than 200 characters and limiting the total number of samples to manage computational resources. The reviews were then tokenized for input to the model.

Key components:
\begin{itemize}
\item Model: GPT-2 (125M parameters) with an additional value head for PPO training~\cite{schulman2017proximal-PPO}, resulting in a total of about 137M parameters
\item Training: PPO with batch size 16, learning rate 1e-5
\item Generation settings: top-k=0, top-p=1.0, temperature=1.0
\item Federated Rounds: 5
\item The RLHF approach is closely following the implementation provided by the TRL tutorial~\cite{vonwerra2022trl} on \texttt{gpt2-sentiment} task notebook.
\end{itemize}

Each client conducted local RLHF training by integrating intrinsic rewards and simulated human feedback. The intrinsic reward was derived from the negative log-likelihood of the generated text, promoting fluency and coherence. Clients performed 3 local epochs of training per federated round, using the Adam optimizer with a learning rate of $1 \times 10^{-5}$.
Global model aggregation was performed using FedAvg~\cite{mcmahan2017communication}, and the federated training spanned 5 communication rounds. 

To ensure fair evaluation and prevent data leakage between training and testing phases, we split each client's allocated data into training and evaluation sets. Specifically, 80\% of each client's data was used for training, and the remaining 20\% formed the evaluation dataset. The rewards and losses reported are calculated on this separate evaluation dataset that is not used for training.

\subsubsection{Human Feedback Simulation}

Human feedback was simulated using a sentiment analysis model (DistilBERT~\cite{sanh2019distilbert} fine-tuned on IMDb) implemented locally on each client. The reward function combined the sentiment score of the generated text and the language model's log probability, weighted by a client-specific parameter $\lambda_k \in [0,1]$, reflecting individual preferences for positivity. 
The combined reward $R_k$ for Client $k$ was calculated as:
\[
R_k = \lambda_k \cdot R_{\text{sentiment}} + (1-\lambda_{k}) \cdot R^0_{k}
\]
where $R_{\text{sentiment}}$ is the sentiment alignment reward and $R^0_k$ is the intrinsic fluency reward.
This formulation aligns with the reward shaping in Equation~\eqref{eq:reward-shaping}, allowing clients to personalize the importance of sentiment alignment.

\subsubsection{Results and Analysis}\label{appendix:subsec:movielens-results}
We report the results using a single random seed (42) to maintain consistency across experiments.

\paragraph{Global and Client-Specific Performance Over Rounds}
Figure~\ref{fig:imdb-performance-app} illustrates the global and client-specific average rewards and losses over the communication rounds. The global average reward, represented by the dashed line, shows a steady improvement from approximately 0.51 to 0.68 across the five rounds, indicating effective aggregation of client updates. The corresponding loss curves show a rapid initial decrease followed by a gradual stabilization, with Client 0 exhibiting the most dramatic reduction from about 0.25 to near 0.

\begin{figure}
\centering
\includegraphics[width=0.8\linewidth, height=0.8\linewidth]{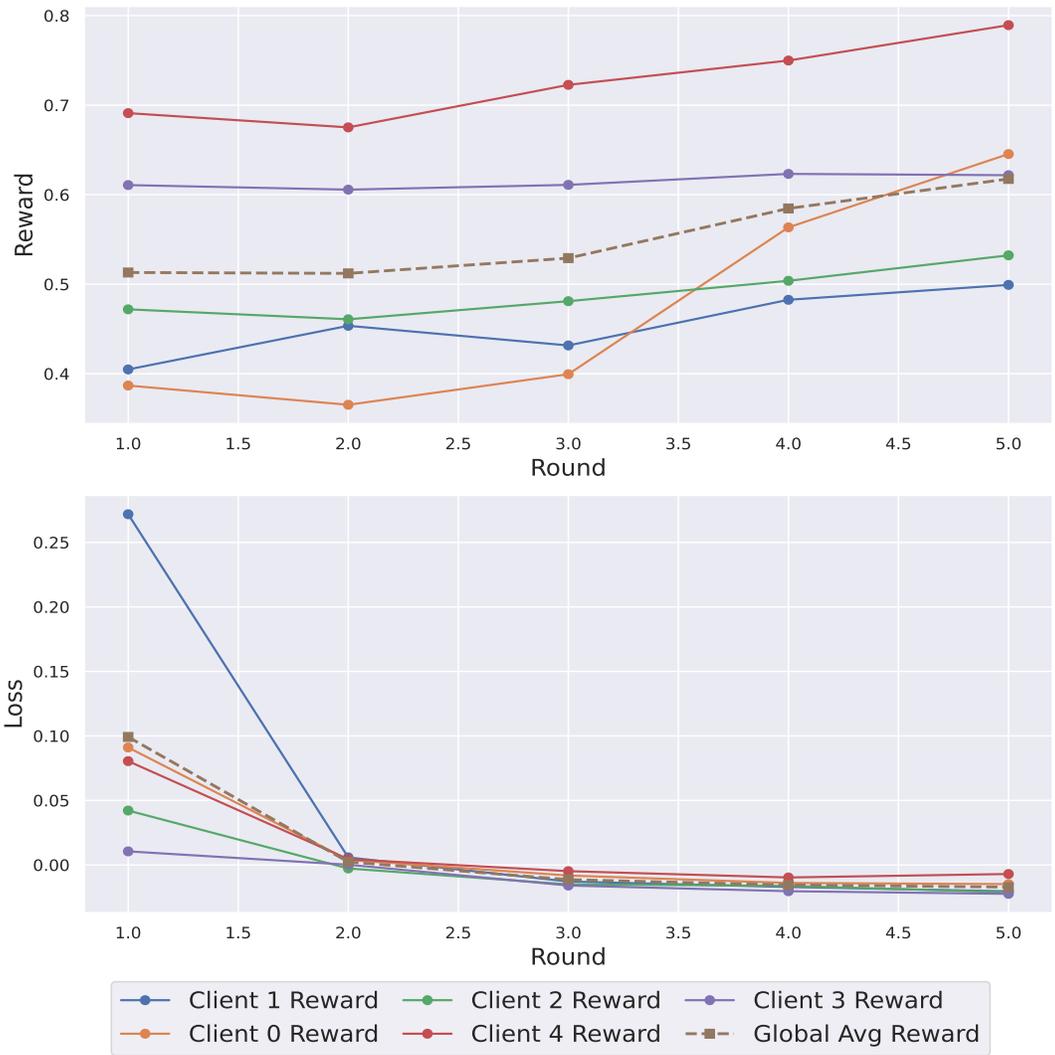}
\caption{Global and client-specific performance over communication rounds in the IMDb task.}
\Description{Global and client-specific performance over communication rounds in the IMDb task.}
\label{fig:imdb-performance-app}
\end{figure}

\begin{figure}
\centering
\includegraphics[width=.8\linewidth, height=0.8\linewidth]{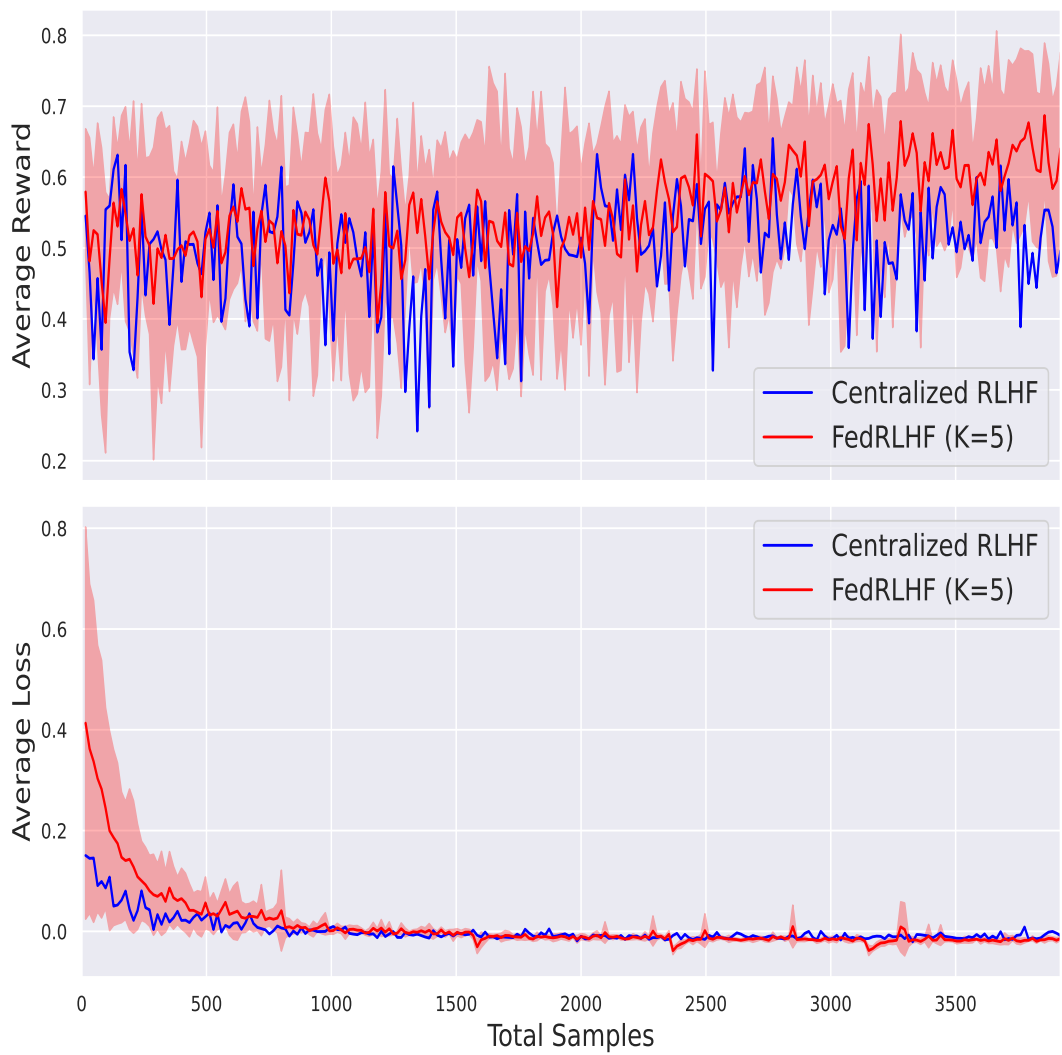}
\caption{Comparison of Average Rewards and Losses between Centralized RLHF and FedRLHF ($K=5$) over the total number of samples in the IMDb task.}
\Description{Comparison of Average Rewards and Losses between Centralized RLHF and FedRLHF ($K=5$) over the total number of samples in the IMDb task.}
\label{fig:imdb-performance_over_samples-app}
\end{figure}

\paragraph{Comparison with Centralized RLHF}
To further assess the performance of FedRLHF, we compared it with a centralized RLHF baseline trained on the aggregated data from all clients. Figure~\ref{fig:imdb-performance_over_samples-app} presents the average rewards and losses between the centralized RLHF and FedRLHF ($K=5$) over the total number of samples.
The rewards comparison reveals that while the centralized model initially achieves slightly higher rewards, FedRLHF quickly catches up and even surpasses the centralized model's performance in later stages. This is evident from the FedRLHF reward curve (in red) consistently lying above the centralized RLHF curve (in blue) after approximately 1500 samples.
This improvement arises from FedRLHF's ability to leverage diverse client data and regular model aggregation, which enhance generalization and reduce overfitting compared to the centralized approach.
The loss comparison shows that both approaches achieve similar loss reduction.
This result corroborates the sample complexity analysis in Theorem~\ref{theorem:sample-complexity}, indicating that FedRLHF can match or even exceed centralized performance while preserving privacy and distributing computation.


\paragraph{Personalization-Performance Trade-off}
To analyze the personalization-performance trade-off in FedRLHF, we conducted a detailed evaluation of how clients' personalized objectives affected their individual rewards over the training rounds. For each client, we randomly sampled 30 queries from their evaluation dataset at the beginning of training and kept these queries fixed throughout all rounds. In each communication round, we supplied these 30 queries to the client's model, recorded the generated responses, and calculated the corresponding intrinsic rewards ($R_{\text{intrinsic}}$), sentiment rewards ($R_{\text{sentiment}}$), and combined rewards ($R_k$).

Each client was assigned a different personalization weight $\lambda_k$, which controlled the emphasis on sentiment alignment versus intrinsic fluency. Specifically, the values of $\lambda_k$ were set to 0.1, 0.3, 0.5, 0.7, and 0.9 for clients 0, 1, 2, 3, and 4, respectively. This range of values allowed us to explore different levels of personalization, from minimal sentiment emphasis ($\lambda_k = 0.1$) to high sentiment emphasis ($\lambda_k = 0.9$).
Figure~\ref{fig:visualization_sentiment_int_rewards} presents the trends of intrinsic rewards ($R_{\text{intrinsic}}$), sentiment rewards ($R_{\text{sentiment}}$), and combined rewards ($R_k$) for each client over the communication rounds. The results reveal distinct patterns based on the personalization weights. For example, Client 0 ($\lambda_0 = 0.1$) prioritizes intrinsic rewards, exhibiting stable intrinsic and combined rewards while sentiment rewards fluctuate. In contrast, Client 2 ($\lambda_2 = 0.5$) shows a steady improvement in sentiment rewards while maintaining stable intrinsic rewards, reflecting an equal emphasis on both objectives. Client 4 ($\lambda_4 = 0.9$) demonstrates the most significant increase in sentiment rewards, accompanied by a noticeable decrease in intrinsic rewards, heavily emphasizing sentiment alignment. This improvement arises from FedRLHF's ability to leverage diverse client data and regular model aggregation, which enhance generalization and reduce overfitting compared to the centralized approach. 
The trends observed in the rewards over rounds indicate that the clients' models are effectively adapting to their personalized objectives. This adaptation aligns with our theoretical analysis (Theorem~\ref{theorem:personalization_performance_tradeoff}), where increasing the human feedback weight $\lambda$ enhances personalization (as indicated by improved personalized rewards) but may impact the global performance.

\begin{figure}[t]
    \centering
    \includegraphics[width=0.8\linewidth]{imgs/IMDB/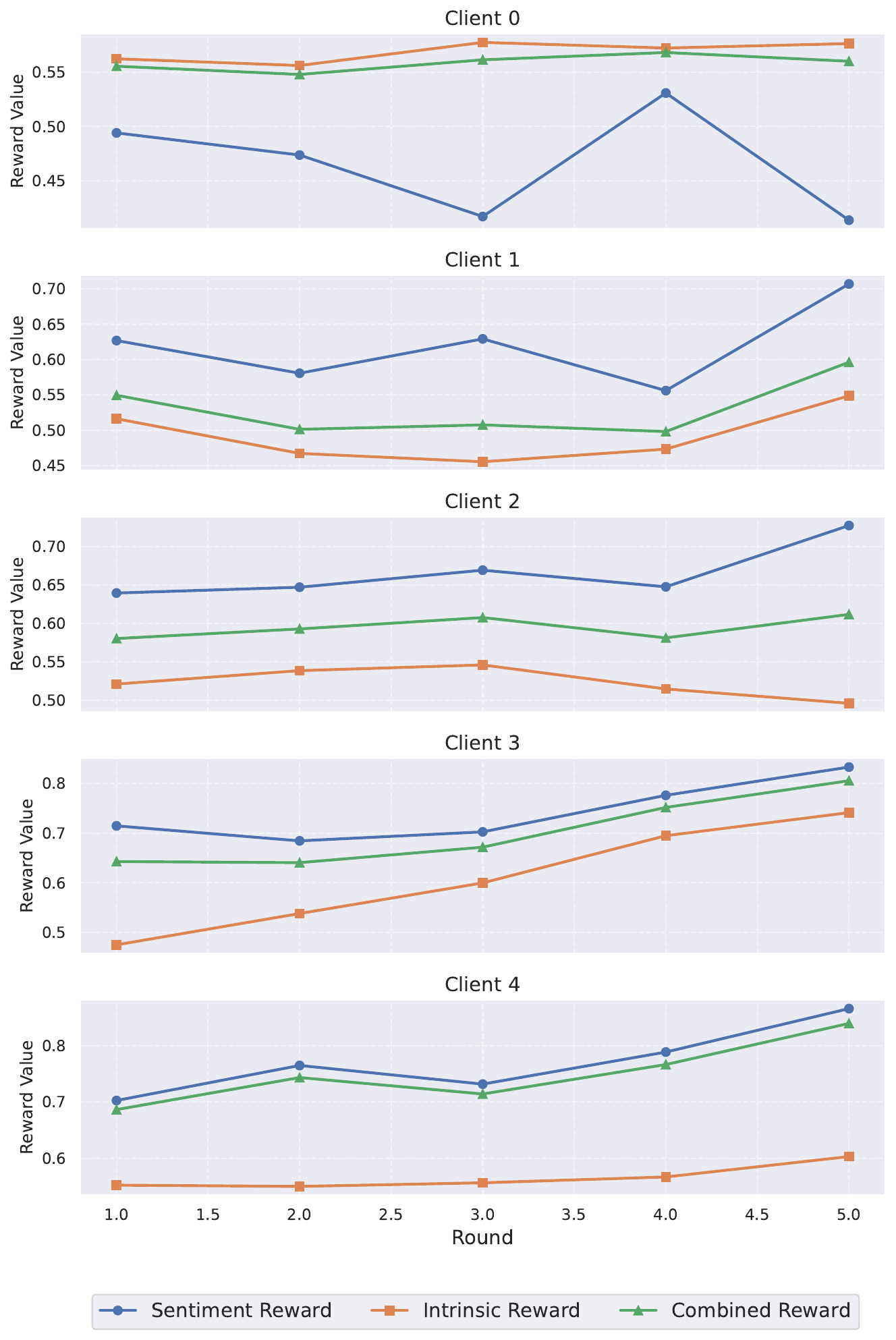}
    \caption{Trends of intrinsic rewards, sentiment rewards, and combined rewards over communication rounds for each client in the IMDb task. Each subplot corresponds to one client, illustrating personalization effects due to varying $\lambda_k$ values.}
        \Description{Trends of intrinsic rewards, sentiment rewards, and combined rewards over communication rounds for each client in the IMDb task. Each subplot corresponds to one client, illustrating personalization effects due to varying $\lambda_k$ values.}
    \label{fig:visualization_sentiment_int_rewards-app}
\end{figure}

\end{document}